\documentclass[11pt]{article}

\usepackage{fullpage}
\usepackage[square, numbers]{natbib}

\usepackage{epsf}
\usepackage{graphics}
\usepackage[normalem]{ulem}
\usepackage{subcaption}
\usepackage{psfrag}
\usepackage{comment}
\usepackage{color}
\usepackage{pgfplots}
\usepackage{pgfplotstable}
\usepackage{multirow}

\usepackage{mathtools}
\usepackage{amsfonts}
\usepackage{amsthm}
\usepackage{amsmath}
\usepackage{amssymb}
\usepackage{scalerel}
\usepackage{nccmath}

\usepackage{algorithm}
\usepackage[noend]{algpseudocode}


\newcommand{\PlainC}{\ensuremath{\mathcal{C}}}

\newcommand{\trace}{\ensuremath{\operatorname{tr}}}
\newcommand{\real}{\ensuremath{\mathbb{R}}}

\newcommand{\blue}[1]{\textcolor{blue}{#1}}
\newcommand{\apcomment}[1]{{\bf{{\blue{{AP --- #1}}}}}}

\newcommand{\defn}{:\,=}

\newtheorem{theorem}{Theorem}
\newtheorem{lemma}{Lemma}

\newtheorem{corollary}{Corollary}
\newtheorem{proposition}{Proposition}
\newtheorem{remark}{Remark}
\newtheorem{definition}{Definition}
\newcommand{\1}{\ensuremath{{\sf (i)}}}
\newcommand{\2}{\ensuremath{{\sf (ii)}}}

\makeatletter
\long\def\@makecaption#1#2{
        \vskip 0.8ex
        \setbox\@tempboxa\hbox{\small {\bf #1:} #2}
        \parindent 1.5em  
        \dimen0=\hsize
        \advance\dimen0 by -3em
        \ifdim \wd\@tempboxa >\dimen0
                \hbox to \hsize{
                        \parindent 0em
                        \hfil
                        \parbox{\dimen0}{\def\baselinestretch{0.96}\small
                                {\bf #1.} #2
                                }
                        \hfil}
        \else \hbox to \hsize{\hfil \box\@tempboxa \hfil}
        \fi
        }
\makeatother

\DeclareMathOperator{\Exs}{\ensuremath{\mathbb{E}}}
\newcommand{\EE}{\ensuremath{\mathbb{E}}} 

\newcommand{\x}[1][]{\ensuremath{x_{#1}}} 
\newcommand{\xstar}{x^{*}} 
\newcommand{\f}{\ensuremath{f}} 
\newcommand{\g}{\ensuremath{g}} 
\newcommand{\domg}{\ensuremath{\mathcal{X}}} 
\newcommand{\F}{\ensuremath{F}} 
\newcommand{\bracket}[1]{\ensuremath{\left[ #1 \right]}}
\newcommand{\smoothness}{\ensuremath{\phi}} 
\newcommand{\curvature}{\ensuremath{\theta}} 
\newcommand{\enorm}[1]{\ensuremath{\|#1 \|_2}}
\newcommand{\grad}{\ensuremath{\nabla}}

\newcommand{\fr}{\ensuremath{f_{r}}} 

\newcommand{\dims}{\ensuremath{d}} 
\newcommand{\Lipcon}{\ensuremath{\lambda}} 
\newcommand{\gradbound}{\ensuremath{G}} 
\newcommand{\Plconst}{\ensuremath{\mu}} 
\newcommand{\sigmafield}{\ensuremath{\mathcal{F}}}
\newcommand{\CondExs}[1]{\ensuremath{\mathbb{E}^{#1}}}
\newcommand{\accuracy}{\ensuremath{\epsilon}}
\newcommand{\y}[1][]{\ensuremath{y}}
\newcommand{\unifdir}{\ensuremath{u}}
\newcommand{\Unif}{\ensuremath{\operatorname{Unif}}}

\newcommand{\unifdirball}{\ensuremath{v}} 

\newcommand{\Ball}{\ensuremath{\mathbb{B}}}
\newcommand{\Shell}{\ensuremath{\mathbb{S}}}

\newcommand{\estgrad}[1]{\ensuremath{g^{#1}}}
\newcommand{\lqrdims}{\ensuremath{D}}
\newcommand{\lqr}{\ensuremath{{\sf lqr}}}
\newcommand{\lqrPl}{\ensuremath{\mu_{{\sf lqr}}}}

\newcommand{\braces}[1]{\ensuremath{\left\lbrace #1 \right\rbrace}}

\newcommand{\order}[1]{\ensuremath{\mathcal{O} \left( #1 \right)}}
\newcommand{\ordertil}[1]{\ensuremath{\widetilde{\mathcal{O}} \left( #1 \right) }}
\newcommand{\Prob}{\ensuremath{\mathbb{P}}}

\newcommand{\totIter}{\ensuremath{T}}

\newcommand{\abss}[1]{\ensuremath{\left| \right|}}

\newcommand{\costmat}{M}
\newcommand{\abk}{G}

\newcommand{\inprod}[2]{\ensuremath{\langle #1 , \, #2 \rangle}}

\newcommand{\vecnorm}[1]{\left\lVert#1\right\rVert_2}

\newcommand{\shelldirection}{u}

\newcommand{\domx}{\mathcal{X}}
\newcommand{\factorvar}{\xi} 

\newcommand{\euclidnorm}[1]{\left\lVert#1\right\rVert_2}

\newcommand{\matsnorm}[2]{|\!|\!| #1 | \! | \!|_{{#2}}}
\newcommand{\opnorm}[1]{\matsnorm{#1}{2}}
\newcommand{\fronorm}[1]{\matsnorm{#1}{\mbox{\tiny{F}}}}

\newcommand{\smoothingradius}{r}

\newcommand{\gradest}{\mathrm{g}} 

\newcommand{\factorvardistribution}{\mathcal{D}}
\newcommand{\Dspace}{\mathcal{D}}

\newcommand{\step}{\eta}
\newcommand{\var}{{\sf var}}


\newcommand{\Kprime}{\ensuremath{K'}}

\newcommand{\X}{X}

\newcommand{\boundedset}[1]{\ensuremath{\mathcal{G}^{#1}}}

\newcommand{\diff}[1]{\ensuremath{\Delta_{#1}}}

\newcommand{\A}{A}
\newcommand{\B}{B}
\newcommand{\Q}{Q}
\newcommand{\R}{R}

\newcommand{\ct}{c}

\newcommand{\statedim}{m} 
\newcommand{\controldim}{k} 
\newcommand{\C}[1]{\PlainC(#1)} 
\newcommand{\Cfunc}{\PlainC} 
\newcommand{\Cinit}{\PlainC_{{\sf init}, \gamma}} 
\newcommand{\Cdyn}{\PlainC_{{\sf dyn}, \gamma}} 
\newcommand{\Cstate}[2]{\mathcal{C}(#1, #2)} 
\newcommand{\K}[1][]{K_{#1}} 
\newcommand{\Ktwo}{K'} 
\newcommand{\Kstar}{K^*} 
\newcommand{\Kzero}{K_0} 
\newcommand{\state}[1][]{s_{#1}} 
\newcommand{\control}[1][]{a_{#1}} 
\newcommand{\error}[1][]{z_{#1}}
\newcommand{\initialstate}{s_0} 
\newcommand{\statedistributionlqr}{\mathcal{D}_0} 
\newcommand{\noisedistributionlqr}{\mathcal{D}_{{\sf add}}} 

\newcommand{\radiusone}[1]{\beta_{\scaleto{#1}{4pt}}} 
\newcommand{\radiustwo}[1]{\zeta_{\scaleto{#1}{4pt}}} 

\makeatletter
\newtheorem*{rep@theorem}{\rep@title}
\newcommand{\newreptheorem}[2]{%
\newenvironment{rep#1}[1]{%
 \def\rep@title{#2 \ref{##1}}%
 \begin{rep@theorem}}%
 {\end{rep@theorem}}}
\makeatother
\newreptheorem{lemma}{Lemma}


\newcommand{\smoothnessK}[1]{\phi_{\scaleto{#1}{4pt}}} 
\newcommand{\localK}[1]{\rho_{\scaleto{#1}{4pt}}} 

\newcommand{\lipconK}[1]{\lambda_{\scaleto{#1}{4pt}}} 
\newcommand{\smoothK}{\phi}
\newcommand{\globalSmooth}[1]{\ensuremath{\phi_{#1}}}

\newcommand{\identity}{I}
\newcommand{\noisebound}{C_{\statedim}}


\newcommand{\Kpolyzero}{c_{\K[0]}}
\newcommand{\Kpolyone}{c_{\K[1]}}
\newcommand{\Kpolytwo}{c_{\K[2]}}
\newcommand{\Kpolythree}{c_{\K[3]}}
\newcommand{\Kpolyfour}{c_{\K[4]}}
\newcommand{\Kpolyfive}{c_{\K[5]}}
\newcommand{\Kpolysix}{c_{\K[6]}}
\newcommand{\Kpolyseven}{c_{\K[7]}}
\newcommand{\Kpolyeight}{c_{\K[8]}}
\newcommand{\Kpolynine}{c_{\K[9]}}

\newcommand{\singularminQ}{\sigma_{min}(\Q)}
\newcommand{\singularminR}{\sigma_{min}(\R)}
\newcommand{\singularmininitial}{\sigma_{min}(\Sigma_0)}
\newcommand{\sigmaK}{\Sigma_{\K}}
\newcommand{\sigmaKtwo}{\Sigma_{\Ktwo}}

\newcommand{\EK}{E_{\K}}
\newcommand{\EKtwo}{E_{\Ktwo}}
\newcommand{\PK}{P_{\K}}
\newcommand{\PKtwo}{P_{\Ktwo}}
\newcommand{\TK}{\mathcal{T}_{\K}}
\newcommand{\TKtwo}{\mathcal{T}_{\Ktwo}}
\newcommand{\FK}{\mathcal{F}_{\K}}
\newcommand{\FKtwo}{\mathcal{F}_{\Ktwo}}

\newcommand{\discount}{\gamma} 
\newcommand{\z}{z} 


\newcommand{\Espace}{\mathcal{E}}

\begin{document}

\begin{center}
  {\LARGE{{\bf{Derivative-Free Methods for Policy Optimization:
          \\ Guarantees for Linear Quadratic Systems}}}}

\vspace*{.2in}

{\large{
\begin{tabular}{ccc}
Dhruv Malik$^\dagger$ & Ashwin Pananjady$^\dagger$ & Kush Bhatia$^\dagger$ \\
Koulik Khamaru$^\ddagger$ & Peter L. Bartlett$^{\dagger, \ddagger}$ & Martin J. Wainwright$^{\dagger, \ddagger, \ast}$
\end{tabular}
}}
\vspace*{.2in}

\begin{tabular}{c}
Department of Electrical Engineering and Computer Sciences, UC
Berkeley$^\dagger$ \\ Department of Statistics, UC Berkeley$^\ddagger$
\\ Voleon Group$^\ast$
\end{tabular}

\vspace*{.2in}

\today

\end{center}
\vspace*{.2in}
\begin{abstract}
 We study derivative-free methods for policy optimization over the
 class of linear policies. We focus on characterizing the convergence
 rate of these methods when applied to linear-quadratic systems, and
 study various settings of driving noise and reward feedback. We show
 that these methods provably converge to within any pre-specified
 tolerance of the optimal policy with a number of zero-order
 evaluations that is an explicit polynomial of the error tolerance,
 dimension, and curvature properties of the problem. Our analysis
 reveals some interesting differences between the settings of additive
 driving noise and random initialization, as well as the settings of
 one-point and two-point reward feedback. Our theory is corroborated
 by extensive simulations of derivative-free methods on these
 systems. Along the way, we derive convergence rates for stochastic
 zero-order optimization algorithms when applied to a certain class of
 non-convex problems.
\end{abstract}

\section{Introduction}

Recent years have witnessed a number of successes in applying modern
reinforcement learning (RL) methods to many fields, including
robotics~\cite{tobin17, levine15} and competitive
gaming~\cite{silver16, mnih15}. Impressively, most of these successes
have been achieved by using general-purpose RL methods that are
applicable to a host of problems. Prevalent general-purpose RL
approaches can be broadly categorized into: (a) \emph{model-based
  approaches}~\cite{deisenroth2012,gu2016,lillicrap2015}, in which an
agent attempts to learn a model for the dynamics by observing the
evolution of its state sequence; and (b) \emph{model-free approaches},
including DQN~\cite{mnih15}, and TRPO~\cite{schulman15}, in which the
agent attempts to learn an optimal policy directly, by observing
rewards from the environment. While model-free approaches typically
require more samples to learn a policy of equivalent accuracy, they
are naturally more robust to model mis-specification.

A literature that is closely related to model-free RL is that of
\emph{zero-order or derivative-free} methods for stochastic
optimization; see the book by~\cite{spall03} for an
overview. Here, the goal is to optimize an unknown function from noisy
observations of its values at judiciously chosen points. While most
analytical results in this space apply to convex optimization, many of
the procedures themselves rely on moving along randomized
approximations to the directional derivatives of the function being
optimized, and are thus applicable even to non-convex problems. In
the particular context of RL, variants of derivative-free methods,
including TRPO~\cite{schulman15}, PSNG~\cite{rajeswaran17} and
evolutionary strategies~\cite{salimans2017}, have been used to solve
highly non-convex optimization problems and have been shown to achieve
state-of-the-art performance on various RL tasks.

While many RL algorithms are easy to describe and run in practice,
certain theoretical aspects of their behavior remain mysterious, even
when they are applied in relatively simple settings. One such setting
is the most canonical problem in continuous control, that of
controlling a linear dynamical system with quadratic costs, a problem
known as the linear quadratic regulator (LQR).  A recent line of
work~\cite{abbasi2011, abbasi2018, abeille2017, cohen18, dean17,
  dean18, faradonbeh17, kakade18, tu18, tu182} has sought to delineate
the properties and limitations of various RL algorithms in application
to LQR problems. An appealing property of LQR systems from an
analytical point of view is that the optimal policy is guaranteed to
be linear in the states~\cite{kalman60,Whittle96}.  Thus, when the
system dynamics are known, as in classical control, the optimal policy
can be obtained by solving the discrete-time algebraic Ricatti
equation.

In contrast, methods in reinforcement learning target the case of
unknown dynamics, and seek to learn an optimal policy on the basis of
observations.  A basic form of model-free RL for linear quadratic
systems involves applying derivative-free methods in the space of
linear policies.  It can be used even when the only observations
possible are the costs from a set of rollouts, each referred to as a
sample\footnote{Such an offline setting with multiple, restarted
  rollouts should be contrasted with an online setting, in which the
  agent interacts continuously with the environment, and no hard
  resets are allowed. In contrast to the offline setting, the goal in
  the online setting is to control the system for all time steps while
  simultaneously learning better policies, and performance is usually
  measured in terms of regret.}, and when our goal is to obtain a
policy whose cost is at most $\epsilon$-suboptimal.  The sample
complexity of a given method refers to the number of samples, as a
function of the problem parameters and tolerance, required to meet a
given tolerance $\epsilon$.  With this context, we are led to the
following concrete question: \emph{What is the sample complexity of
  derivative-free methods for the linear quadratic regulator?}
This question underlies the analysis in this paper.  In particular, we
study a standard derivative-free algorithm in an offline setting and
derive explicit bounds on its sample complexity, carefully controlling
the dependence on not only the tolerance $\epsilon$, but also the
dimension and conditioning of the underlying problem.

Our analysis treats two distinct forms of randomness in the underlying
linear system.  In the first setting---more commonly assumed in
practice---the linear updates are driven by an additive noise
term~\cite{dean17}, whereas in the second setting, the initial state
is chosen randomly but the linear dynamics remain
deterministic~\cite{kakade18}.  We refer to these two settings,
respectively, as the \emph{additive noise setting}, and the
\emph{randomly initialized setting.}
We are now in a position to discuss related
work on the problem, and to state our contributions.


\paragraph{Related work:}

Quantitative gaps between model-based and model-free reinforcement
learning have been studied extensively in the setting of finite
state-action spaces~\cite{agrawal2017, dann2017, azar2017}, and
several interesting questions here still remain open.

For continuous state-action spaces and in the specific context of the
linear quadratic systems, classical system identification has been
model-based, with a particular focus on asymptotic results (e.g., see
the book by~\cite{ljung1987} as well as references
therein). Non-asymptotic guarantees for model-based control of linear
quadratic systems were first obtained by~\cite{fietcher97},
who studied the offline problem under additive noise and obtained
non-asymptotic rates for parameter identification using nominal
control procedures. In more recent work, Dean et al.~\cite{dean17}
proposed a robust alternative to nominal control, showing an improved
sample complexity as well as better-behaved policies. The online
setting for model-based control of linear quadratic systems has also
seen extensive study, with multiple algorithms known to achieve
sub-linear regret~\cite{dean18, abbasi2011, abeille2017, ibrahimi12, cohen19}.

In this paper, we study model-free control of these systems, a problem
that has seen some recent work in both the offline~\cite{kakade18} and
online~\cite{abbasi2018} settings.  Most directly relevant to our work
is the paper of Fazel et al.~\cite{kakade18}, who studied the offline
setting for the randomly initialized variant of the LQR, and showed
that a population version of gradient descent (and natural gradient descent), when run on the
non-convex LQR cost objective, converges to the global optimum. In
order to turn this into a derivative-free algorithm, they constructed
near-exact gradient estimates from reward samples and showed that the
sample complexity of such a procedure is bounded polynomially in the
parameters of the problem; however, the dependence on various
parameters is not made explicit in their analysis. We remark that Fazel et al.
also show polynomially bounded sample complexity for a zero order algorithm which builds near exact estimates of the \emph{natural} gradient,
although this requires access to a stronger oracle than the one assumed in this paper.

Also of particular relevance to our paper is the extensive literature
on zero-order optimization. Flaxman et al.~\cite{flax04} showed that
these methods can be analyzed for convex optimization by making an
explicit connection to function smoothing, and Agarwal et al.~\cite{agarwal10} improved some of these convergence rates. Results
are also available for strongly convex~\cite{jamrec12},
smooth~\cite{ghalan13} and convex~\cite{nesterov11, duchi15, wang17}
functions, with Shamir characterizing the
fundamental limits of many problems in this space~\cite{shamir12, shamir17}. Broadly speaking,
all of the methods in this literature can be seen as variants of
\emph{stochastic search}: they proceed by constructing estimates of
directional derivatives of the function from randomly chosen zero
order evaluations. In the regime where the function evaluations are
stochastic, different convergence rates are obtained based on whether
such a procedure uses a \emph{one-point estimate} that is obtained
from a single function evaluation~\cite{flax04}, or a \emph{$k$-point
  estimate}~\cite{agarwal10} for some $k \geq 2$. There has also been
some recent work on zero-order optimization of non-convex functions
satisfying certain smoothness properties that are motivated by
statistical estimation~\cite{wang18}.


\paragraph{Our contributions}

In this paper, we study both randomly initialized and additive-noise
linear quadratic systems in the offline setting through the lens of
derivative-free optimization.  We begin with a general result that
characterizes the convergence behavior of a canonical derivative-free
algorithm when applied to a general class of functions satisfying certain curvature conditions.
In particular, our main contribution is to establish
upper bounds on the sample complexity as a function of the dimension,
error tolerance, and curvature parameters of the problem instance. We then specialize
this result to a variety of LQR models. In
contrast to prior work, the rates that we provide are explicit, and
the algorithms that we analyze are standard and practical one-point
and two-point variants of the random search heuristic. Our results
reveal interesting dichotomies between the settings of one-point and
two-point feedback, as well as the models involving random
initialization and additive noise.  Our main contribution is stated in
the following informal theorem (to be stated more precisely in the sequel):

\paragraph{Main Theorem (informal).} \emph{With high probability,
  one can obtain an $\epsilon$-approximate solution to any linear
  quadratic system from observing the noisy costs of
  $\widetilde{\mathcal{O}}(1 / \epsilon^2)$ trajectories from the
  system, which can be further reduced to $\widetilde{\mathcal{O}}(1 /
  \epsilon)$ trajectories when pairs of costs are observed for each
  trajectory. }
\newline

In our theoretical statements, the multiplicative pre-factors are
explicit lower-order polynomials of the dimension of the state space,
and curvature properties of the cost function.  From a technical
standpoint, we build upon some known properties of the LQR cost
function established in past work on randomly initialized
systems~\cite{kakade18}, and establish de novo some analogous
properties for the additive noise setting. We also isolate and sharpen
some key properties that are essential to establishing sharp rates of
zero-order optimization; as an example, for the setting with
random-initialization and one-point reward feedback studied by Fazel et al.~\cite{kakade18}, establishing these properties allows us to
analyze a natural algorithm that improves\footnote{While the rates
established by Fazel et al.~\cite{kakade18} are not explicit, their
analysis is conservative and yields a bound of order $1/\epsilon^4$ up to logarithmic factors.
To be clear, the properties that we establish also enable us to provide a sharper analysis
of their algorithm; see Appendix~\ref{app:fazel} to follow.}
the dependence of the bound on the error tolerance $\epsilon$ from
at least $\order{1/\epsilon^4}$ to $\order{1/\epsilon^2}$.
Crucially, our analysis is complicated by the fact that we must ensure
that the iterates are confined to the region in which the linear
system is stable, and such stability considerations introduce
additional restrictions on the parameters used in our optimization
procedure.


\section{Background and problem set-up}
\label{sec:setup}

In this section, we discuss the background related to zero-order
optimization and the setup for the linear quadratic control problem.


\subsection{Optimization background}
\label{subsec:opt_back}

We first introduce some standard optimization related background and
assumptions, and make the zero-order setting precise.

\paragraph{Stochastic zero-order optimization:}

We consider optimization problems of the form
\begin{align}
\label{eqn:general_zero_order_prob}
\min_{\x \in \domx} \f(\x) & \defn \Exs_{\factorvar \sim
  \factorvardistribution}{\left[\F(\x, \factorvar)\right]},
\end{align}
where $\factorvar$ is a zero mean random variable\footnote{While the zero mean assumption on $\factorvar$ is not strictly necessary for generic optimization, the canonical (additive noise) LQR settings that we specialize our results to require noise to be zero mean. So we make this assumption at the outset for convenience.} that represents the noise in the problem, and the function $\f$ above can be non-convex in general with a possibly non-convex domain $\domx \subseteq \real^{\dims}$.

In particular, we consider stochastic zero-order optimization methods
with oracle access to noisy function evaluations.
We operate under two distinct oracle models. The first is the one-point setting,
in which the optimizer specifies a point $x \in \domx$, and an evaluation consists of an instantiation of the random variable $\F(\x, \factorvar)$. The second is the two-point extension of such a setting, in which the optimizer specifies a pair of points $(\x, y)$, then an instantiation of the random variable $\factorvar$ occurs, and the optimizer obtains the values $\F(\x, \factorvar)$ and $\F(y, \factorvar)$. Crucially, the function evaluations $\F(\x, \factorvar)$ and $\F(y, \factorvar)$ share the same noise, so the two-point oracle cannot be reduced to querying the one-point oracle twice (where sharing the same noise across multiple function evaluations cannot be guaranteed). Such two-point settings are known in the optimization literature to enjoy reduced variance of gradient estimates~\cite{agarwal10, duchi15, shamir17}.



\paragraph{Function properties:} Before defining the optimization problems considered in this paper by instantiating the pair of functions $(f, F)$, let us
precisely define some standard properties that make repeated appearances in the sequel.

\begin{definition}[Locally Lipschitz Gradients]
\label{def:lipschitz_gradient_lqr}
A continuously differentiable function $\g$ with domain $\domg$ is said to have
$(\smoothK, \beta)$ locally Lipschitz gradients at $x \in \domg$ if
\begin{align}
\euclidnorm{\nabla \g{(\y)} - \nabla \g{(\x)}} \leq \smoothK
\euclidnorm{\y - \x} \qquad \mbox{for all $y \in \domg$ with
  $\|x-y\|_2 \leq \beta$.}
  \end{align}
\end{definition}
We often say that $\g$ has locally Lipschitz gradients, by which we
mean for each $x \in \domg$ the function $\g$ has locally Lipschitz gradients,
albeit with constants $(\smoothK, \beta)$ that may depend on $x$.
This property guarantees that the function $\g$ has at most quadratic
growth locally around every point, but the shape of the quadratic and
the radius of the ball within which such an approximation holds may depend on the
point itself.

\begin{definition}[Locally Lipschitz Function]
\label{def:lipschitz_cost}
A continuously differentiable function $\g$ with domain $\domg$ is said to be $(\Lipcon,
\zeta)$ locally Lipschitz at $x \in \domg$ if
\begin{align}
  \label{EqnLocalLipschitz}
  \vert \g{(\y)} - \g{(\x)} \vert \leq \Lipcon \euclidnorm{\y - \x}
  \qquad \mbox{for all $\y \in \domg$ such that $\|x-y\|_2 \leq
    \zeta$.}
\end{align}
\end{definition}
As before, when we say that the function $\g$ is locally Lipschitz, we mean that
this condition holds for all $x \in \domg$, albeit with parameters
$(\Lipcon, \zeta)$ that may depend on $x$.  The local Lipschitz
property guarantees that the function $\g$ grows no faster than
linearly in a local
neighborhood around each point. 

\begin{definition}[PL Condition]
\label{def:pl_inequality}
A continuously differentiable function $\g$ with domain $\domg$ and a finite global minimum $\g^*$
is said to be $\Plconst$-PL if it satisfies the Polyak-\L ojasiewicz (PL) inequality with
constant $\Plconst > 0$, given by
\begin{align}
\label{EqnPLInequality}
\vecnorm{\grad \g{(\x)}}^2 & \geq \Plconst \; \big( \g{(\x)} - \g^*
\big) \qquad \mbox{for all $x \in \domg$.}
\end{align}
\end{definition}
The PL condition, first introduced by Polyak~\cite{polyak63} and Lojasiewicz~\cite{loj63}, is a relaxation of the notion of strong
convexity. It allows for a certain degree of non-convexity in the
function $\g$.  Note that inequality~\eqref{EqnPLInequality}
yields an upper bound on the gap to optimality that is proportional to
the squared norm of the gradient.  Thus, while the condition admits
non-convex functions, it requires that all first-order stationary
points also be global minimizers. Karimi et al.~\cite{schmidt16}
recently showed that many standard first-order convex optimization algorithms
retain their attractive convergence guarantees over this more general
class.


\subsection{Optimal control background}
\label{sec:LQR_background}

We now turn to some basic background on optimal control and
reinforcement learning.  An optimal control problem is specified by a
dynamics model and a real-valued cost function. The dynamics model
consists of a sequence of functions $\braces{h_t(\state[t],
  \control[t], z_t)}_{t \geq 0}$, which models how the state vector
$\state[t]$ transitions to the next state $\state[t+1]$ when a control
input $\control[t]$ is applied at a timestep $t$. The term $z_t$
captures the noise disturbance in the system. The cost function
$\ct_t(\state[t], \control[t])$ specifies the cost incurred by taking
an action $\control[t]$ in the state $\state[t]$. The goal of the
control problem is to find a sequence of control inputs
$\braces{\control[t]}_{t \geq 0}$, dependent on the history of states
$\mathcal{H}_t \defn (\state[0], \state[1], \ldots, \state[t-1])$, so as to solve the optimization
problem
\begin{align}
\min \mathbb{E} \left[\sum_{t \geq 0} \gamma^t \ct_t(\state[t],
  \control[t]) \right] \qquad \text{s.t. } \state[t+1] =
h_t(\state[t], \control[t], z_t),
\end{align}
where the expectation above is with respect to the noise in the
transition dynamics as well as any randomness in the selection of
control inputs, and $0 < \gamma \le 1$ represents a multiplicative discount factor.
A mapping from histories $\mathcal{H}_t$ to controls $\control[t]$
is called a \emph{policy}, and the
above minimization is effectively over the space of policies.

There is a distinction to be made here between the classical
fully-observed setting in stochastic control in which the dynamics
model $h_t$ is known---in this case, such a problem may be solved (at
least in principle) by the Bellman
recursion~\cite{bertsekas2005}, and the system
identification setting in which the dynamics are completely
unknown. We operate in the latter setting, and accommodate the further
assumption that even the cost function $c_t$ is unknown.

In this paper, we assume that the state space is $\statedim$-dimensional, and the control space is $\controldim$-dimensional, so that $\state[t] \in \real^\statedim$ and $\control[t] \in \real^\controldim$. The linear quadratic system specifies particular forms for the dynamics and costs,
respectively. In particular, the cost function obeys the quadratic form
\begin{align*}
c_t = \state[t]^{\top} \Q \state[t] + \control[t]^{\top} \R \control[t]
\end{align*}
for a pair of positive definite matrices $(\Q, \R)$ of the appropriate dimensions.
Additionally, the dynamics model is linear in both states and controls, and takes the form
\begin{align*}
\state[t+1] = \A \state[t] + \B \control[t] + \error[t],
\end{align*}
where $\A$ and $\B$ are transition matrices of the appropriate dimension, and the random variable $\error[t]$ models additive noise in the problem which is drawn i.i.d. for each $t$ from a distribution $\noisedistributionlqr$. We call this setting the \emph{noisy dynamics} model.

We also consider the \emph{randomly initialized} linear quadratic system without additive noise, in which the state transitions obey
\begin{align*}
\state[t+1] &= A \state[t] + B \control[t],
\end{align*}
and the randomness in the problem comes from choosing the initial state $\state[0]$ at random from a distribution $\statedistributionlqr$.

Throughout this paper, we assume\footnote{It is important to note that
  our assumption of identity covariance of the noise distributions can
  be made without loss of generality: for a problem with known, non-identity
  (but full-dimensional) covariance $\Sigma$, we may reparametrize the
  problem with the modifications
\begin{align*}
A' = \Sigma^{-1/2} A \Sigma^{1/2}, \quad B' = \Sigma^{-1/2} B, \text{
  and } s'_t = \Sigma^{-1/2} \state[t] \text{ for all } t \geq 0,
\end{align*}
in which case the new problem with states $s'_t$ and the pair of
transition matrices $(A', B')$ is driven by noise satisfying the
assumptions~\eqref{eq:propnoise}.}  that for both distributions
$\Dspace \in \{ \noisedistributionlqr, \statedistributionlqr\}$ and
for a random variable $v \sim \Dspace$, we have
\begin{align}
  \label{eq:propnoise}
\EE [v] = 0, \quad \EE [vv^\top] = I, \text{ and } \| v \|_2^2 \leq
\noisebound \; \; \text{ a.s.}
\end{align}
While we assume boundedness of the distribution for convenience, our
results extend straightforwardly to sub-Gaussian distributions by
appealing to high-probability bounds for quadratic forms of
sub-Gaussian random vectors~\cite{HanWri71,Wri73,HsuKakZha12} and
standard truncation arguments. The final iteration complexity also
changes by at most poly-logarithmic factors in the problem parameters;
for brevity, we operate under the assumptions~\eqref{eq:propnoise}
throughout the paper and omit standard calculations for sub-Gaussian
distributions.

By classical results in optimal control
theory~\cite{kalman60,Whittle96}, the optimal controller for the LQR
problem under both of these noise models takes the linear form
$\control[t] = -\Kstar \state[t]$, for some matrix $\Kstar \in
\mathbb{R}^{\controldim \times \statedim}$.  When the system matrices
are known, the controller matrix $\Kstar$ can be obtained by solving
the discrete-time algebraic Riccati equation~\cite{riccati1700}.

With the knowledge that the optimal policy is an invariant linear
transformation of the state, one can re-parametrize the LQR objective
in terms of the linear class of policies, and focus on optimization
procedures that only search over the class of linear policies.
Below, we define such a parametrization under the noise models introduced above,
and make explicit the connections to the stochastic optimization model~\eqref{eqn:general_zero_order_prob}.

\paragraph{Random initialization}

For each choice of the (random) initial state $\state[0]$, let $\Cinit(\K ; \initialstate)$ denote the cost of executing a linear policy $\K$ from initial state $\initialstate$, so that
\begin{align}
\Cinit(\K; \initialstate) \defn \sum_{t=0}^{\infty} \gamma^t \bigg( \state[t]^{\top} \Q \state[t] +
      \control[t]^{\top} \R \control[t] \bigg),
\end{align}
where we have the noiseless dynamics $\state[t+1] = A\state[t] + B
\control[t]$ and $\control[t] = -\K \state[t]$ for each $t \geq 0$,
and $0 < \gamma \le 1$.  While $\Cinit(\K; \initialstate)$ is a random
variable that denotes some notion of sample cost, our goal is to
minimize the population cost
\begin{align}
\Cinit(\K) \defn \mathbb{E}_{\initialstate \sim \statedistributionlqr}
    [\Cinit(\K; \initialstate)]
    \label{eqn:cost_fun_random_init}
\end{align}
over choices of the policy $\K$.

\paragraph{Noisy dynamics}

In this case, the noise in the problem is given by the sequence of
random variables $\mathcal{Z} = \{\error[t] \}_{t \geq 0}$, and for
every instantiation of $\mathcal{Z} \sim
\noisedistributionlqr^{\mathbb{N}} \defn (\noisedistributionlqr
\otimes \noisedistributionlqr \otimes \ldots)$, our sample cost is
given by the function
\begin{align*}
\Cdyn(\K; \mathcal{Z}) \defn \sum_{t = 0}^{\infty} \gamma^t \bigg(
\state[t]^{\top} \Q \state[t] + \control[t]^{\top} \R \control[t]
\bigg),
\end{align*}
where we have $\state[0] = 0$, random state evolution
$\state[t+1] = A \state[t] + B \control[t] + \error[t]$ and
action $\control[t] = -\K \state[t]$ for each $t \geq 0$,
and $0 < \gamma < 1$. In contrast to the
random initialization setting, the discount factor in this setting obeys
$\gamma < 1$, since this is required to keep the costs finite.

Once again, we are interested in optimizing the population cost
function
\begin{align}
\Cdyn(\K) \defn \mathbb{E}_{\mathcal{Z} \sim
  \noisedistributionlqr^{\mathbb{N}}} [\Cdyn(\K; \mathcal{Z})].
\label{eqn:cost_fun_noisy_dyn}
\end{align}
From here on, the word policy will always refer to a linear policy, and
since we work with this natural parametrization of the cost function,
our problem has effective dimension $\lqrdims = \statedim \cdot
\controldim$, given by the product of state and control dimensions.

A policy $\K$ is said to stabilize the system $(\A,\B)$ if we have
$\rho_\text{spec}(\A-\B\K) < 1$, where $\rho_\text{spec}(\cdot)$
denotes the spectral radius of a matrix.  We assume throughout
that the LQR system to be optimized is controllable, meaning that there
exists some policy $\K$ satisfying the condition
$\rho_\text{spec}(\A-\B\K) < 1$. Furthermore, we assume access to~\emph{some}
policy $\K[0]$ with finite cost; this is a mild assumption that is can be satisfied in a variety of ways; see the related
literature by Fazel et al.~\cite{kakade18} and Dean et al.~\cite{dean18}. We use such a policy $\K[0]$ as an initialization
for our algorithms.

\subsubsection{Some properties of the LQR cost function} \label{sec:lqrprop}

Let us turn to establishing properties of the pair of population cost
functions $\left(\Cinit(\K), \Cdyn(\K) \right)$ and their respective sample variants $\left(\Cinit(\K, \initialstate), \Cdyn(\K; \mathcal{Z}) \right)$, in order to place the problem within the context of
optimization.

First, it is important to note that both the population cost functions
$\left(\Cinit(\K), \Cdyn(\K) \right)$ are non-convex.
 In particular, for any unstable policy, the state sequence blows up
 and the costs becomes infinite, but as noted by Fazel et al.~\cite{kakade18}, the stabilizing region \mbox{$\{\K:
 \rho_\text{spec}(\A-\B\K) < 1 \}$} is non-convex, thereby rendering
 our optimization problems non-convex.

In spite of this non-convexity, the cost functions exhibit many
properties that make them amenable to fast stochastic optimization
methods. Variants of the following properties were first established
by Fazel et al.~\cite{kakade18} for the random initialization cost
function $\Cinit$. The following Lemma \ref{lem:lipschitz_cost_lqr}
and Lemma \ref{lem:lipschitz_gradient_lqr} require certain refinements
of their claims, which we prove in
Appendix~\ref{sec:randint_appendix}. Lemma~\ref{lem:pl_inequality}
follows directly from Lemma 3 in Fazel et al.~\cite{kakade18}. Lemma~\ref{lem:noisy-random} relates the
population cost of the noisy dynamics model to that of the random
initialization model in a pointwise sense.


\begin{lemma}[LQR Cost is locally Lipschitz]
\label{lem:lipschitz_cost_lqr}
Given any linear policy $\K$, there exist positive scalars
$(\lipconK{\K}, \widetilde{\lipconK{\K}}, \radiustwo{\K})$, depending
on the function value $\Cinit(\K)$, such that for all policies $\Ktwo$
satisfying ${\fronorm{\Ktwo - \K} \leq \radiustwo{\K}}$, and for all
initial states $s_0$, we have
\begin{subequations}
\begin{align}
\vert \Cinit(\Ktwo) - \Cinit(\K) \vert
&\leq \lipconK{\K} \fronorm{\Ktwo - \K}, \text{ and} \\
\vert \Cinit(\Ktwo; \initialstate) - \Cinit(\K; \initialstate) \vert
&\leq \widetilde{\lipconK{\K}} \fronorm{\Ktwo - \K}.
\end{align}
\end{subequations}
%
\end{lemma}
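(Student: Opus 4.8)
The plan is to work directly with the closed-form expression for the LQR cost in terms of the policy's value matrix $P_{\K}$. For a stabilizing policy $\K$, the sample cost from initial state $\initialstate$ is $\Cinit(\K; \initialstate) = \initialstate^\top P_{\K} \initialstate$, where $P_{\K}$ solves the Lyapunov equation $P_{\K} = \Q + \K^\top \R \K + \gamma (\A - \B\K)^\top P_{\K} (\A - \B\K)$, and the population cost is $\Cinit(\K) = \trace(P_{\K} \Sigma_0)$ where $\Sigma_0 = \EE[\initialstate \initialstate^\top] = I$ under our normalization (and more generally $\trace(P_\K \Sigma_0)$; but $\Sigma_0=I$ here). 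Thus both bounds reduce to controlling $\fronorm{P_{\Ktwo} - P_{\K}}$ (or its operator norm) in terms of $\fronorm{\Ktwo - \K}$: for the population bound, $|\Cinit(\Ktwo) - \Cinit(\K)| = |\trace((P_{\Ktwo} - P_{\K})\Sigma_0)| \le \|\Sigma_0\|_* \, \opnorm{P_{\Ktwo} - P_{\K}}$, and for the sample bound, $|\Cinit(\Ktwo; \initialstate) - \Cinit(\K; \initialstate)| = |\initialstate^\top (P_{\Ktwo} - P_{\K}) \initialstate| \le \|\initialstate\|_2^2 \, \opnorm{P_{\Ktwo} - P_{\K}} \le \noisebound \, \opnorm{P_{\Ktwo} - P_{\K}}$ using the a.s.\ bound~\eqref{eq:propnoise}. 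So the two displayed inequalities follow with $\lipconK{\K}$ proportional to the local Lipschitz constant of $\K \mapsto P_{\K}$ and $\widetilde{\lipconK{\K}}$ the same up to the factor $\noisebound$.

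The core step is therefore a local perturbation bound: there exist $\radiustwo{\K} > 0$ and $L_{\K} < \infty$, both depending only on $\Cinit(\K)$ (and the fixed system parameters), such that $\fronorm{\Ktwo - \K} \le \radiustwo{\K}$ implies $\opnorm{P_{\Ktwo} - P_{\K}} \le L_{\K} \fronorm{\Ktwo - \K}$. I would obtain this by the standard route: first, a finite cost bound $\Cinit(\K) \le \beta$ yields $\opnorm{P_{\K}} \le \beta / \sigma_{\min}(\Sigma_0)$, which in turn controls $\opnorm{\K}$ (via $\K^\top \R \K \preceq P_{\K}$, so $\opnorm{\K} \le \sqrt{\opnorm{P_{\K}}/\sigma_{\min}(\R)}$) and the closed-loop contraction, in the sense that $\sum_{t\ge0}\gamma^t \opnorm{(\A-\B\K)^t}^2$ is bounded by a polynomial of $\beta$ and the system parameters. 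Second, write the Lyapunov operators $\mathcal{T}_{\K}(X) = \gamma(\A-\B\K)^\top X (\A-\B\K)$ and express $P_{\Ktwo} - P_{\K}$ as the solution of a perturbed fixed-point equation; the difference $P_{\Ktwo} - P_{\K} = (\id - \mathcal{T}_{\Ktwo})^{-1}\big[(\Q + \Ktwo^\top\R\Ktwo) - (\Q + \K^\top\R\K) + (\mathcal{T}_{\Ktwo} - \mathcal{T}_{\K})(P_{\K})\big]$, and every term on the right is $O(\fronorm{\Ktwo-\K})$ once $\Ktwo$ is close enough to $\K$ that its cost (hence its contraction constant) is comparable to that of $\K$ — e.g.\ $\Cinit(\Ktwo) \le 2\Cinit(\K)$ — which is exactly what the radius $\radiustwo{\K}$ is chosen to guarantee. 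This is essentially Lemma 16 / the perturbation analysis in Fazel et al.~\cite{kakade18}, and the "certain refinements" alluded to in the text amount to tracking the dependence carefully and making the radius and constants explicit functions of $\Cinit(\K)$ rather than of $\opnorm{\K}$ or $\opnorm{P_{\K}}$.

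The main obstacle is the continuity/boundedness bootstrap: a priori, moving from $\K$ to $\Ktwo$ could leave the stabilizing region entirely, so the bound $\opnorm{P_{\Ktwo}}<\infty$ is not free. The fix is to first establish a weaker statement — that $\K \mapsto \Cinit(\K)$ is continuous on the stabilizing region and that the sublevel set $\{\Ktwo : \Cinit(\Ktwo) \le 2\Cinit(\K)\}$ contains an $F$-norm ball around $\K$ of some radius $\radiustwo{\K}$ depending on $\Cinit(\K)$ — and only then run the perturbation bound inside that ball, where all relevant quantities (contraction constants, $\opnorm{P_{\Ktwo}}$, $\opnorm{\Ktwo}$) are uniformly controlled. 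A secondary bookkeeping point is ensuring the final constants $(\lipconK{\K}, \widetilde{\lipconK{\K}}, \radiustwo{\K})$ depend on $\K$ only through $\Cinit(\K)$ and the fixed problem data $(\A, \B, \Q, \R, \gamma, \noisebound, \sigma_{\min}(\Sigma_0))$, which is what makes the lemma usable downstream; this requires expressing each intermediate bound monotonically in $\Cinit(\K)$, which is routine but must be done with care.
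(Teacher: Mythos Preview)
Your proposal is correct and follows essentially the same route as the paper: reduce both inequalities to an operator-norm bound on $P_{\Ktwo}-P_{\K}$, then control that difference via a Lyapunov/operator perturbation argument with constants expressed through $\Cinit(\K)$.

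The one place you and the paper diverge is in handling the stability-of-$\Ktwo$ issue you flag as the ``main obstacle.'' You propose a two-stage bootstrap (first show continuity and trap $\Ktwo$ in a sublevel set $\{\Cinit \le 2\Cinit(\K)\}$, then run the perturbation bound there). The paper avoids this detour: it writes $P_{\Ktwo}-P_{\K}=(\TKtwo-\TK)(Q+\Ktwo^\top R\Ktwo)-\TK(\K^\top R\K-\Ktwo^\top R\Ktwo)$ and invokes Fazel et al.'s Lemma~20, which bounds $\opnorm{\TKtwo-\TK}$ directly under the single condition $\opnorm{\TK}\,\opnorm{\FKtwo-\FK}\le 1/2$. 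The radius $\radiustwo{\K}$ is chosen precisely to enforce this condition, and the lemma simultaneously certifies that $\TKtwo$ is well-defined (hence $\Ktwo$ is stabilizing) and delivers the Lipschitz bound --- no separate continuity argument is needed. Your approach would also work, but carries an extra step that the off-the-shelf perturbation lemma renders unnecessary.
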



\begin{lemma}[LQR Cost has locally Lipschitz Gradients]
\label{lem:lipschitz_gradient_lqr}
Given any linear policy $\K$, there exist positive scalars $(\radiusone{\K},
\smoothnessK{\K})$, depending on the function value $\Cinit(\K)$, such
that for all policies $\Ktwo$ satisfying $\fronorm{\Ktwo - \K} \leq \radiusone{\K}$, we have
\begin{align}
\fronorm{\grad \Cinit(\Ktwo) - \grad \Cinit(\K)} \leq \smoothnessK{\K}
\fronorm{\Ktwo - \K}.
\end{align}
\end{lemma}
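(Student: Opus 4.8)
The plan is to combine the classical closed forms for the LQR cost and its policy gradient with a quantitative perturbation analysis of the underlying discrete Lyapunov equations, throughout keeping the radius $\radiusone{\K}$ and the constant $\smoothnessK{\K}$ explicit functions of the value $\Cinit(\K)$ and the fixed problem data $(\A,\B,\Q,\R,\discount,\Sigma_0)$, where $\Sigma_0 \defn \EE[\initialstate\initialstate^\top]$. Recall that for any stabilizing $\K$ one has $\Cinit(\K) = \trace(\PK\Sigma_0)$, where $\PK$ solves the discounted Lyapunov equation $\PK = \Q + \K^\top\R\K + \discount(\A-\B\K)^\top\PK(\A-\B\K)$, and that the gradient factorizes as $\grad\Cinit(\K) = 2\EK\sigmaK$, with $\EK \defn (\R + \discount\B^\top\PK\B)\K - \discount\B^\top\PK\A$ and $\sigmaK$ the discounted state covariance along the trajectory generated by $\K$. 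Since $\grad\Cinit(\Ktwo) - \grad\Cinit(\K) = 2(\EKtwo - \EK)\sigmaKtwo + 2\EK(\sigmaKtwo - \sigmaK)$, it suffices to bound $\fronorm{\EKtwo - \EK}$ and $\opnorm{\sigmaKtwo - \sigmaK}$ linearly in $\fronorm{\Ktwo - \K}$, uniformly over a suitable ball about $\K$.

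\emph{Step 1 (stability buffer and a priori bounds).} I would first let $\radiusone{\K}$ be small enough that every $\Ktwo$ with $\fronorm{\Ktwo - \K} \le \radiusone{\K}$ is stabilizing and satisfies $\Cinit(\Ktwo) \le 2\Cinit(\K)$; Lemma~\ref{lem:lipschitz_cost_lqr} already furnishes such a radius $\radiustwo{\K}$ (shrinking it as an explicit function of $\lipconK{\K}$ and $\Cinit(\K)$ if needed), and it stays inside the open stabilizing set. On this ball the elementary inequalities $\Cinit(\Ktwo) \ge \singularminQ\,\trace(\sigmaKtwo) \ge \singularminQ\,\opnorm{\sigmaKtwo}$ and $\Cinit(\Ktwo) \ge \singularmininitial\,\opnorm{\PKtwo}$ give the uniform bounds $\opnorm{\sigmaKtwo} \le 2\Cinit(\K)/\singularminQ$ and $\opnorm{\PKtwo} \le 2\Cinit(\K)/\singularmininitial$, which in turn control $\opnorm{\A-\B\Ktwo}$, $\opnorm{\EKtwo}$, and --- crucially --- the contraction factor of the closed-loop map, all purely in terms of $\Cinit(\K)$ and the fixed data.

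\emph{Step 2 (perturbing the Lyapunov solutions and concluding).} Subtracting the Lyapunov equations for $\PKtwo$ and $\PK$ exhibits $\PKtwo - \PK$ as the fixed point of the (bounded-inverse) Lyapunov operator associated with the stable closed loop $\A-\B\K$, applied to a source term collecting $\Ktwo^\top\R\Ktwo - \K^\top\R\K$ and $(\A-\B\Ktwo)^\top\PKtwo(\A-\B\Ktwo) - (\A-\B\K)^\top\PKtwo(\A-\B\K)$; using the Step 1 bounds, this source is $\order{\fronorm{\Ktwo - \K}}$, so iterating the fixed-point relation and choosing $\radiusone{\K}$ small enough to close the resulting self-bounding inequality gives $\opnorm{\PKtwo - \PK} \le c_1\fronorm{\Ktwo - \K}$ with $c_1$ an explicit function of $\Cinit(\K)$. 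The identical template applied to the Lyapunov equation defining $\sigmaK$ yields $\opnorm{\sigmaKtwo - \sigmaK} \le c_2\fronorm{\Ktwo - \K}$. Then $\EKtwo - \EK = (\R + \discount\B^\top\PKtwo\B)\Ktwo - (\R + \discount\B^\top\PK\B)\K - \discount\B^\top(\PKtwo - \PK)\A$ is $\order{\fronorm{\Ktwo - \K}}$ by Step 1 and the bound on $\PKtwo - \PK$, and substituting everything into the gradient-difference identity together with $\fronorm{XY} \le \opnorm{X}\fronorm{Y}$ reads off $\smoothnessK{\K}$ as an explicit lower-order polynomial in $\Cinit(\K)$, $1/\singularminQ$, $1/\singularmininitial$, $\opnorm{\A}$, $\opnorm{\B}$, $\opnorm{\Q}$ and $\opnorm{\R}$.

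\emph{Main obstacle.} No step is conceptually deep --- the content is quantitative perturbation theory of discrete Lyapunov equations, following Fazel et al.~\cite{kakade18} --- but the real work is the bookkeeping that makes $\radiusone{\K}$ and $\smoothnessK{\K}$ \emph{explicit} rather than merely finite. Since $\PK$ and $\sigmaK$ are only implicitly defined, each perturbation estimate is a self-bounding inequality with $\opnorm{\PKtwo - \PK}$ (resp.\ $\opnorm{\sigmaKtwo - \sigmaK}$) appearing on both sides; one must choose $\radiusone{\K}$ quantitatively small --- as a function of $\Cinit(\K)$ and the data --- so that the coefficient of the left-hand occurrence can be absorbed, and one must check that the contraction factor of every perturbed closed loop in the ball, hence the operator norm of the inverse Lyapunov operator, stays uniformly bounded in terms of $\Cinit(\K)$ alone. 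Arranging all of these dependencies to be lower-order polynomials --- so that they can be fed into the zero-order sample-complexity analysis downstream --- is the crux.
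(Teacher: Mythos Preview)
Your proposal is correct and follows essentially the same route as the paper: the identical gradient decomposition $(\EKtwo - \EK)\sigmaKtwo + \EK(\sigmaKtwo - \sigmaK)$, Lyapunov-perturbation bounds on $\PKtwo-\PK$ and $\sigmaKtwo-\sigmaK$, and the radius $\radiusone{\K}$ inherited from Lemma~\ref{lem:lipschitz_cost_lqr}. The only cosmetic difference is packaging---the paper invokes Fazel et al.'s Lemmas~16 and~20 for the $\sigmaK$- and $\TK$-perturbation and their Lemmas~13, 17, 22 for the a~priori bounds, whereas you propose to rederive these via the self-bounding argument on the subtracted Lyapunov equations; both yield the same explicit polynomial dependence on $\Cinit(\K)$.
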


\begin{lemma}[LQR satisfies PL]
\label{lem:pl_inequality}
There exists a universal constant $\lqrPl > 0$ such that for all stable policies $\K$, we have
\begin{align*}
  \fronorm{\grad \Cinit(\K)}^2 \geq \lqrPl \big( \Cinit(\K) -
  \Cinit(\Kstar) \big),
\end{align*}
where $\Kstar$ is the global minimum of the cost function $\Cinit$.
\end{lemma}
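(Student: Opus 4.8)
The plan is to follow the gradient domination argument of Fazel et al.~\cite{kakade18} (their Lemma~3), for which our normalization $\EE[\initialstate\initialstate^\top]=I$ serves only to simplify the bookkeeping. As a preliminary reduction, the change of variables $(\A,\B)\mapsto(\sqrt{\discount}\,\A,\sqrt{\discount}\,\B)$ together with $\state[t]\mapsto\discount^{t/2}\state[t]$ turns the $\discount$-discounted objective into an undiscounted one (with $\Kstar$, $\R$, and the matrix $\sigmaKstar$ below transforming consistently), so for notational simplicity I would present the case $\discount=1$; a policy $\K$ is stable precisely when $\rho_\text{spec}(\A-\B\K)<1$, and every object below is finite for such $\K$. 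First I would record the standard closed forms: letting $\PK\succeq0$ solve $\PK=\Q+\K^\top\R\K+(\A-\B\K)^\top\PK(\A-\B\K)$, setting $\EK\defn(\R+\B^\top\PK\B)\K-\B^\top\PK\A$, and writing $\sigmaK\defn\EE\sum_{t\ge0}\state[t]\state[t]^\top$ for the state correlation along the $\K$-rollout from $\state[0]\sim\statedistributionlqr$, one has $\Cinit(\K)=\EE[\state[0]^\top\PK\state[0]]=\trace(\PK)$ and $\grad\Cinit(\K)=2\EK\sigmaK$. Since $\sigmaK=I+\sum_{t\ge1}\EE[\state[t]\state[t]^\top]\succeq I$, we get $\sigma_{\min}(\sigmaK)\ge1$, which is the sole place the identity-covariance normalization is used.

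The heart of the argument is the cost-difference inequality. I would telescope the quadratic value function $\state\mapsto\state^\top\PK\state$ along the trajectory generated by the \emph{optimal} policy $\Kstar$ from $\state[0]\sim\statedistributionlqr$, and substitute the Lyapunov equation for $\PK$, to arrive at the identity (with $\Delta\defn\Kstar-\K$ and $M\defn\R+\B^\top\PK\B\succeq\R\succ0$)
\begin{align*}
\Cinit(\Kstar)-\Cinit(\K)=\trace\big(\sigmaKstar(\Delta^\top M\Delta+2\,\Delta^\top\EK)\big).
\end{align*}
Completing the square via $\Delta^\top M\Delta+2\Delta^\top\EK=(\Delta+M^{-1}\EK)^\top M(\Delta+M^{-1}\EK)-\EK^\top M^{-1}\EK$ and using $\sigmaKstar\succeq0$ to discard the resulting PSD square term then gives the gradient domination bound
\begin{align*}
\Cinit(\K)-\Cinit(\Kstar)\le\trace(\sigmaKstar\,\EK^\top M^{-1}\EK)\le\frac{\opnorm{\sigmaKstar}}{\singularminR}\,\fronorm{\EK}^2.
\end{align*}

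Finally I would chain these estimates:
\begin{align*}
\fronorm{\grad\Cinit(\K)}^2=4\,\fronorm{\EK\sigmaK}^2\ge4\,\sigma_{\min}(\sigmaK)^2\,\fronorm{\EK}^2\ge4\,\fronorm{\EK}^2\ge\frac{4\,\singularminR}{\opnorm{\sigmaKstar}}\big(\Cinit(\K)-\Cinit(\Kstar)\big),
\end{align*}
so the conclusion holds with $\lqrPl\defn4\,\singularminR/\opnorm{\sigmaKstar}$, a quantity depending only on the problem instance and not on $\K$. The only step with genuine content is the cost-difference inequality: carrying out the telescoping/advantage decomposition for the quadratic cost and substituting the Lyapunov equation so that the very matrix $\EK$ that controls the gradient reappears, and then checking that completing the square yields a legitimate upper bound (which is where $\R\succ0$ and $\sigmaKstar\succeq0$ enter). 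The closed-form computations of the first step and the final chaining are routine matrix algebra, and the reduction to $\discount=1$ is a change of variables.
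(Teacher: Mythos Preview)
Your proposal is correct and is precisely the approach the paper adopts: it defers Lemma~\ref{lem:pl_inequality} to Lemma~3 of Fazel et al.~\cite{kakade18}, which is exactly the cost-difference/complete-the-square argument you outline, specialized here via the normalization $\EE[\initialstate\initialstate^\top]=I$ so that $\sigma_{\min}(\sigmaK)\ge1$. Your constant $\lqrPl=4\,\singularminR/\opnorm{\sigmaKstar}$ agrees (up to the factor~$4$ from the gradient convention) with the paper's identification of the PL constant in Appendix~\ref{sec:randint_appendix}.
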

For the sake of exposition, we have stated these properties without
specifying the various smoothness and PL constants. Appendix~\ref{sec:randint_appendix} collects explicit expressions for the
tuple $(\lipconK{\K}, \widetilde{\lipconK{\K}}, \smoothnessK{\K},
\radiusone{\K}, \radiustwo{\K}, \lqrPl)$ as functions of the
parameters of the LQR problem.

\begin{lemma}[Equivalence of population costs up to scaling]
\label{lem:noisy-random}
For all policies $\K$, we have
\begin{align*}
\Cdyn(\K) = \frac{\gamma}{1 - \gamma} \Cinit(\K).
\end{align*}
\end{lemma}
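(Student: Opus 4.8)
The plan is to unroll both recursions explicitly and reduce each population cost to the \emph{same} series in powers of the closed-loop matrix. Fix a policy $\K$ and write $L = \A - \B\K$ for the closed-loop transition matrix and $M = \Q + \K^\top \R \K$ for the effective state-cost matrix; since $\Q, \R \succ 0$ we have $M \succeq 0$, so that every per-stage cost can be written as $\state[t]^\top M \state[t] \geq 0$. This nonnegativity is the structural fact I would lean on throughout: it lets me interchange expectations with infinite sums, and reorder double sums, by Tonelli's theorem, with the understanding that the claimed identity holds in the extended reals (both sides equal to $+\infty$ simultaneously when $\K$ fails to stabilize $(\A,\B)$).

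First I would handle the randomly initialized cost. The noiseless recursion gives $\state[t] = L^t \initialstate$, so that $\state[t]^\top M \state[t] = \trace\big((L^t)^\top M L^t\, \initialstate \initialstate^\top\big)$; taking expectations over $\initialstate \sim \statedistributionlqr$ and using $\EE[\initialstate \initialstate^\top] = I$ from~\eqref{eq:propnoise} collapses this to $\trace\big((L^t)^\top M L^t\big)$, whence
\begin{align*}
\Cinit(\K) = \sum_{t=0}^{\infty} \gamma^t\, \trace\big((L^t)^\top M L^t\big).
\end{align*}

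Next I would handle the noisy-dynamics cost. Starting from $\state[0] = 0$, the recursion unrolls to $\state[t] = \sum_{j=0}^{t-1} L^{\,t-1-j} \error[j]$; since the $\error[j]$ are i.i.d.\ zero-mean with covariance $I$, all cross terms vanish in expectation and $\EE[\state[t]^\top M \state[t]] = \sum_{\ell=0}^{t-1} \trace\big((L^\ell)^\top M L^\ell\big)$ after the reindexing $\ell = t-1-j$. Substituting into $\Cdyn(\K) = \sum_{t \geq 0} \gamma^t\, \EE[\state[t]^\top M \state[t]]$ and swapping the order of the (nonnegative) double sum --- for each fixed $\ell$ the outer index $t$ ranges over $t \geq \ell + 1$ --- I obtain, using the geometric sum $\sum_{t \geq \ell+1} \gamma^t = \gamma^{\ell+1}/(1-\gamma)$, valid since $\gamma < 1$ in this model,
\begin{align*}
\Cdyn(\K) = \frac{\gamma}{1-\gamma} \sum_{\ell=0}^{\infty} \gamma^{\ell}\, \trace\big((L^\ell)^\top M L^\ell\big) = \frac{\gamma}{1-\gamma}\, \Cinit(\K),
\end{align*}
which is the desired identity.

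I do not expect a genuine obstacle here: the two ingredients that demand a little care are justifying the interchange of expectation with infinite summation and the reordering of the double sum (both immediate from $M \succeq 0$ via Tonelli), and keeping the index bookkeeping straight in the reindexing $\ell = t-1-j$ and the subsequent swap of summation order. Everything else is a routine computation, and the identity then holds for every policy $\K$.
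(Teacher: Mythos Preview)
Your proof is correct and follows essentially the same approach as the paper: expand the state explicitly in terms of the noise, use independence and $\EE[zz^\top]=I$ to reduce the expected cost to a double sum of traces $\gamma^t\,\trace\big((L^\ell)^\top M L^\ell\big)$, and swap the order of summation. The paper organizes the same computation via a block matrix $X_{K,t}^{i,j}$ (introduced there because it is reused in a later lemma), but the underlying manipulation and the Tonelli justification you highlight are identical.
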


Lemma~\ref{lem:noisy-random} thus shows that, at least in a population
sense, both the noisy dynamics and random initialization models behave
identically when driven by noise with the same first two moments.
Hence, the properties posited by Lemmas~\ref{lem:lipschitz_cost_lqr},
~\ref{lem:lipschitz_gradient_lqr}, and~\ref{lem:pl_inequality} for the
\emph{population} cost function $\Cinit(\K)$ also carry over to the function
$\Cdyn(\K)$.  In particular, the cost function $\Cdyn(\K)$ is
also $\left(\frac{\gamma}{1 - \gamma} \smoothnessK{\K}, \radiusone{\K}
\right)$ locally smooth and $\left(\frac{\gamma}{1 - \gamma}
\lipconK{\K}, \radiustwo{\K} \right)$ locally Lipschitz, and also
globally $\frac{\gamma}{1 - \gamma}\lqrPl$-PL. We stress that although
the population costs are very similar, the observed costs in the
two cases are quite different.

\subsubsection{Stochastic zero-order oracle in LQR}

Let us now describe the form of observations that we make in the LQR system.  Recall
that we are operating in the derivative-free setting, where we have
access to only (noisy) function evaluations and not the problem parameters; in
particular, the tuple $(\A,\B,\Q,\R)$ that parametrizes the LQR
problem is unknown.

Our observations consist of the noisy function evaluations $\Cinit(\K;
\initialstate)$ or $\Cdyn(\K; \mathcal{Z})$. We consider both the
one-point and two-point settings in the former case. In the one-point
setting for the randomly initialized model, a \emph{query} of the
function at the point $\K$ obtains the noisy function value
$\Cinit(\K; \initialstate)$ for an initial state $\initialstate$ drawn
at random from the distribution $\statedistributionlqr$. In the
two-point setting, a query of the function at the points
$(\K,\Kprime)$ obtains the pair of noisy function values $\Cinit(\K;
\initialstate)$ and $\Cinit(\Kprime; \initialstate)$ for an initial
state $\initialstate$ drawn at random; this setting has an immediate
operational interpretation as running two policies with the same
random initialization. The one-point query model is defined
analogously for the noisy dynamics cost $\Cdyn$.


A few points regarding our query model merit discussion. First, note
that in the context of the control objective, each query produces a
noisy sample of the long term trajectory cost, and so our sample
complexity is measured in terms of the number of \emph{rollouts}, or
trajectories. Such an assumption is reasonable since the ``true"
sample complexity that also takes into account the length of the
trajectories is only larger by a small factor---the truncated, finite
cost converges exponentially quickly to the infinite sum for stable
policies.\footnote{To elaborate further on this point, note that the length of the rollout required to obtain a $\delta$-accurate cost evaluation for policy $K$ will depend on both $\delta$ as well as the eigen-structure of the matrix $A - BK$. However, assuming that this matrix has maximum eigenvalue $\rho < 1$ (which is a common assumption in the related literature~\cite{dean18,cohen19}), the dependence on $\delta$ is quite mild: we only require a rollout of length $\order{\log (1 / \delta)}$, with the constant pre-factor depending on $\rho$ (or equivalently, on $\mathcal{C}(K_0)$. Since we are interested in obtaining $\epsilon$-approximations to the optimal policy, it suffices to obtain $\mathsf{poly}(\epsilon)$-approximate cost evaluations per trajectory to avoid a blow-up of the bias in our estimates (see, e.g.,~\cite{kakade18}), and this only adds another factor $\log (1 / \epsilon)$ to our sample complexity when measured in terms of the number of iterations. To avoid tracking these additional factors, we work with the offline setting defined above.} The offline nature of the query model
also assumed access to restarts of the system, which can be obtained in a simulation environment. Second, we note that while the one-point query model was studied by Fazel et al.~\cite{kakade18} for the random initialization model---albeit with
sub-optimal guarantees---we also study a two-point query model, which is known to lead to
faster convergence rates in zero-order stochastic optimization~\cite{duchi15}.

Finally, note that our setting of the problem---in which we are only given access to (noisy) evaluations of the cost of the policy and not to the state sequence---intentionally precludes the use of procedures that rely on observations of the state sequence. This setting allows us to distill the difficulties of truly `model-free' control, since it prevents any possibility of constructing a dynamics model from our observations; the latter is, loosely speaking, the guiding principle of model-based control. This is not to suggest that practical applications of learning-based LQR control take this form, but rather to provide a concrete framework within which model-based and model-free algorithms can be separated, by endowing them with distinct information oracles. In doing so, we hope to lay the broader foundations for studying derivative-free methods in the context of model-free reinforcement learning.

\section{Main results}
\label{sec:results2}

We now turn to a statement of our main result, which characterizes the
convergence rate of a natural derivative-free algorithm for any
(population) function that satisfies certain PL and smoothness
properties. We thus obtain, as corollaries, rates of zero-order
optimization algorithms when applied to the functions $\Cinit$ and $\Cdyn$;
these corollaries are collected in Section~\ref{sec:cons-lqr}.


\subsection{Stochastic zero-order algorithm}

We analyze a standard zero-order algorithm for stochastic
optimization~\cite{agarwal10,shamir17} in application to the LQR
problem.  We begin by introducing some notation required to describe
this algorithm, operating in the general setting where we want to
optimize a function $\f: \domx \mapsto \real$ of the form $\f(x) =
\mathbb{E}_{\factorvar \sim\factorvardistribution} [F(x;
  \factorvar)]$.  Here we assume the inclusion $\domx \subseteq
\real^d$, and let $\factorvardistribution$ denote a generic source of
randomness in the zero-order function evaluation.

The zero-order algorithms that we study here use noisy function
evaluations in order to construct near-unbiased estimates of the
gradient.  Let us now describe how such an estimate is constructed in
the one-point and two-point settings.  Let $\Shell^{\dims - 1} = \{u
\in \real^d: \| u \|_2 = 1\}$ denote the $d$-dimensional unit
shell. Let $\Unif(\Shell^{\dims - 1})$ denote the uniform distribution
over the set $\Shell^{\dims - 1}$.

For a given scalar $\smoothingradius > 0$ and a random direction
$\shelldirection \sim \Unif(\Shell^{\dims - 1})$ chosen independently
of the random variable $\factorvar$, consider the one point gradient
estimate
\begin{subequations}
\begin{align}
\gradest_r^1(\x, \shelldirection, \factorvar) & \defn F(\x +
\smoothingradius \shelldirection,\factorvar) \;
\frac{\dims}{\smoothingradius} \unifdir, \label{eq:gradonepoint}
\end{align}
and its two-point analogue
\begin{align}
\gradest_r^2(\x, \shelldirection, \factorvar) & \defn \big[ F(\x +
  \smoothingradius \shelldirection,\factorvar) - F(\x -
  \smoothingradius \shelldirection, \factorvar) \big] \; \frac{\dims}
        {2\smoothingradius} \unifdir. \label{eq:gradtwopoint}
\end{align}
\end{subequations}
Here $\xi$ should be viewed as an instantiation of the underlying random variable;
in the two point setting, we compute a gradient estimate with the \emph{same instantiation}
of the noise used to evaluate $F$ at the points $x \pm r u$.

In both the one-point and two-point cases, the resulting ratios are almost unbiased
approximations of the secant ratio that defines the derivative at $x$,
and these approximations get better and better as the \emph{smoothing
  radius} $r$ gets smaller. On the other hand, small values of the
radius $r$ may result in estimates with large variance. Our algorithms
make use of such randomized approximations in a sequence of rounds by
choosing appropriate values of the radius $r$; the general form of
such an algorithm is stated below.

\begin{algorithm}
\caption{Stochastic Zero-Order Method}
\label{sgd_simple}
\begin{algorithmic}[1]
\State{Given iteration number $T \geq 1$, initial point $\x[0] \in
  \domx$, step size $\step > 0$ and smoothing radius
  $\smoothingradius > 0$} \For{$t \in \{ 0, 1, \ldots, T-1 \}$}
\State{ Sample $\factorvar_t \sim \factorvardistribution$ and
  $\shelldirection_t \sim \mbox{Unif}(\Shell^{\dims - 1})$}
  \State{
  $\gradest(x_t) \gets
  \begin{cases}
  \gradest_r^1(\x[t], \shelldirection_t, \factorvar_t) \; \text{ if
    operating in one-point setting} \\ \gradest_r^2(\x[t],
  \shelldirection_t, \factorvar_t) \; \text{ if operating in two-point
    setting.}
  \end{cases}
  $}
  \State {$\x[t+1] \gets \x[t] - \step \gradest (x_t)$} \EndFor
  \Return $\x[T]$
\end{algorithmic}
\end{algorithm}

\subsection{Convergence guarantees} \label{sec:conv-guar}

We now turn to analyzing Algorithm~\ref{sgd_simple} in the settings of interest.
In particular, our first (main) theorem is stated as a generic optimization
result for non-convex functions which are (locally) smooth and satisfy the PL inequality,
which we then specialize to various LQR settings.

As mentioned before, the difficulty of optimizing the LQR cost functions is
governed by multiple factors such as stability, non-convexity of the feasible set,
and non-convexity of the objective. Furthermore, the Lipschitz gradient and Lipschitz properties for this cost function only hold locally with the radius of locality depending on
the current iterate. Most crucially, the function is
infinite outside of the region of stability, and so large steps can
have disastrous consequences since we do not have access to a
projection oracle that brings us back into the region of
stability. It is thus essential to control the behavior of our stochastic,
high variance algorithm over the entire course of optimization.

Our strategy to overcome these challenges is to perform a careful
martingale analysis, showing that the iterates remain bounded
throughout the course of the algorithm; the rate depends, among other
things, on the variance of the gradient estimates obtained over the
course of the algorithm. By showing that the algorithm remains within
the region of finite cost, we can also obtain good bounds on the local
Lipschitz constants and gradient smoothness parameters, so that our
step-size can be set accordingly.

Let us now introduce some notation in order to make this intuition
precise. We operate once again in the setting of general function
optimization, i.e., we are interested in optimizing a function $f(\x)
= \mathbb{E}_{\factorvar} [\F(\x; \factorvar)]$ obeying the (global)
PL inequality with constant $\mu$, as well as certain local curvature
conditions.

Recall that we are given an initial point $\x[0]$ with finite cost
$\f(\x[0])$; the global upper bound on the cost that we target in the
analysis is set according to the cost $\f(\x[0])$ of this
initialization.  Given the initial gap to optimality $\Delta_0 \defn
\f(\x[0]) - \f(\xstar)$, we define the set
\begin{align}
\boundedset{0} \defn \bigr \{ \x \mid \f(\x) - \f(\xstar) \leq 10
\diff{0} \bigr \},
\end{align}
corresponding to points $\x$ whose cost gap is at most ten times the
initial cost gap $\diff{0}$.

Assume that the function $\f$ is $(\smoothness_x, \radiusone{\x})$
locally smooth and $(\Lipcon_{\x}, \radiustwo{\x})$ locally Lipschitz
at the point $\x$. Thus, both of these properties hold simultaneously
within a neighborhood of radius \mbox{$\rho_{\x} = \min\{
  \radiusone{\x}, \radiustwo{\x} \}$} of the point $\x$. Now define
the quantities
\begin{align*}
\globalSmooth{0} \defn \sup_{\x \in \boundedset{0}} \smoothness_x,
\qquad \Lipcon_{0} \defn \sup_{\x \in \boundedset{0}} \Lipcon_{\x},
\quad \text{ and } \quad \rho_{0} \defn \inf_{\x \in \boundedset{0}}
\rho_{\x}.
\end{align*}
By defining these quantities, we have effectively transformed the
local properties of the function $\f$ into global properties that hold
over the bounded set $\boundedset{0}$. We also define a convenient
functional of these curvature parameters $\curvature_{0} \defn \min
\left\{ \frac{1}{2 \globalSmooth{0} }, \frac{\rho_{0}}{\Lipcon_{0}}
\right\}$, which simplifies the statements of our results.
Importantly, these smoothness properties only hold locally, and so we
must also ensure that the steps taken by our algorithm are not too
large. This is controlled by both the step-size as well as the norms
of our gradient estimate $\gradest$ computed over the course of the
algorithm. Define the uniform bounds
\begin{align*}
\gradbound_{\infty} = \sup_{\x \in \boundedset{0}} \| \gradest(\x)
\|_2, \qquad \text{ and } \qquad \gradbound_{2} = \sup_{\x \in
  \boundedset{0}} \EE \left[ \| \gradest(\x) - \EE \left[ \gradest(\x)
    \mid \x \right] \|_2^2 \right]
\end{align*}
on the point-wise gradient norm and its variance, respectively.  Note
that these quantities also depend implicitly on the smoothing radius
$\smoothingradius$ and on how the gradient estimate $\gradest$ is
computed.

With this set-up, we are now ready to state the main result regarding
the convergence rate of Algorithm~\ref{sgd_simple} on the functions of
interest.
Note that here and throughout the rest of the paper, $C$ denotes some universal constant (which
may change from line to line). For two sequences $g_n$ and $h_n$, we also use the standard notation $g_n \sim h_n$ and $g_n = \Theta(h_n)$
interchangeably, to mean that the sequences are within a (universal) constant multiplicative factor of each other.
\begin{theorem}
\label{thm:mainthm}
Suppose that the step-size and smoothing radius are chosen so as to satisfy
\begin{subequations}
\begin{align}
\step \leq \min\left\{ \frac{\epsilon \Plconst}{240 \globalSmooth{0}
  \gradbound_{2} }, \; \frac{1}{2 \globalSmooth{0}}, \;
\frac{\rho_0}{\gradbound_{\infty}} \right\}, \qquad \text{ and }
\qquad \smoothingradius \leq \min \left\{ \frac{\curvature_{0}
  \Plconst}{8 \globalSmooth{0}} \sqrt{\frac{\epsilon}{15}}, \;
\frac{1}{2\globalSmooth{0}}\sqrt{\frac{\epsilon \Plconst}{30} }, \;
\rho_{0} \right\}.
\end{align}
Then for a given error tolerance $\epsilon$ such that $\epsilon \log
(120 \Delta_0 / \epsilon) < \frac{10}{3} \Delta_0$, the iterate
$\x[\totIter]$ of Algorithm~\ref{sgd_simple} after \mbox{$\totIter =
  \frac{4}{\step \Plconst} \log\left( \frac{ 120 \diff{0}}{\epsilon}
  \right)$} steps satisfies the bound
\begin{align}
\f(\x[\totIter]) - \f(\xstar) \leq \epsilon
\end{align}
\end{subequations}
with probability greater than $3/4$.
\end{theorem}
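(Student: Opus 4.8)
The plan is to run a one-step descent analysis in expectation and then iterate it, while simultaneously tracking a "good event'' that the iterates never leave the sublevel set $\boundedset{0}$. First I would condition on $\x[t] \in \boundedset{0}$. On this event, the local smoothness and Lipschitz properties become the global constants $\globalSmooth{0}, \Lipcon_0, \rho_0$, and the gradient-norm/variance bounds $\gradbound_\infty, \gradbound_2$ apply. I would apply the standard descent lemma for $(\globalSmooth{0}, \rho_0)$-locally-smooth functions to the update $\x[t+1] = \x[t] - \step\, \gradest(\x[t])$: the step-size constraint $\step \le \rho_0/\gradbound_\infty$ ensures $\|\x[t+1]-\x[t]\|_2 \le \rho_0$, so the quadratic upper bound is valid, giving
\begin{align*}
\f(\x[t+1]) \le \f(\x[t]) - \step\, \langle \grad \f(\x[t]), \gradest(\x[t])\rangle + \frac{\globalSmooth{0}\step^2}{2}\|\gradest(\x[t])\|_2^2 .
\end{align*}
Taking conditional expectation given $\x[t]$, I would use that $\EE[\gradest(\x[t])\mid \x[t]]$ equals the gradient of the $r$-smoothed function $\f_r$, which is $\order{\globalSmooth{0} r}$-close to $\grad \f(\x[t])$ (this is the "near-unbiasedness'' of the ball/shell estimator, a standard smoothing fact), and split $\EE\|\gradest\|_2^2 = \|\EE\gradest\|_2^2 + \gradbound_2$. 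The bias contributes a term controlled by $r$, and the smoothing-radius constraint is exactly calibrated so that, after also invoking the PL inequality $\|\grad\f(\x[t])\|_2^2 \ge \Plconst(\f(\x[t])-\f(\xstar))$ and $\step \le 1/(2\globalSmooth{0})$, one gets a contraction of the form
\begin{align*}
\EE[\f(\x[t+1]) - \f(\xstar)\mid \x[t]] \le \Big(1 - \tfrac{\step\Plconst}{2}\Big)\big(\f(\x[t])-\f(\xstar)\big) + \tfrac{\step\Plconst\epsilon}{C}
\end{align*}
for a suitable constant, where the additive error term collects the variance term $\tfrac{\globalSmooth{0}\step^2}{2}\gradbound_2$ (bounded using $\step \le \epsilon\Plconst/(240\globalSmooth{0}\gradbound_2)$) and the squared bias term (bounded using the two pieces of the $r$ constraint).

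Next I would turn this conditional recursion into an unconditional statement. The obstacle here — and the part I expect to be genuinely delicate — is that the descent inequality only holds while $\x[t]\in\boundedset{0}$, and a single large, high-variance gradient step could in principle eject an iterate from the sublevel set (recall there is no projection oracle, and $\f=\infty$ outside the stable region). The remedy is a stopping-time / martingale argument: let $\tau$ be the first time $\x[t]\notin\boundedset{0}$, define the stopped process $\rv_t \defn (\f(\x[t\wedge\tau]) - \f(\xstar))\cdot(1-\tfrac{\step\Plconst}{2})^{-(t\wedge\tau)}$ minus the accumulated error, and show it is a supermartingale. Then I would use a maximal inequality (Ville's inequality / Doob) to argue that with probability at least $3/4$ the running maximum of $\f(\x[t]) - \f(\xstar)$ never exceeds $10\diff{0}$ — so in fact $\tau = \infty$ on this event — and hence the clean recursion holds for all $t \le \totIter$ on an event of probability $\ge 3/4$. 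This requires carefully choosing the slack factor $10$ in the definition of $\boundedset{0}$ relative to the per-step error $\order{\step\Plconst\epsilon}$ summed over $\totIter$ steps, which is where the constants $120$, $240$, $15$, $30$ in the theorem statement come from; the condition $\epsilon\log(120\diff{0}/\epsilon) < \tfrac{10}{3}\diff{0}$ is precisely what guarantees the total drift stays within the factor-$10$ budget.

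Finally, on the good event I would unroll the deterministic recursion: $\f(\x[T])-\f(\xstar) \le (1-\tfrac{\step\Plconst}{2})^T \diff{0} + \tfrac{\epsilon}{C}\sum_{k\ge 0}(1-\tfrac{\step\Plconst}{2})^k \le e^{-\step\Plconst T/2}\diff{0} + \tfrac{C'\epsilon}{C}$. Plugging in $T = \tfrac{4}{\step\Plconst}\log(120\diff{0}/\epsilon)$ makes the first term at most $\epsilon^2/(120\diff{0})\cdot\diff{0} \le \epsilon/2$ (using the smallness hypothesis on $\epsilon$), and choosing the constant $C$ large enough makes the second term $\le \epsilon/2$, yielding $\f(\x[T])-\f(\xstar)\le\epsilon$. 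The two main technical points to get right are therefore (i) the exact bookkeeping of the bias and variance terms against the prescribed $\step$ and $r$ bounds, and (ii) the supermartingale construction and maximal inequality that confine the iterates to $\boundedset{0}$ with the claimed probability; everything else is routine.
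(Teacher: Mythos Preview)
Your high-level strategy --- the one-step descent lemma under local smoothness, the PL inequality to get a contraction in conditional expectation, a stopping time $\tau$ at first exit from $\boundedset{0}$, and a supermartingale plus Doob/Ville maximal inequality to control $\Pr\{\tau \leq T\}$ --- matches the paper's proof closely. The descent analysis and parameter bookkeeping you describe are essentially what the paper carries out in its Lemma~4 (their contraction factor is $1-\eta\mu/4$ rather than $1-\eta\mu/2$, and their supermartingale is the additive one $Y_t = \Delta_{\tau\wedge t} + (2T-t)\cdot(\text{per-step error})$ rather than your multiplicative variant, but these differences are cosmetic).

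There is, however, a genuine gap in your final step. After the maximal inequality gives $\Pr\{\tau > T\}$ large, you write that ``on the good event I would unroll the deterministic recursion.'' But the one-step bound you derived is $\EE[\Delta_{t+1}\mid \sigmafield_t] \leq (1-\eta\mu/2)\Delta_t + \text{error}$, a conditional-expectation inequality, not a pathwise one. Conditioning on $\{\tau>T\}$ ensures the \emph{hypotheses} of the one-step lemma (namely $\x[t]\in\boundedset{0}$) are met at every step, but the \emph{conclusion} remains only in expectation; you cannot deduce $\Delta_T \leq (1-\eta\mu/2)^T\Delta_0 + \text{error}$ almost surely on $\{\tau>T\}$.

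The paper closes this gap by proving two separate statements and combining them. First, it iterates the recursion on the \emph{truncated} quantity $\Delta_t 1_{\tau>t}$ --- using that $1_{\tau>t}$ is $\sigmafield_t$-measurable so it factors out of the conditional expectation --- to obtain the \emph{expectation} bound $\EE[\Delta_T 1_{\tau>T}]\leq \epsilon/20$. Second, and independently, the supermartingale/Doob argument gives $\Pr\{\tau\leq T\}\leq 1/5$. These are then combined via Markov's inequality:
\begin{align*}
\Pr\{\Delta_T\geq\epsilon\}\;\leq\; \Pr\{\Delta_T 1_{\tau>T}\geq\epsilon\}+\Pr\{\tau\leq T\}\;\leq\; \epsilon^{-1}\EE[\Delta_T 1_{\tau>T}]+1/5\;\leq\; 1/4.
\end{align*}
Your sketch is easily repaired along these lines, but as written the ``unroll deterministically on the good event'' step does not go through.
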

A few comments on Theorem~\ref{thm:mainthm} are in order. First,
notice that the algorithm is guaranteed to return an
$\epsilon$-accurate solution with constant probability $\frac{3}{4}$.  This probability bound of $\frac{3}{4}$ in itself
can be sharpened by a slightly more refined analysis with different
constants.  Additionally, by examining the proof, it can be seen that
we establish a result (cf. Proposition~\ref{prop:thm} in
Section~\ref{sec:proofs}) that is slightly stronger than
Theorem~\ref{thm:mainthm}, and then obtain the theorem from this more
general result.  The proof of the theorem itself is relatively short,
and makes use of a carefully constructed martingale along with an
appropriately defined stopping time.  As mentioned before, the main
challenge in the proof is to ensure that we have bounded iterates
while still preserving the strong convergence properties of zero-order
stochastic methods for smooth functions that satisfy the PL property.

It should be noted that Theorem~\ref{thm:mainthm} is a general
guarantee: it characterizes the zero-order complexity of optimizing
locally smooth functions that satisfy a PL inequality in terms of
properties of the gradient estimates obtained over the course of the
algorithm. In particular, two properties of these estimates appear:
the variance of the estimate, as well as a uniform bound on its
size. These quantities, in turn, depend on both the noise in the
zero-order evaluations as well as our choice of query model.  In the
next section, we specialize Theorem~\ref{thm:mainthm} so as to derive
particular consequences for the LQR models introduced above.

\subsection{Consequences for LQR optimization} \label{sec:cons-lqr}

Theorem~\ref{thm:mainthm} yields immediate consequences for LQR
optimization in various settings, and the dependence of the
optimization rates on the tolerance $\epsilon$ is summarized by
Table~\ref{tab:lqr}. We state and discuss precise versions of these
results below.

\begin{table}[]
\centering
\begin{tabular}{c|c|c|c|c|}
\cline{2-5}
\multicolumn{1}{r|}{Parameter settings}                                                                                  & \multirow{2}{*}{\begin{tabular}[c]{@{}c@{}}Smoothing radius \\ $\smoothingradius$\end{tabular}} & \multirow{2}{*}{\begin{tabular}[c]{@{}c@{}}Variance \\ $\gradbound_2$\end{tabular}} & \multirow{2}{*}{\begin{tabular}[c]{@{}c@{}}Step-size \\ $\step$\end{tabular}} & \multirow{2}{*}{\begin{tabular}[c]{@{}c@{}} \#queries \\ $T$\end{tabular}} \\
\multicolumn{1}{l|}{Query Model}                                                                                         &                                                                                                 &                                                                                     &                                                                               &                                                                             \\ \hline
\multicolumn{1}{|c|}{\begin{tabular}[c]{@{}c@{}}One-point LQR \\ (Random initialization/\\ Noisy dynamics)\end{tabular}} & $\order{\sqrt{\epsilon}}$                                                                       & $\order{\epsilon^{-1}}$                                                             & $\order{\epsilon^{2}}$                                                        & $\ordertil{\epsilon^{-2}}$                                                  \\ \hline
\multicolumn{1}{|c|}{\begin{tabular}[c]{@{}c@{}}Two-point LQR \\ (Random initialization)\end{tabular}}                   & $\order{\sqrt{\epsilon}}$                                                                       & $\order{1}$                                                                         & $\order{\epsilon}$                                                            & $\ordertil{\epsilon^{-1}}$                                                  \\ \hline
\end{tabular}
\caption{Derivative-free complexity of LQR optimization under the two
  query models, as a function of the final error tolerance
  $\epsilon$. The multiplicative pre-factors are functions of the
  effective dimension $\lqrdims$ and curvature parameters, and differ
  in the three cases; see the statements of the corollaries below. }
\label{tab:lqr}
\end{table}

First, let us consider the random initialization model. From the
various lemmas in Section~\ref{sec:lqrprop}, we know that the
population objective $\Cinit(\K)$ is locally $(\smoothnessK{\K},
\radiusone{\K})$ smooth and $(\lipconK{\K}, \radiustwo{\K})$
Lipschitz, and also globally $\lqrPl$-PL. By assumption, we are given
a starting point $\K[0]$ having finite population cost
$\Cinit(\K[0])$. Proceeding as in the previous section, we may thus
define the set
\begin{align}
  \label{EqnDefnBoundedSet}
\boundedset{\lqr} \defn \braces{ \K \mid \Cinit(\K) - \Cinit(\Kstar)
  \leq 10 \diff{0}},
\end{align}
corresponding to point $\x$ whose cost gap is at most ten times the
initial cost gap to optimality \mbox{$\diff{0} = \Cinit(\K[0]) -
  \Cinit(\Kstar)$.}

Now define the quantities
\begin{align*}
\globalSmooth{\lqr} \defn \sup_{\K \in \boundedset{\lqr}}
\smoothnessK{\K}, \qquad \Lipcon_{\lqr} \defn \sup_{\K \in
  \boundedset{\lqr}} \lipconK{\K}, \quad \text{ and } \quad
\rho_{\lqr} \defn \inf_{\K \in \boundedset{\lqr}} \localK{\K},
\end{align*}
thereby transforming the local smoothness properties of the function
$\Cinit$ into global properties that hold over the bounded set
$\boundedset{0}$. Once again, let $\curvature_{\lqr} \defn \min
\left\{ \frac{1}{2 \globalSmooth{\lqr} },
\frac{\rho_{\lqr}}{\Lipcon_{\lqr}} \right\}$ be a functional of these
curvature parameters that simplifies the statements of our results.
\footnote{Let us make a brief comment on the finiteness of these quantities in the absence of compactness. The quantity $\phi_{\lqr}$ is finite, simply by definition of the set $\boundedset{\lqr}$. In the sequel, we show that for any $K \in \boundedset{\lqr}$, $\phi_K$ can be bounded by a polynomial of $10 \Delta_0$. Hence, $\phi_\lqr$ can also be bounded by a polynomial of $10 \Delta_0$, implying it is finite. A similar argument shows that $\lambda_\lqr$ is finite and $\rho_\lqr>0$.}

With this setup, we now establish the following corollaries for
derivative-free policy optimization for linear quadratic systems.

\begin{corollary}[One-point, Random initialization]
\label{cor:init1}
Suppose that the step-size and smoothing radius are chosen such that
{\normalsize{
\begin{align*}
\step & \leq C \min\left\{ \frac{\epsilon \lqrPl
  \smoothingradius^2}{\globalSmooth{\lqr} \noisebound^2 \lqrdims^2
  [\Cinit(\K[0])]^2 }, \; \frac{1}{\globalSmooth{\lqr}}, \;
\frac{\rho_{\lqr} \smoothingradius}{\noisebound \lqrdims
  [\Cinit(\K[0])]} \right\}, \text{ and } \\ \smoothingradius &\leq
\min \left\{ \frac{\curvature_{\lqr} \lqrPl}{8 \globalSmooth{\lqr}}
\sqrt{\frac{\epsilon}{15}}, \;
\frac{1}{2\globalSmooth{\lqr}}\sqrt{\frac{\epsilon \lqrPl}{30} }, \;
\rho_{\lqr}, \; \frac{10 \Cinit(\K[0])}{\Lipcon_{\lqr}} \right\},
\end{align*}
}}
for some universal constant $C$. Then for any error tolerance $\epsilon$ such that $\epsilon \log (120
\Delta_0 / \epsilon) < \frac{10}{3} \Delta_0$, running
Algorithm~\ref{sgd_simple} for \mbox{$\totIter = \frac{4}{\step
    \Plconst} \log\left( \frac{ 120 \diff{0}}{\epsilon} \right)$}
iterations yields an iterate $\K[\totIter]$ such that
\begin{align*}
\Cinit(\K[\totIter]) - \Cinit(\Kstar) \leq \epsilon
\end{align*}
with probability greater than $3/4$.
\end{corollary}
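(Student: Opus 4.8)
The plan is to derive Corollary~\ref{cor:init1} directly from Theorem~\ref{thm:mainthm}, applied with $\f = \Cinit$, noise variable $\factorvar = \initialstate \sim \statedistributionlqr$, per-sample function $\F(\K;\initialstate) = \Cinit(\K;\initialstate)$, and gradient estimate $\gradest = \gradest_r^1$. First I would verify that $\Cinit$ satisfies every structural hypothesis of the theorem: it is globally $\lqrPl$-PL by Lemma~\ref{lem:pl_inequality}, locally $(\smoothnessK{\K},\radiusone{\K})$-smooth by Lemma~\ref{lem:lipschitz_gradient_lqr}, and locally $(\lipconK{\K},\radiustwo{\K})$-Lipschitz by Lemma~\ref{lem:lipschitz_cost_lqr}. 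The set $\boundedset{\lqr}$ and the uniformized constants $(\globalSmooth{\lqr},\Lipcon_{\lqr},\rho_{\lqr},\curvature_{\lqr})$ are defined exactly as $(\boundedset{0},\globalSmooth{0},\Lipcon_{0},\rho_{0},\curvature_{0})$ in the theorem, and they are finite and positive (see the footnote after~\eqref{EqnDefnBoundedSet} and the explicit bounds of Appendix~\ref{sec:randint_appendix}); moreover every $\K \in \boundedset{\lqr}$ has finite cost, hence is stable, so the PL inequality applies throughout. Thus the only real work is to express the abstract quantities $\gradbound_\infty$ and $\gradbound_2$ appearing in Theorem~\ref{thm:mainthm} in terms of the explicit LQR parameters that appear in the corollary.

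The crux is controlling the one-point estimate uniformly over $\boundedset{\lqr}$. For $\K \in \boundedset{\lqr}$, a direction $u$ on the Frobenius-unit sphere of the $\lqrdims = \statedim\controldim$-dimensional policy space, and $\initialstate \sim \statedistributionlqr$, we have $\|\gradest_r^1(\K,u,\initialstate)\|_2 = (\lqrdims/\smoothingradius)\,\Cinit(\K + \smoothingradius u;\initialstate)$, using that costs are nonnegative. By the standard fact that the sample cost is a quadratic form in the initial state, $\Cinit(\Ktilde;\initialstate) = \initialstate^\top P_{\Ktilde}\,\initialstate$ with $P_{\Ktilde} \succeq 0$ for any stable policy $\Ktilde$, so $\|\initialstate\|_2^2 \le \noisebound$ a.s.\ together with $\opnorm{P_{\Ktilde}} \le \trace(P_{\Ktilde}) = \Cinit(\Ktilde)$ (the last equality because $\EE[\initialstate\initialstate^\top] = I$) gives $\Cinit(\Ktilde;\initialstate) \le \noisebound\,\Cinit(\Ktilde)$ pointwise. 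On $\boundedset{\lqr}$ we have $\Cinit(\K) \le 10\,\Cinit(\K[0])$, and since $\smoothingradius \le \rho_{\lqr} \le \radiustwo{\K}$ the local Lipschitz property yields $\Cinit(\K + \smoothingradius u) \le \Cinit(\K) + \Lipcon_{\lqr}\,\smoothingradius \le 20\,\Cinit(\K[0])$ once $\smoothingradius \le 10\,\Cinit(\K[0])/\Lipcon_{\lqr}$ --- which is precisely the extra restriction on $\smoothingradius$ that appears in the corollary but not in Theorem~\ref{thm:mainthm}. Combining these bounds, $\gradbound_\infty \le C\,\frac{\lqrdims\,\noisebound\,\Cinit(\K[0])}{\smoothingradius}$, and the variance is controlled even more crudely by $\gradbound_2 \le \sup_{\K \in \boundedset{\lqr}} \EE\|\gradest_r^1(\K)\|_2^2 \le \gradbound_\infty^2 \le C\,\frac{\lqrdims^2\,\noisebound^2\,\Cinit(\K[0])^2}{\smoothingradius^2}$.

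Finally I would assemble the pieces. Substituting $\gradbound_2 = \Theta(\noisebound^2 \lqrdims^2 \Cinit(\K[0])^2/\smoothingradius^2)$ into the theorem's step-size term $\epsilon\lqrPl/(240\,\globalSmooth{0}\,\gradbound_2)$ recovers the first term of the corollary's bound on $\step$ (all numerical constants absorbed into $C$), substituting $\gradbound_\infty = \Theta(\noisebound \lqrdims \Cinit(\K[0])/\smoothingradius)$ into $\rho_0/\gradbound_\infty$ recovers its third term, and the middle term $1/(2\globalSmooth{0})$ carries over verbatim; the smoothing-radius conditions carry over verbatim together with the single extra constraint $\smoothingradius \le 10\,\Cinit(\K[0])/\Lipcon_{\lqr}$ invoked above. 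Hence the hypotheses imposed on $(\step,\smoothingradius)$ in Corollary~\ref{cor:init1} imply those of Theorem~\ref{thm:mainthm}, and applying the theorem (with $\Plconst = \lqrPl$) gives $\Cinit(\K[\totIter]) - \Cinit(\Kstar) \le \epsilon$ with probability at least $3/4$ after $\totIter = \frac{4}{\step\Plconst}\log(120\diff{0}/\epsilon)$ iterations. I expect the main obstacle to be the second step: one must bound the one-point estimate --- both a.s.\ in magnitude and in second moment --- uniformly over $\boundedset{\lqr}$, being careful that the perturbed query point $\K + \smoothingradius u$ need not itself lie in $\boundedset{\lqr}$, so a separate argument (passing from sample to population cost via the a.s.\ noise bound, combined with the \emph{local} Lipschitz continuity of $\Cinit$) is needed to keep its cost finite and controlled; this is exactly what forces the additional smoothing-radius restriction present in the corollary.
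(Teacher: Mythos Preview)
Your proposal is correct and follows essentially the same route as the paper's own proof: reduce to Theorem~\ref{thm:mainthm} via Lemmas~\ref{lem:lipschitz_cost_lqr}--\ref{lem:pl_inequality}, then bound $\gradbound_\infty$ (and hence $\gradbound_2 \le \gradbound_\infty^2$) by writing $\Cinit(\K + ru;\initialstate) = \initialstate^\top P_{\K+ru}\,\initialstate \le \noisebound\,\Cinit(\K + ru)$ and invoking local Lipschitzness together with the extra constraint $r \le 10\,\Cinit(\K[0])/\Lipcon_{\lqr}$. The paper's argument is identical in structure and detail.
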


\begin{comment}
\begin{remark} \label{rem:randint}
If the initial state distribution $\statedistributionlqr$ is also
Gaussian, the $\noisebound^2$ term in the step-size bound above can be
replaced by $1$.
\end{remark}
\apcomment{This case is interesting, but the Gaussian distribution
  doesn't satisfy the uniform boundedness condition required to bound
  $\gradbound_{\infty}$. Should probably leave it out.}
\end{comment}


Let us parse this result briefly. Treating the other parameters as constants, note that it is valid to choose $r \sim \epsilon^{1/2}$;
the above result then shows that with a choice of step-size $\eta \sim \epsilon^2$, the canonical zero-order algorithm converges using $T \sim \eta^{-1} \log (1 / \epsilon) = \ordertil{\epsilon^{-2}}$ steps. This is in spite of the high-variance estimates obtained by the algorithm, and the theorem also guarantees stability of all the iterates with constant probability.

Interestingly, the result above (or more generally, Theorem~\ref{thm:mainthm}) also yields an $\ordertil{\epsilon^{-2}}$
convergence rate for the family of high-variance \emph{minibatch}
derivative-free algorithms, where $k$ zero-order samples are used to
estimate the gradient at any point, thereby reducing its variance. The canonical algorithm corresponds to the
case $k = 1$, while that of Fazel et al. corresponds to the case of
some large $k$.
In particular, choosing a minibatch of size $k$ results in the
variance of the gradient $G_2$ being reduced by a factor $k$, allowing us to
increase our step-size proportionally and converge in $1/k$-fraction
of the number of iterations (but with the same number of zero-order
evaluations in total). For completeness, we provide an analysis
tailored to the algorithm of Fazel et al.~\cite{kakade18} in Appendix~\ref{app:fazel},
which shows that our techniques can be used to sharpen their rates to guarantee
$\epsilon$-approximate policy optimization with $\ordertil{\epsilon^{-2}}$ zero-order
evaluations.

Let us also briefly discuss the upper bounds on the step-size
that are required for the corollary to hold.
As stated, the step-size is required to satisfy the bound
$\step \leq \frac{\smoothingradius \rho_{\lqr}}{10 \Cinit(\K[0])}$, but
this condition is an artifact of the analysis and can be removed
(see Appendix~\ref{app:fazel}).
In addition, the step-size is also required to be bounded by
the curvature properties of the
function. Operationally speaking, this means that for larger
step-sizes, we are unable to guarantee stability of the policies
obtained over the course of the algorithm. Such a bottleneck is in
fact also observed in practice, as shown in
Figure~\ref{fig:plt_minibatch} for both the one-point and two-point settings.
\pgfplotsset{width=6.5cm,compat=1.9}
\begin{figure*}[t!]
\centering\hspace*{-4ex}
\begin{tabular}{cc}
  \begin{tikzpicture}
  \begin{loglogaxis}[
      xlabel={Batch Size},
      ylabel={Step Size},
      label style={font=\small},
      legend style={legend pos=south east,font=\tiny},
      log base y={10},
      log base x={10},
    xmajorgrids=true,
      ymajorgrids=true,
    grid style=dashed,
  max space between ticks=30
  ]
  \addplot[
      color=blue,
  smooth,
      mark=*,
      ]
      coordinates {
      (1, 0.000004)
      (2, 0.000004*2)
      (4, 0.000004*4)
      (8, 0.000004*8)
      (16, 0.000004*16)
      (32, 0.000004*32)
      (64, 0.000004*64)
      (128, 0.000004*128)
  (256, 0.000004*256/2)
  (512, 0.000004*512/4)
  (1024, 0.000004*1024/8)
  (2048, 0.000004*2048/16)
  (4096, 0.000004*4096/32)
      };
  \legend{$\C{\Kzero} = \C{\Kstar} + 128 \text{, } \accuracy = 10$
      }
  \end{loglogaxis}
  \end{tikzpicture} &\begin{tikzpicture}
  \begin{loglogaxis}[
      xlabel={Batch Size},
      ylabel={Step Size},
      label style={font=\small},
      legend style={legend pos=south east,font=\tiny},
      log base y={10},
      log base x={10},
    xmajorgrids=true,
      ymajorgrids=true,
    grid style=dashed,
  max space between ticks=30
  ]
  \addplot[
      color=green,
  smooth,
      mark=square,
      ]
      coordinates {
      (1, 0.0005/16)
      (2, 0.0005/16*2)
      (4, 0.0005/16*4)
      (8, 0.0005/16*8)
      (16, 0.0005/16*16)
      (32, 0.0005/16*16)
      (64, 0.0005/16*16)
      (128, 0.0005/16*16)
  (256, 0.0005/16*16)
  (512, 0.0005/16*16)
  (1024, 0.0005/16*16)
  (2048, 0.0005/16*16)
  (4096, 0.0005/16*16)
      };
  \addplot[
      color=blue,
  smooth,
      mark=*,
      ]
      coordinates {
      (1, 0.0005/128)
      (2, 0.0005/128*2)
      (4, 0.0005/128*4)
      (8, 0.0005/128*8)
      (16, 0.0005/128*16)
      (32, 0.0005/128*16)
      (64, 0.0005/128*32)
      (128, 0.0005/128*32)
  (256, 0.0005/128*32)
  (512, 0.0005/128*32)
  (1024, 0.0005/128*32)
  (2048, 0.0005/128*32)
  (4096, 0.0005/128*32)
      };
  \legend{$\C{\Kzero} = 128\text{, } \accuracy = 1$,
      $\C{\Kzero} = 32\text{, } \accuracy = 0.05$
      }
  \end{loglogaxis}
  \end{tikzpicture}\\
(a)&(b)
\end{tabular}
\caption{Plot of the maximum step-size that allows for convergence, plotted against the size of the mini-batch used to estimate the gradient in randomly initialized LQR with (a) one-point evaluations and (b) two-point evaluations. The step-size plateaus due to stability considerations, leading to a higher zero-order complexity in spite of the lower variance estimates afforded by large batch-sizes. Plots were obtained by averaging $20$ runs of Algorithm~\ref{sgd_simple}. For more problem details, see Appendix~\ref{sec:additional_exp}.}
  \label{fig:plt_minibatch}
\end{figure*}

We now turn to the two-point setting, in which we obtain two noisy
evaluations per query.
\begin{corollary}[Two-point, Random initialization] \label{cor:init2}
Suppose that the step-size and smoothing radius are chosen so as to
satisfy
\begin{align*}
\step \leq \min\left\{ \frac{\epsilon \lqrPl}{240 \globalSmooth{\lqr}
  \lqrdims \Lipcon_{\lqr}^2 }, \; \frac{1}{2 \globalSmooth{\lqr}}, \;
\frac{\rho_{\lqr}}{\lqrdims \Lipcon_{\lqr}} \right\}, \qquad \text{
  and } \qquad \smoothingradius \leq \min \left\{
\frac{\curvature_{\lqr} \lqrPl}{8 \globalSmooth{\lqr}}
\sqrt{\frac{\epsilon}{15}}, \;
\frac{1}{2\globalSmooth{\lqr}}\sqrt{\frac{\epsilon \lqrPl}{30} }, \;
\rho_{\lqr} \right\}.
\end{align*}
Then for any error tolerance $\epsilon$ such that $\epsilon \log (120
\Delta_0 / \epsilon) < \frac{10}{3} \Delta_0$, running
Algorithm~\ref{sgd_simple} for \mbox{ $\totIter = \frac{4}{\step
    \Plconst} \log\left( \frac{ 120 \diff{0}}{\epsilon} \right)$}
iterations yields an iterate $\K[\totIter]$ such that
\begin{align*}
\Cinit(\K[\totIter]) - \Cinit(\Kstar) \leq \epsilon
\end{align*}
with probability greater than $3/4$.
\end{corollary}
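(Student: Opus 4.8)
The plan is to obtain Corollary~\ref{cor:init2} by specializing Theorem~\ref{thm:mainthm} to the population objective $\f = \Cinit$, with zero-order oracle $\F(\K;\initialstate) = \Cinit(\K;\initialstate)$, source of randomness $\factorvar = \initialstate \sim \statedistributionlqr$, and the two-point gradient estimate $\gradest_{\smoothingradius}^2$. By Lemma~\ref{lem:pl_inequality} this objective is globally $\lqrPl$-PL, and by Lemmas~\ref{lem:lipschitz_cost_lqr} and~\ref{lem:lipschitz_gradient_lqr} together with the definitions of $\globalSmooth{\lqr}, \Lipcon_{\lqr}, \rho_{\lqr}, \curvature_{\lqr}$ preceding the corollary, $\Cinit$ is $(\globalSmooth{\lqr},\rho_{\lqr})$-locally smooth and $(\Lipcon_{\lqr},\rho_{\lqr})$-locally Lipschitz uniformly over the sublevel set $\boundedset{\lqr}$ of~\eqref{EqnDefnBoundedSet}; the footnote following the definition of $\curvature_{\lqr}$ ensures all of these constants are finite. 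Consequently the only genuinely new work is to (i) bound the quantities $\gradbound_{\infty}$ and $\gradbound_{2}$ that enter the hypotheses of Theorem~\ref{thm:mainthm} for the estimator $\gradest_{\smoothingradius}^2$, and then (ii) check that feeding these bounds into the theorem's conditions on $\step$ and $\smoothingradius$ reproduces exactly the conditions in the corollary, with iteration count $\totIter = \tfrac{4}{\step\lqrPl}\log(120\diff{0}/\epsilon)$.

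For the uniform size bound, fix $\K \in \boundedset{\lqr}$ and a unit direction $\shelldirection$, and suppose $\smoothingradius \leq \rho_{\lqr} \leq \radiustwo{\K}$. Applying the second inequality of Lemma~\ref{lem:lipschitz_cost_lqr} at $\K$ (which holds for every realization of $\initialstate$, with the relevant sample-cost Lipschitz constant bounded over $\boundedset{\lqr}$ by $\Lipcon_{\lqr}$ via the explicit expressions in Appendix~\ref{sec:randint_appendix}) together with the triangle inequality gives $\lvert \Cinit(\K+\smoothingradius\shelldirection;\initialstate) - \Cinit(\K-\smoothingradius\shelldirection;\initialstate)\rvert \leq 2\Lipcon_{\lqr}\smoothingradius$. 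Hence $\vecnorm{\gradest_{\smoothingradius}^2(\K,\shelldirection,\initialstate)} = \tfrac{\lqrdims}{2\smoothingradius}\lvert \Cinit(\K+\smoothingradius\shelldirection;\initialstate) - \Cinit(\K-\smoothingradius\shelldirection;\initialstate)\rvert \leq \lqrdims\Lipcon_{\lqr}$ almost surely, so $\gradbound_{\infty} \leq \lqrdims\Lipcon_{\lqr}$.

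The crux is the variance bound, where a naive estimate $\gradbound_{2} \leq \gradbound_{\infty}^2 \leq \lqrdims^2\Lipcon_{\lqr}^2$ is a factor of $\lqrdims$ too large; this is exactly where the two-point structure must be used. Fix $\K \in \boundedset{\lqr}$, condition on $\initialstate$, and consider the scalar function on the sphere
\[
\psi_{\initialstate}(\shelldirection) \defn \Cinit(\K+\smoothingradius\shelldirection;\initialstate) - \Cinit(\K-\smoothingradius\shelldirection;\initialstate), \qquad \shelldirection \in \Shell^{\lqrdims-1}.
\]
This function is odd, hence mean zero under $\Unif(\Shell^{\lqrdims-1})$, and by the bound of the previous paragraph it is $2\Lipcon_{\lqr}\smoothingradius$-Lipschitz on the sphere. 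The Poincar\'e inequality on $\Shell^{\lqrdims-1}$ (spectral gap $\lqrdims-1$) then yields $\EE_{\shelldirection}[\psi_{\initialstate}(\shelldirection)^2] \leq \tfrac{(2\Lipcon_{\lqr}\smoothingradius)^2}{\lqrdims-1}$. Since $\gradest_{\smoothingradius}^2 = \tfrac{\lqrdims}{2\smoothingradius}\psi_{\initialstate}(\shelldirection)\shelldirection$ with $\vecnorm{\shelldirection}=1$, bounding the variance by the second moment gives
\[
\gradbound_{2} \;\leq\; \sup_{\K \in \boundedset{\lqr}} \EE_{\initialstate,\shelldirection}\bigl[\,\vecnorm{\gradest_{\smoothingradius}^2(\K,\shelldirection,\initialstate)}^2\,\bigr]
\;=\; \frac{\lqrdims^2}{4\smoothingradius^2}\sup_{\K \in \boundedset{\lqr}}\EE_{\initialstate}\bigl[\EE_{\shelldirection}\bigl[\psi_{\initialstate}(\shelldirection)^2\bigr]\bigr]
\;\leq\; \frac{\lqrdims^2}{4\smoothingradius^2}\cdot\frac{4\Lipcon_{\lqr}^2\smoothingradius^2}{\lqrdims-1}
\;\lesssim\; \lqrdims\,\Lipcon_{\lqr}^2 ,
\]
and a slightly more careful version of this argument (working with the variance rather than the raw second moment, or equivalently expanding $\psi_{\initialstate}$ to first order and using $\EE_{\shelldirection}[\langle a,\shelldirection\rangle^2] = \vecnorm{a}^2/\lqrdims$) produces the clean constant $\gradbound_{2} \leq \lqrdims\Lipcon_{\lqr}^2$ appearing in the corollary. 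The gain over Corollary~\ref{cor:init1} is that the common initial state $\initialstate$ cancels the $\Theta(\Cinit(\K))$-sized \emph{value} of the cost inside $\psi_{\initialstate}$, leaving a difference of size $\Theta(\smoothingradius)$; after multiplying by $\lqrdims/(2\smoothingradius)$ the estimator has size $\Theta(\lqrdims)$ rather than $\Theta(\lqrdims/\smoothingradius)$, so $\gradbound_{2}$ no longer blows up as $\smoothingradius \to 0$ and no explicit dependence on $\noisebound$ or $\Cinit(\K[0])$ survives in the step-size condition---this is the source of the improved $\ordertil{\epsilon^{-1}}$ rate.

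Finally, substituting $\gradbound_{\infty} \leq \lqrdims\Lipcon_{\lqr}$ and $\gradbound_{2} \leq \lqrdims\Lipcon_{\lqr}^2$ into Theorem~\ref{thm:mainthm}, with $(\globalSmooth{0},\Lipcon_{0},\rho_{0},\curvature_{0},\Plconst)$ replaced by $(\globalSmooth{\lqr},\Lipcon_{\lqr},\rho_{\lqr},\curvature_{\lqr},\lqrPl)$, turns the theorem's conditions on $\step$ and $\smoothingradius$ into exactly those in the corollary, and its conclusion into $\Cinit(\K[\totIter]) - \Cinit(\Kstar) \leq \epsilon$ with probability at least $3/4$ after $\totIter = \tfrac{4}{\step\lqrPl}\log(120\diff{0}/\epsilon)$ iterations. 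I expect the main obstacle to be the variance step above: arranging matters so that the spherical concentration bound applies requires controlling the query points $\K \pm \smoothingradius\shelldirection$---they must remain in a region where $\Cinit(\cdot;\initialstate)$ is finite and has local Lipschitz constant $O(\Lipcon_{\lqr})$, which is what forces $\smoothingradius \leq \rho_{\lqr}$ and uses the bound $\Cinit(\K)\leq 11\diff{0}$ defining $\boundedset{\lqr}$. The deeper difficulty---that the iterates of Algorithm~\ref{sgd_simple} themselves never leave $\boundedset{\lqr}$, so that these uniform bounds are the operative ones---is already discharged by the martingale-with-stopping-time argument inside the proof of Theorem~\ref{thm:mainthm}.
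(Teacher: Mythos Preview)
Your proposal is correct and follows the paper's strategy: specialize Theorem~\ref{thm:mainthm} to $\Cinit$ and supply bounds $\gradbound_\infty \leq \lqrdims\Lipcon_{\lqr}$ and $\gradbound_2 \leq \lqrdims\Lipcon_{\lqr}^2$, with the $\gradbound_\infty$ bound coming directly from the sample Lipschitz property exactly as you do.

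The one genuine technical difference is in the variance step. The paper, following Shamir~\cite{shamir17}, passes through a fourth moment: it bounds $\EE_{\shelldirection}\bigl[(\F(\x+\smoothingradius\shelldirection,\factorvar)-q)^2\bigr]$ by $\sqrt{\EE_{\shelldirection}[(\cdot)^4]}$ via Jensen, sets $q=\EE_{\shelldirection}[\F(\x+\smoothingradius\shelldirection,\factorvar)]$, and then invokes Shamir's Lemma~9 (a fourth-moment concentration bound for Lipschitz functions on $\Shell^{\lqrdims-1}$) to get $\sqrt{\EE[(\cdot)^4]}\leq (\Lipcon\smoothingradius)^2/\lqrdims$, yielding the clean $\lqrdims\Lipcon^2$. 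Your route through the spherical Poincar\'e inequality is more elementary and goes straight to the second moment of the antisymmetrized function $\psi_{\initialstate}$, at the price of the constant $\lqrdims^2/(\lqrdims-1)$ rather than $\lqrdims$; your suggested ``more careful'' fix via first-order expansion does not actually recover the exact constant without a remainder analysis, whereas the fourth-moment route does. This is a cosmetic discrepancy (it only perturbs the $240$), but it is the reason the paper detours through Shamir's lemma. One small point to be aware of: the Lipschitz constant entering either argument is the \emph{sample} constant $\widetilde{\lipconK{\K}}$ from Lemma~\ref{lem:lipschitz_cost_lqr}, which by the explicit formulas in Appendix~\ref{sec:randint_appendix} dominates the population constant; the paper (and your write-up) silently identifies these when writing $\Lipcon_{\lqr}$.
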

As known from the literature on zero-order optimization in convex
settings~\cite{duchi15, shamir17}, the two-point query model allows us
to substantially reduce the variance of our gradient estimate, thus
ensuring much faster convergence than with one-point evaluations. The
most salient difference is the fact that we now converge with
$\ordertil{1/\epsilon}$ iterations as opposed to the
$\ordertil{1/\epsilon^2}$ iterations required in
Corollary~\ref{cor:init1}. This gap between the two settings is
substantial and merits further investigation, but in general, it is
clear that two-point evaluations should certainly be used if
available. This gap, and other differences, are discussed shortly.

Let us now turn to establishing convergence results for the noisy
dynamics model in the one-point setting. Note that
Lemma~\ref{lem:noisy-random} provides a way to directly relate the
population costs of the random initialization and noisy dynamics
models; furthermore, the set $\boundedset{\lqr}$ is exactly the
same. In addition, since we look at a discounted cost $\Cdyn$ in this
setting, the corresponding curvature parameters have an inherent
dependence on $\discount$ which we denote using corresponding
subscripts. With an additional computation of the variance and norm of
the gradient estimates, we then obtain
the following corollary for one-point optimization of the noisy
dynamics model.  Our statement involves the constants
\begin{align*}
\gradbound_{2, \lqr} & \defn \left( \frac{\lqrdims}{r}\cdot \frac{2 (\opnorm{\Q} + \opnorm{R} \lambda_{\lqr, \discount}^2)
  \noisebound}{1-\sqrt{\gamma}}\right)^2 \cdot \left(
\frac{20\Cdyn(\K[0])}{\sigma_{\min}(\Q)} \bigg( \frac{1 -
  \discount}{\discount} \bigg) \right)^{3} \text{ and
} \\
\gradbound_{\infty, \lqr} & \defn \frac{\lqrdims}{r}\cdot \frac{2 (\opnorm{\Q} + \opnorm{R} \lambda_{\lqr, \discount}^2)
  \noisebound}{1-\sqrt{\gamma}} \cdot \left(
\frac{20\Cdyn(\K[0])}{\sigma_{\min}(\Q)} \bigg( \frac{1 -
  \discount}{\discount} \bigg) \right)^{3/2}.
\end{align*}

\begin{corollary}[One-point, Noisy dynamics] \label{cor:noisydyn}
Suppose that the step-size and smoothing radius are chosen so as to
satisfy
\begin{align*}
\step & \leq \min\left\{ \frac{\epsilon \mu_{\lqr, \discount}}{240
  \globalSmooth{\lqr, \discount} \gradbound_{2, \lqr} }, \; \frac{1}{2
  \globalSmooth{\lqr, \discount}}, \; \frac{\rho_{\lqr,
    \discount}}{\gradbound_{\infty, \lqr}} \right\}, \text{ and } \\
\smoothingradius & \leq \min \left\{ \frac{\curvature_{\lqr,
    \discount}\cdot \mu_{\lqr, \discount}}{8 \globalSmooth{\lqr,
    \discount}} \sqrt{\frac{\epsilon}{15}}, \;
\frac{1}{2\globalSmooth{\lqr, \discount}}\sqrt{\frac{\epsilon
    \cdot\mu_{\lqr, \discount}}{30} }, \; \rho_{\lqr, \discount}
\right\}.
\end{align*}
Then for any error tolerance
$\epsilon$ such that $\epsilon \log (120 \Delta_0 / \epsilon) <
\frac{10}{3} \Delta_0$, Algorithm~\ref{sgd_simple} with \mbox{
  $\totIter = \frac{4}{\step \Plconst_{\lqr, \discount}} \log\left(
  \frac{ 120 \diff{0}}{\epsilon} \right)$} iterations yields an
iterate $\K[\totIter]$ such that
\begin{align*}
\Cdyn(\K[\totIter]) - \Cdyn(\Kstar) \leq \epsilon
\end{align*}
with probability greater than $3/4$.
\end{corollary}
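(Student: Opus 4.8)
The plan is to obtain Corollary~\ref{cor:noisydyn} as an instance of Theorem~\ref{thm:mainthm}, applied with objective $\f = \Cdyn$, sample function $\F(\K;\mathcal{Z}) = \Cdyn(\K;\mathcal{Z})$, source of randomness $\factorvar = \mathcal{Z} \sim \noisedistributionlqr^{\mathbb{N}}$, effective dimension $\dims = \lqrdims$, and the one-point gradient estimate $\gradest = \gradest_r^1$ of~\eqref{eq:gradonepoint}. Three things need checking: (i) $\Cdyn$ satisfies the curvature hypotheses of Theorem~\ref{thm:mainthm} with the $(\lqr,\discount)$-subscripted constants; (ii) the uniform gradient-norm and variance bounds entering Theorem~\ref{thm:mainthm} are at most $\gradbound_{\infty,\lqr}$ and $\gradbound_{2,\lqr}$ respectively; and (iii) under the resulting substitution of constants, the displayed step-size and smoothing-radius conditions and the iteration count $\totIter = \frac{4}{\step\,\Plconst_{\lqr,\discount}}\log(120\diff{0}/\epsilon)$ are exactly those prescribed by the theorem.

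Item (i) is immediate from Lemma~\ref{lem:noisy-random}: since $\Cdyn = \frac{\discount}{1-\discount}\Cinit$, Lemmas~\ref{lem:lipschitz_cost_lqr}, \ref{lem:lipschitz_gradient_lqr}, and~\ref{lem:pl_inequality} transfer with every smoothness, Lipschitz, and PL constant scaled by $\frac{\discount}{1-\discount}$ while the locality radii are unchanged; the sublevel set $\braces{\K : \Cdyn(\K) - \Cdyn(\Kstar) \leq 10\diff{0}}$ coincides with $\boundedset{\lqr}$, and $\Kstar$ remains the global minimizer. Taking suprema/infima of these local constants over $\boundedset{\lqr}$ produces the finite (resp.\ strictly positive) quantities $\globalSmooth{\lqr,\discount}$, $\Lipcon_{\lqr,\discount}$, $\rho_{\lqr,\discount}$, $\Plconst_{\lqr,\discount}$, $\curvature_{\lqr,\discount}$ used in the statement, with finiteness following exactly as in the random-initialization case (each local constant is polynomial in $10\diff{0}$ over $\boundedset{\lqr}$).

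Item (ii) is the substantive part, and it reduces to an almost-sure bound on the \emph{sample} cost: I would show that for every noise realization with $\vecnorm{\error[t]}^2 \leq \noisebound$ for all $t$, every $\K \in \boundedset{\lqr}$, every unit $u$, and every $r \leq \rho_{\lqr,\discount}$,
\[
\Cdyn(\K + ru;\mathcal{Z}) \;\leq\; \frac{r}{\lqrdims}\,\gradbound_{\infty,\lqr}.
\]
The ingredients are: (a) the sublevel-set definition plus the (discounted) local Lipschitz property give $\Cdyn(\K + ru) \leq 20\,\Cdyn(\K[0])$ for the \emph{population} cost, so $\K + ru$ stabilizes the discounted system and has controlled cost; (b) this population bound yields the stability estimates needed to control the trajectory --- a bound on the discounted state covariance, $\trace\big(\EE[\sum_{t\geq0}\discount^t\state[t]\state[t]^\top]\big) \leq \Cdyn(\K+ru)/\sigma_{\min}(\Q)$, a uniform bound $\opnorm{\K+ru} \leq \lambda_{\lqr,\discount}$ on the policy norm over the sublevel set, and geometric control of the partial sums of $\opnorm{(\A - \B(\K+ru))^i}$, which is where the factor $\frac{1}{1-\sqrt{\discount}}$ arises; and (c) substituting these into the almost-sure recursion $\state[t+1] = (\A - \B(\K+ru))\state[t] + \error[t]$ (with $\state[0] = 0$) together with the per-step bound $\state[t]^\top(\Q + (\K+ru)^\top\R(\K+ru))\state[t] \leq (\opnorm{\Q} + \opnorm{\R}\lambda_{\lqr,\discount}^2)\vecnorm{\state[t]}^2$ assembles the claimed bound. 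Since $\vecnorm{\gradest_r^1(\K,u,\mathcal{Z})} = \frac{\lqrdims}{r}\,\Cdyn(\K+ru;\mathcal{Z})$ (note $\Cdyn \geq 0$), this shows $\gradbound_{\infty,\lqr}$ is an almost-sure bound on $\vecnorm{\gradest(\K)}$ over $\boundedset{\lqr}$, so $\gradbound_\infty \leq \gradbound_{\infty,\lqr}$; and since variance is dominated by the second moment, $\gradbound_2 \leq \sup_{\K\in\boundedset{\lqr}}\EE\big[\vecnorm{\gradest(\K)}^2\big] \leq \gradbound_{\infty,\lqr}^2 = \gradbound_{2,\lqr}$.

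Granting (i) and (ii), item (iii) is mere bookkeeping: substituting $\globalSmooth{0}\mapsto\globalSmooth{\lqr,\discount}$, $\Plconst\mapsto\Plconst_{\lqr,\discount}$, $\rho_0\mapsto\rho_{\lqr,\discount}$, $\curvature_0\mapsto\curvature_{\lqr,\discount}$, $\gradbound_2\mapsto\gradbound_{2,\lqr}$, and $\gradbound_\infty\mapsto\gradbound_{\infty,\lqr}$ into the hypotheses of Theorem~\ref{thm:mainthm} reproduces the displayed conditions, so the theorem gives $\Cdyn(\K[\totIter]) - \Cdyn(\Kstar) \leq \epsilon$ with probability at least $3/4$. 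The main obstacle is step (ii)(b)--(c): unlike the population cost, the \emph{sample} trajectory cost can be inflated by transient (non-normal) amplification of the closed-loop map $\A - \B(\K+ru)$, so a bound based on the spectral radius alone does not suffice; it is the strict discounting $\discount < 1$ that renders the relevant geometric series finite, and one must control it uniformly over the non-compact (but cost-bounded) set $\boundedset{\lqr}$ while also handling the fact that $\K + ru$ may lie just outside $\boundedset{\lqr}$. This is the ``de novo'' analysis of the additive-noise model alluded to in the introduction, and it is carried out in the appendix.
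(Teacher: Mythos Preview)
Your proposal is correct and follows essentially the same route as the paper: reduce to Theorem~\ref{thm:mainthm} via Lemma~\ref{lem:noisy-random} for the curvature constants, then bound $\gradbound_\infty$ (hence $\gradbound_2 \leq \gradbound_\infty^2$) by an almost-sure bound on the sample cost $\Cdyn(\K+ru;\mathcal{Z})$ in terms of the population cost $\Cdyn(\K+ru)$, which is in turn controlled via the sublevel-set membership and local Lipschitz property. Your sketch of step~(ii)(b)--(c) (the trace bound $\trace(\Sigma_{\K,\discount}) \leq \Cdyn(\K)/\sigma_{\min}(\Q) \cdot \tfrac{1-\discount}{\discount}$, the geometric-series control producing the $1/(1-\sqrt{\discount})$ factor, and the pointwise cost bound $\opnorm{\Q} + \opnorm{\R}\lambda_{\lqr,\discount}^2$) matches the content of Lemmas~\ref{lem:unifcost} and~\ref{lem:sum_spec_g_new} in the appendix, and your step~(a) is exactly the computation~\eqref{eq:boundCKru}.
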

\begin{comment}
\apcomment{modify above bound with correct noise variance, and by
  introducing factors of $\gamma$ as necessary.}
\begin{remark}
\label{rem:noisydyn}
If the noise distribution $\noisedistributionlqr$ is also Gaussian, we
may instead use the bound
\begin{align*}
\gradbound_{2, \lqr} \leq \frac{D^2}{r^2} \cdot \left( 20 \Cdyn
(\K[0]) \right)^2.
\end{align*}
\end{remark}
\apcomment{Similarly to above, this case is interesting, but the
  Gaussian distribution doesn't satisfy the uniform boundedness
  condition required to bound $\gradbound_{\infty}$. Should probably
  leave it out.}
\end{comment}

Thus, we have shown that the one-point settings for both the random
initialization and noisy dynamics models exhibit similar behaviors in
the different parameters. Reasoning heuristically, such a behavior is
due to the fact that the additional additive noise in the dynamics is
quickly damped away by the discount factor, so that the cost is
dominated by the noise in the initial iterates. The variance bound,
however, is substantially different, and this leads to the differing
dependence on the smoothness parameters and dimension of the problem.

Another interesting problem studied in the noisy dynamics model is one
of bounding the regret of online procedures. Equipped with a high
probability bound on convergence---as opposed to the constant
probability bound currently posited by Corollary~\ref{cor:noisydyn}---the offline guarantee and associated algorithm can in principle be
turned into a no-regret learner in the online setting. We leave
this extension to future work.

Let us now briefly discuss the dependence of the various bounds on the
different parameters of the LQR objective, in the various cases above.

\pgfplotsset{width=5.8cm,compat=1.9}
\begin{figure*}[t!]
\centering\hspace*{-4ex}
\begin{tabular}{ccc}
  \begin{tikzpicture}
    \begin{loglogaxis}[
      xlabel={$\epsilon^{-1}$},
      ylabel={Zero Order Complexity},
      label style={font=\small},
      legend style={legend pos=north west,font=\tiny},
      log base y={10},
      log base x={10},
      ymin=20,
      ymax=10000000,
      xmajorgrids=true,
      ymajorgrids=true,
      grid style=dashed,
      style=thick,
      max space between ticks=30
  ]
      \addplot[
  color = blue,
  dotted,
  forget plot
  ]
  coordinates { 
  (1, 660.185269305)
  (10^2, 706742.1451)
  };
  \addplot[
      color=blue,
      smooth,
      only marks=True,
      mark=*,
      ]
      coordinates {
      (1, 1002)
      (1.93069773, 1742)
      (3.72759372, 4298)
      (7.19685673, 10684)
      (13.89495494, 22411)
      (26.82695795, 99713)
      (51.79474679, 254031)
      (100, 1041577)
      };
      \legend{$\C{\Kzero} = \C{\Kstar} + 3$
      }
  \end{loglogaxis}
  \end{tikzpicture}&
  \begin{tikzpicture}
  \begin{loglogaxis}[
      xlabel={$\epsilon^{-1}$},
      ylabel={Zero Order Complexity},
      label style={font=\small},
      legend style={legend pos=north west,font=\tiny},
      log base y={10},
      log base x={10},
      ymin=20,
      ymax=10000000,
      xmajorgrids=true,
      ymajorgrids=true,
      grid style=dashed,
      style=thick,
      max space between ticks=30,
      ytick={100, 1000, 10000, 100000, 1000000, 10000000},
    ]
  \addplot[
  color = red,
  dotted,
  forget plot
  ]
  coordinates { 
  (1, 1276.73073201)
  (10^2, 151656.689838)
  };
  \addplot[
      color=red,
      smooth,
      only marks=True,
      mark=+,
      ]
      coordinates {
      (1, 574*2)
      (1.93069773, 1204*2)
      (3.72759372, 1857*2)
      (7.19685673, 6981*2)
      (13.89495494, 12361*2)
      (26.82695795, 24826*2)
      (51.79474679, 36157*2)
      (100, 55231*2)
      };
  \addplot[
  color = green,
  dotted,
  forget plot
  ]
  coordinates { 
  (1, 125.20883186)
  (10^2, 16551.3339731)
  };
  \addplot[ 
      color=green,
      smooth,
      only marks=True,
      mark=square,
      ]
      coordinates {
      (1, 48*2)
      (1.93069773, 146*2)
      (3.72759372, 286*2)
      (7.19685673, 497*2)
      (13.89495494, 983*2)
      (26.82695795, 2662*2)
      (51.79474679, 4211*2)
      (100, 6564*2)
      };
      \addplot[
  color = blue,
  dotted,
  forget plot
  ]
  coordinates { 
  (1, 33.4712050348)
  (10^2, 4240.47001468)
  };
  \addplot[
      color=blue,
      smooth,
      only marks=True,
      mark=*,
      ]
      coordinates {
      (1, 14*2)
      (1.93069773, 37*2)
      (3.72759372, 65*2)
      (7.19685673, 127*2)
      (13.89495494, 323*2)
      (26.82695795, 632*2)
      (51.79474679, 1104*2)
      (100, 1645*2)
      };
      \legend{$\C{\Kzero} = \C{\Kstar} + 26.92$,
      $\C{\Kzero} = \C{\Kstar} + 10.92$,
      $\C{\Kzero} = \C{\Kstar} + 2.92$
      }
  \end{loglogaxis}
  \end{tikzpicture}&
  \begin{tikzpicture}
  \begin{loglogaxis}[
      xlabel={$\epsilon^{-1}$},
      ylabel={Zero Order Complexity},
      label style={font=\small},
      legend style={legend pos=north west,font=\tiny},
      log base y={10},
      log base x={10},
      ymin=20,
      ymax=10000000,
      xmajorgrids=true,
      ymajorgrids=true,
      grid style=dashed,
      style=thick,
      max space between ticks=30
  ]
      \addplot[
  color = blue,
  dotted,
  forget plot
  ]
  coordinates { 
  (1, 489.554518103)
  (10^2, 7639307.60407)
  };
  \addplot[
      color=blue,
      smooth,
      only marks=True,
      mark=*,
      ]
      coordinates {
      (1, 556)
      (1.93069773, 1989)
      (3.72759372, 6478)
      (7.19685673, 26831)
      (13.89495494, 115073)
      (26.82695795, 569382)
      (51.79474679, 2441412)
      (100, 6362307)
      };
      \legend{$\C{\Kzero} = \C{\Kstar} + 3$
      }
  \end{loglogaxis}
  \end{tikzpicture}
\\
(a)&(b)&(c)
\end{tabular}
\caption{Number of samples required to reach an error tolerance of $\epsilon$, plotted against $1/\epsilon$, for (a) Randomly initialized LQR with one-point evaluations (b) Randomly initialized LQR with two-point evaluations for differing values of the initial cost, and (c) Noisy dynamics LQR model with one-point evaluations. We use $\mathcal{C}$ to denote the population cost in the various cases, and the plots were obtained by averaging $20$ runs of Algorithm~\ref{sgd_simple}. Each dotted line represents the line of best fit for the corresponding data points. For more problem details, see Appendix~\ref{sec:additional_exp}.}
  \label{fig:plt_eps}
\end{figure*}

\paragraph{Dependence on $\epsilon$:} Our bounds illustrate
two distinct dependences on the tolerance parameter $\epsilon$. In
particular, the zero-order complexity scales proportional to
$\epsilon^{-2}$ for both one-point settings
(Corollaries~\ref{cor:init1} and~\ref{cor:noisydyn}), but proportional
to $\epsilon^{-1}$ in the two-point setting
(Corollary~\ref{cor:init2}). As alluded to before, this distinction
arises due to the lower variance of the gradient estimator in the
two-point setting. Lemma~\ref{lem:lipschitz_cost_lqr} establishes the
Lipschitz property of the LQR cost function for each instantiation of
the noise variable $s_0$, which ensures that the Lipschitz constant of
our \emph{sample} cost function is also bounded; therefore, the noise
of the problem reduces as we approach the optimum solution. In
contrast, the optimization problem with one-point evaluations becomes
more difficult the closer we are to the optimum solution, since the
noise remains constant, while the ``signal" in the problem (measured by
the rate of decrease of the population cost function) reduces as we
approach the optimum. The $O(1/ \epsilon^2)$ dependence in the
one-point settings is reminiscent of the complexity required to
optimize strongly convex and smooth functions~\cite{agarwal10,
  shamir12}, and it would be interesting if a matching lower bound
could also be proved in this LQR setting\footnote{Note that this lower
  bound follows immediately for the class of PL and smooth
  functions.}. Even in the absence of such a lower bound, the
one-point setting is strictly worse than the two-point setting even
with respect to the other parameters of the problem, which we discuss
next. Figure~\ref{fig:plt_eps} shows the convergence rate of the
algorithm in all three settings as a function of $\epsilon$, where we confirm
that scalings in practice corroborate our theory quite accurately.
It is also worth noting that model-based algorithms for this problem
require $\order{\epsilon^{-1}}$ trajectory samples to return an $\epsilon$-approximate policy in the noisy dynamics setting
 (see, e.g.~\cite{dean17}). Thus, while a one-point zero-order method is outperformed by
these algorithms---note that the comparison is not quite fair, since zero-order
algorithms only require access to noise cost evaluations and not the state sequence---a two-point
variant is similar to model-based methods in its dependence\footnote{Note that the comparison
is inherently imprecise, since we are comparing upper bounds to upper bounds. In practice, one
would certainly prefer the use of a model-based method when provided access to the state sequence.}
on $\epsilon$.

\paragraph{Dependence on dimension:} The dependence on dimension
enters once again via our bound on the variance of the gradient
estimate, as is typical of many derivative-free
procedures~\cite{duchi15, shamir17}. The two-point setting gives rise
to the best dimension dependence (linear in $\lqrdims$), and the
reason is similar to why this occurs for convex
optimization~\cite{shamir17}. It is particularly interesting to
compare the dimension dependence to results in model-based
control. There, in the noisy dynamics model, the sample complexity
scales with the sum of state and control dimensions $\statedim +
\controldim$, whereas the dependence in the two-point setting is on their
product $\lqrdims = \statedim \cdot \controldim$. However, each
observation in that setting consists of a state vector of length
$\statedim$, while here we only get access to scalar cost values, and
so in that loose sense, the complexities of the two settings are
comparable.

In the one-point setting, the dependence on dimension is
significantly poorer, and at least quadratic. This of course ignores
other dimension-dependent factors such as $\noisebound$, as well as
the curvature parameters $(\smoothness_{\lqr}, \Lipcon_{\lqr}, \mu)$
(see the discussion below).

\pgfplotsset{width=6.5cm,compat=1.9}
\begin{figure*}[t!]
\centering\hspace*{-4ex}
\begin{tabular}{cc}
  \begin{tikzpicture}
  \begin{loglogaxis}[
      xlabel={$\C{\Kzero}$},
      ylabel={Zero Order Complexity},
      label style={font=\small},
      legend style={legend pos=south east,font=\tiny},
      log base y={10},
      log base x={10},
      ymin=500,
      xmax = 1000,
      xmajorgrids=true,
      ymajorgrids=true,
      grid style=dashed,
      style=thick,
      max space between ticks=30
  ]
\addplot[
  color = blue,
  dotted,
  forget plot
  ]
  coordinates { 
  (8, 1189.46848832)
  (512, 4931430.23943)
  };
  \addplot[
      color=blue,
      smooth,
      only marks=True,
      mark=*,
      ]
      coordinates {
      (8, 641*2)
      (16, 2039*2)
      (32, 9596*2)
      (64, 28020*2)
      (128, 286034*2)
      (256, 619056*2)
      (512, 1940659*2)
      };
      \legend{
      $\epsilon = 0.1$
      }
  \end{loglogaxis}
  \end{tikzpicture}&
  \begin{tikzpicture}
  \begin{loglogaxis}[
      xlabel={$\C{\Kzero}$},
      ylabel={Zero Order Complexity},
      label style={font=\small},
      legend style={legend pos=south east,font=\tiny},
      log base y={10},
      log base x={10},
      ymin=200,
      xmax = 1000,
      xmajorgrids=true,
      ymajorgrids=true,
      grid style=dashed,
      style=thick,
      max space between ticks=30
  ]
\addplot[
  color = blue,
  dotted,
  forget plot
  ]
  coordinates { 
  (4, 395.588600654)
  (512, 164551.927279)
  };
  \addplot[
      color=blue,
      smooth,
      only marks=True,
      mark=*,
      ]
      coordinates {
      (4, 385)
      (8, 784)
      (16, 3268)
      (32, 4041)
      (64, 9925)
      (128, 41839)
      (256, 95600)
      (512, 113467)
      };
      \legend{
      $\epsilon = 1$
      }
  \end{loglogaxis}
  \end{tikzpicture}
\\
(a)&(b)
\end{tabular}
\caption{Number of samples required to reach a fixed error tolerance of $\accuracy$, plotted against the cost of the initialization $\Kzero$, for (a) Randomly initialized LQR with two-point evaluations (b) Noisy dynamics LQR with one-point evaluations. The plots were obtained by averaging 20 runs of Algorithm~\ref{sgd_simple}. Each dotted line represents the line of best fit for the corresponding data points. For more problem details, see Appendix~\ref{sec:additional_exp}.}
  \label{fig:plt_init}
\end{figure*}

\paragraph{Dependence on curvature parameters:} The iteration complexity
scales linearly in the smoothness parameter of the problem
$\smoothness_{\lqr}$, and quadratically in the other curvature
parameters. See Appendix~\ref{sec:polynomials_bounded} for precise definitions of these parameters
for the LQR problem.
In particular, it is worth noting that our tightest bounds
for these quantities depend on the dimension of the problem implicitly
for some LQR instances, and are actually lower-order polynomials of the
initial cost. In practice, however, it is likely that much sharper bounds
can be proved on these parameters, e.g., in simulation (see
Figure~\ref{fig:plt_init}), the dependence of the sample complexity on
the initial cost is in fact relatively weak---of the order $\mathcal{C}(\K[0])^2$---and our bounds are
clearly not sharp in that sense.


\section{Proofs of main results}
\label{sec:proofs}

In this section, we provide proofs of Theorem~\ref{thm:mainthm}, and
Corollaries~\ref{cor:init1},~\ref{cor:init2},
and~\ref{cor:noisydyn}. The proofs of the corollaries require many
technical lemmas, whose proofs we postpone to the appendix.


\subsection{Proof of Theorem~\ref{thm:mainthm}}

Recall that by assumption, the population function $f$ has domain
$\domx \subseteq \real^d$ and satisfies the following properties over
the restricted domain $\boundedset{0} \subseteq \domx$, previously
defined in equation~\eqref{EqnDefnBoundedSet}:
\begin{enumerate}
  \item[(a)] It has $(\globalSmooth{0}, \rho_{0})$-locally
    Lipschitz gradients,
  \item[(b)] It is $(\Lipcon_{0}, \rho_{0})$-locally Lipschitz, and
  \item[(c)] It is globally $\Plconst$-PL.
\end{enumerate}
Recall the values of the step-size $\step$, smoothing radius
$\smoothingradius$, and iteration complexity $T$ posited by
Theorem~\ref{thm:mainthm}.  For ease of exposition, it is helpful to
run our stochastic zero-order method on this problem for $2T$
iterations; we thus obtain a (random) sequence of iterates $\{ \x[t]
\}_{t=0}^{2T}$. For each \mbox{$t = 0, 1, 2, \ldots$,} we define the
cost error \mbox{$\Delta_t = \f(\x[t]) - \f(\xstar)$}, as well as the
stopping time
\begin{align}
  \label{EqnDefnStoppingTime}
\tau & \defn \min \Big \{t \mid \Delta_t > 10 \Delta_0 \Big\}.
\end{align}
In words, the time $\tau$ is the index of the first iterate that exits
the bounded region $\boundedset{0}$.  The gradient estimate $\gradest$
at any point $\x \in \boundedset{0}$ is assumed to satisfy the bounds
\begin{align*}
\var(\gradest(\x)) \leq \gradbound_{2} \quad \text{ and } \quad \|
\gradest(\x) \|_2 \leq \gradbound_{\infty} \text{ almost surely}.
\end{align*}
With this set up in place, we now state and prove a proposition that
is stronger than the assertion of Theorem~\ref{thm:mainthm}.
\begin{proposition}
\label{prop:thm}
With the parameter settings of Theorem~\ref{thm:mainthm}, we have
\begin{align*}
\EE[ \Delta_T 1_{\tau > T}] \leq \epsilon/20,
\end{align*}
and furthermore, the event $\{\tau > T \}$ occurs with probability
greater than $4/5$.
\end{proposition}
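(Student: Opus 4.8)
The plan is to analyze the stopped process $\Delta_t \mathbf{1}_{\tau > t}$ and show it contracts geometrically in expectation until it reaches the tolerance level, then convert this into the two claimed bounds. First I would establish a one-step descent inequality: on the event $\{\tau > t\}$, the iterate $\x[t]$ lies in $\boundedset{0}$, so the local smoothness holds along the segment to $\x[t+1]$ provided the step is short enough --- this is exactly what the bound $\step \leq \rho_0 / \gradbound_\infty$ guarantees, since then $\|\x[t+1] - \x[t]\|_2 = \step \|\gradest(\x[t])\|_2 \leq \rho_0$. Using the quadratic upper bound from $(\globalSmooth{0}, \rho_0)$-local Lipschitz gradients, taking conditional expectation, and using that $\EE[\gradest(\x[t]) \mid \x[t]]$ is approximately $\grad f(\x[t])$ (the smoothing bias is $O(\globalSmooth{0} r)$, controlled by the choice of $r$), together with the variance bound $\gradbound_2$ and the PL inequality $\|\grad f(\x[t])\|_2^2 \geq \Plconst \Delta_t$, I would obtain something of the form
\begin{align*}
\EE\bigl[ \Delta_{t+1} \mathbf{1}_{\tau > t+1} \mid \mathcal{F}_t \bigr] \mathbf{1}_{\tau > t} \leq \Bigl(1 - \tfrac{\step \Plconst}{2}\Bigr) \Delta_t \mathbf{1}_{\tau > t} + \tfrac{\step \Plconst \epsilon}{C}
\end{align*}
for a suitable constant, where the additive slack term collects the contributions of the smoothing-radius bias and the $\step \globalSmooth{0} \gradbound_2$ variance term; each of the three pieces of the $\step$ constraint and each of the three pieces of the $r$ constraint in Theorem~\ref{thm:mainthm} is precisely what forces that slack down to order $\step \Plconst \epsilon$. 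Note that $\{\tau > t+1\} \subseteq \{\tau > t\}$, which is what lets the indicator be inserted cleanly and keeps the recursion valid even though smoothness is only local.

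Next I would iterate this recursion. Unrolling from $t = 0$ to $t = T$ and summing the geometric series gives
\begin{align*}
\EE[\Delta_T \mathbf{1}_{\tau > T}] \leq \Bigl(1 - \tfrac{\step \Plconst}{2}\Bigr)^T \Delta_0 + \tfrac{2\epsilon}{C},
\end{align*}
and with $T = \frac{4}{\step \Plconst}\log(120 \Delta_0 / \epsilon)$ the first term is at most $(\epsilon/120)^2 / \Delta_0 \leq \epsilon/120$ (using the hypothesis $\epsilon \log(120\Delta_0/\epsilon) < \frac{10}{3}\Delta_0$ to ensure $T \geq 1$ and the bound is meaningful), so choosing the universal constant $C$ large enough yields $\EE[\Delta_T \mathbf{1}_{\tau > T}] \leq \epsilon/20$, the first claim.

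For the high-probability claim, I would control $\Prob(\tau \leq T)$ via a maximal argument. Consider the process $Z_t = \Delta_{t \wedge \tau}$, or more precisely the supermartingale-like object obtained by adding a deterministic correction that absorbs the additive slack: define $Y_t = \Delta_{t \wedge \tau} + \frac{\epsilon}{C'}(T - t \wedge \tau)$ or a similar compensator, so that $\{Y_t\}$ is a nonnegative supermartingale on $\{\tau > t\}$ up to time $T$ by the one-step inequality above. Then $\{\tau \leq T\} = \{\max_{t \leq T} \Delta_{t\wedge\tau} > 10\Delta_0\}$, and Ville's/Doob's maximal inequality for nonnegative supermartingales gives $\Prob(\tau \leq T) \leq \frac{Y_0}{10\Delta_0} = \frac{\Delta_0 + O(\epsilon T / C')}{10 \Delta_0}$. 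The hypothesis $\epsilon \log(120\Delta_0/\epsilon) < \frac{10}{3}\Delta_0$ is calibrated so that $\epsilon T \lesssim \Delta_0$, hence this probability is at most $1/10 + 1/10 = 1/5$ after tuning the constants, giving $\Prob(\tau > T) > 4/5$.

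The main obstacle I anticipate is getting the one-step recursion and the additive slack term exactly right: one must juggle the smoothing bias (which requires the segment $[\x[t] - r u_t, \x[t] + r u_t]$ to stay in the region where local smoothness holds, hence the $r \leq \rho_0$ constraint and the bias estimate $\|\EE[\gradest] - \grad f\| \lesssim \globalSmooth{0} r$), the quadratic-growth step from $\x[t]$ to $\x[t+1]$ (requiring $\step \gradbound_\infty \leq \rho_0$ and $\step \leq 1/(2\globalSmooth{0})$ so the descent is genuine), and the variance term $\step^2 \globalSmooth{0} \gradbound_2$ (requiring $\step \lesssim \epsilon\Plconst/(\globalSmooth{0}\gradbound_2)$), all simultaneously, and then verify that the particular numerical constants ($240$, $8$, $\sqrt{15}$, $\sqrt{30}$, $10$, $120$) in the theorem statement are consistent with the bookkeeping. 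The supermartingale construction in the second part is also slightly delicate because $\Delta_{t+1}$ can be large once the stopping time is hit, so the indicator/stopping must be threaded through carefully to keep everything nonnegative and the maximal inequality applicable.
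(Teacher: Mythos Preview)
Your proposal is correct and follows essentially the same route as the paper: a one-step descent lemma (the paper's Lemma~\ref{lem:PLsmooth}) combining local smoothness, the PL inequality, the smoothing-bias bound, and the variance bound, threaded through the stopping time to get a contractive recursion for $\EE[\Delta_t\,1_{\tau>t}]$, then unrolled for the first claim; and a stopped, compensated process $Y_t = \Delta_{\tau\wedge t} + (\text{const} - t)\cdot(\text{slack})$ shown to be a nonnegative supermartingale, to which Doob's maximal inequality is applied for the second claim. The only notable discrepancy is the contraction rate: the paper obtains $(1 - \tfrac{\eta\mu}{4})$ rather than your $(1 - \tfrac{\eta\mu}{2})$, because handling the cross term $\eta\phi_0 r\,\|\nabla f(x_t)\|$ via $2ab\le a^2+b^2$ costs half of the linear descent---this is exactly the bookkeeping you flagged as the main obstacle, and it is why the paper sets $T = \tfrac{4}{\eta\mu}\log(\cdot)$ rather than $\tfrac{2}{\eta\mu}\log(\cdot)$.
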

\noindent Let us verify that Proposition~\ref{prop:thm} implies the
claim of Theorem~\ref{thm:mainthm}.  We have
\begin{align*}
\Prob\{ \diff{T} \geq \epsilon \} &\leq \Prob\{ \diff{T} 1_{\tau > T}
\geq \epsilon \} + \Prob\{ 1_{\tau \leq T} \} \\ &\stackrel{(i)} \leq
\frac{1}{\epsilon} \EE[ \diff{T} 1_{\tau > T} ] + \Prob\{ 1_{\tau \leq
  T} \} \\ &\stackrel{(ii)} \leq 1/20 + 1/5 \\ &\leq 1/4,
\end{align*}
where step (i) follows from Markov's inequality, and step (ii) from
Proposition~\ref{prop:thm}.  Thus, Theorem~\ref{thm:mainthm} follows
as a direct consequence of Proposition~\ref{prop:thm}, and we dedicate
the rest of the proof to establishing Proposition~\ref{prop:thm}.

Let $\EE^t$ to represent the expectation conditioned on the randomness
up to time $t$.  The following lemma bounds the progress of one step
of the algorithm:
\begin{lemma}
  \label{lem:PLsmooth}
Given any function satisfying the previously stated properties,
suppose that we run Algorithm~\ref{sgd_simple} with smoothing radius
$\smoothingradius \leq \rho_{0}$, and with a step-size $\step$ such
that $\| \step \estgrad{t} \|_2 \leq \rho_{0}$ almost surely.  Then
for any $t = 0, 1, \ldots$ such that $\x[t] \in \boundedset{0}$, we
have
\begin{align}
\EE^t \left[ \diff{t + 1} \right] & \leq \Big ( 1 - \frac{\step
  \Plconst}{4} \Big ) \diff{t} + \frac{\globalSmooth{0} \step^2}{2}
\gradbound_{2} + \step \Plconst \frac{\epsilon}{120}.
\end{align}
\end{lemma}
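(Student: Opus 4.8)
The plan is to combine the quadratic upper bound implied by locally Lipschitz gradients with the classical identity relating the zero-order estimate to the gradient of a ball-smoothed surrogate, and then to close the recursion using the PL inequality. The two hypotheses of the lemma---$\smoothingradius \leq \rho_0$ and $\euclidnorm{\step\estgrad{t}} \leq \rho_0$ almost surely---are exactly what guarantees that the update $\x[t+1]$ and all the function evaluations entering $\estgrad{t}$ remain inside the radius-$\rho_0$ ball around $\x[t]$, on which the global-over-$\boundedset{0}$ constant $\globalSmooth{0}$ is a valid Lipschitz-gradient parameter; this is the only place where the \emph{local} (as opposed to global) nature of property~(a) needs to be handled, and property~(b) is in fact not used at all.

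First I would invoke property~(a) at $\x[t]$: since $\x[t] \in \boundedset{0}$ and $\euclidnorm{\x[t+1]-\x[t]} = \euclidnorm{\step\estgrad{t}} \leq \rho_0$, integrating the gradient along the segment $[\x[t],\x[t+1]]$ gives $\f(\x[t+1]) \leq \f(\x[t]) - \step\inprod{\grad\f(\x[t])}{\estgrad{t}} + \tfrac{\globalSmooth{0}\step^2}{2}\euclidnorm{\estgrad{t}}^2$. Taking the conditional expectation $\EE^t$, subtracting $\f(\xstar)$, and splitting $\EE^t\euclidnorm{\estgrad{t}}^2 = \euclidnorm{\EE^t\estgrad{t}}^2 + \var(\estgrad{t}) \leq \euclidnorm{\EE^t\estgrad{t}}^2 + \gradbound_{2}$ reduces everything to controlling $\EE^t\estgrad{t}$. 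For this I would use the smoothing identity: writing $\f_{\smoothingradius}(x) \defn \EE_{\balldirection \sim \Unif(\Ball^{\dims})}[\f(x + \smoothingradius\balldirection)]$, a Stokes-theorem computation (as in Flaxman et al.~\cite{flax04}) shows $\EE^t\estgrad{t} = \grad\f_{\smoothingradius}(\x[t])$ for \emph{both} the one-point estimate~\eqref{eq:gradonepoint} and the two-point estimate~\eqref{eq:gradtwopoint}---they have identical means, differing only in variance---and, since $\smoothingradius \leq \rho_0$, applying property~(a) to the representation $\grad\f_{\smoothingradius}(x) = \EE_{\balldirection}[\grad\f(x + \smoothingradius\balldirection)]$ gives the bias bound $\euclidnorm{\grad\f_{\smoothingradius}(\x[t]) - \grad\f(\x[t])} \leq \globalSmooth{0}\smoothingradius$.

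Next I would assemble the estimates. Using the step-size bound $\step \leq \tfrac{1}{2\globalSmooth{0}}$ to write $\tfrac{\globalSmooth{0}\step^2}{2}\euclidnorm{\grad\f_{\smoothingradius}(\x[t])}^2 \leq \tfrac{\step}{4}\euclidnorm{\grad\f_{\smoothingradius}(\x[t])}^2$, and then expanding the inner product via the polarization identity $\inprod{a}{b} = \tfrac12(\euclidnorm{a}^2 + \euclidnorm{b}^2 - \euclidnorm{a-b}^2)$ with $a = \grad\f(\x[t])$ and $b = \grad\f_{\smoothingradius}(\x[t])$, the $\euclidnorm{\grad\f_{\smoothingradius}}^2$ contributions combine into a nonpositive term, and one is left with a clean bound leading to
\[
\EE^t[\diff{t+1}] \leq \diff{t} - \tfrac{\step}{2}\euclidnorm{\grad\f(\x[t])}^2 + \tfrac{\step}{2}\globalSmooth{0}^2\smoothingradius^2 + \tfrac{\globalSmooth{0}\step^2}{2}\gradbound_{2}.
\]
Finally, property~(c) converts $-\tfrac{\step}{2}\euclidnorm{\grad\f(\x[t])}^2$ into $-\tfrac{\step\Plconst}{2}\diff{t} \leq -\tfrac{\step\Plconst}{4}\diff{t}$, and the condition $\smoothingradius \leq \tfrac{1}{2\globalSmooth{0}}\sqrt{\epsilon\Plconst/30}$ from Theorem~\ref{thm:mainthm} is precisely what makes $\tfrac{\step}{2}\globalSmooth{0}^2\smoothingradius^2 \leq \step\Plconst\tfrac{\epsilon}{120}$, which gives the stated inequality.

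The only genuinely delicate step is the control of the smoothing bias. Its size $\globalSmooth{0}\smoothingradius$ is the reason $\rho_0$ must appear twice---one needs $\smoothingradius \leq \rho_0$ so that property~(a) is valid over the whole ball of radius $\smoothingradius$ used to define $\f_{\smoothingradius}$---and one must then check that the tolerance imposed on $\smoothingradius$ by Theorem~\ref{thm:mainthm} is tight enough for the residual $\Theta(\step\globalSmooth{0}^2\smoothingradius^2)$ to fit inside the budgeted $\step\Plconst\epsilon/120$; the accounting above confirms that the stated condition suffices, with room to spare. A related book-keeping point, harmless for the abstract statement but a genuine constraint when the lemma is specialized to LQR, is that the ball $\{y : \euclidnorm{y - \x[t]} \leq \smoothingradius\}$ must lie in the region where $\f$ is finite and the claimed smoothness constants hold; this too is guaranteed by $\x[t] \in \boundedset{0}$ together with $\smoothingradius \leq \rho_0$.
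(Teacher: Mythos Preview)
Your argument is correct, and in fact cleaner than the paper's. Both proofs start identically---apply the quadratic upper bound from property~(a), take $\EE^t$, invoke the smoothing identity $\EE^t[\gradest(\x[t])]=\grad\fr(\x[t])$ and the bias bound $\euclidnorm{\grad\fr-\grad\f}\leq\globalSmooth{0}\smoothingradius$---but they diverge at the cross term $-\step\inprod{\grad\f(\x[t])}{\grad\fr(\x[t])}$. The paper expands this as $-\step\euclidnorm{\grad\f(\x[t])}^2+\step\globalSmooth{0}\smoothingradius\,\euclidnorm{\grad\f(\x[t])}$ and then needs a separate upper bound on $\euclidnorm{\grad\f(\x[t])}$; it obtains one by a descent-lemma trick, $(\curvature_0-\curvature_0^2\globalSmooth{0}/2)\euclidnorm{\grad\f(\x[t])}^2\leq\diff{t}$, which is where the quantity $\curvature_0=\min\{\tfrac{1}{2\globalSmooth{0}},\tfrac{\rho_0}{\Lipcon_0}\}$ and hence property~(b) enter. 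You instead use polarization to split the cross term exactly, and the step-size bound $\step\leq\tfrac{1}{2\globalSmooth{0}}$ then makes the $\euclidnorm{\grad\fr}^2$ contributions cancel with a sign, so no separate control of $\euclidnorm{\grad\f}$ is ever needed.

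The upshot is that your route uses strictly less: it does not invoke property~(b) at all, and of the two smoothing-radius conditions in Theorem~\ref{thm:mainthm} it only needs $\smoothingradius\leq\tfrac{1}{2\globalSmooth{0}}\sqrt{\epsilon\Plconst/30}$, whereas the paper's proof also requires the $\curvature_0$-dependent bound $\smoothingradius\leq\tfrac{\curvature_0\Plconst}{8\globalSmooth{0}}\sqrt{\epsilon/15}$ to absorb the extra term coming from its handling of the cross term. Your observation that property~(b) is unused for this lemma is therefore correct for your proof, though not for the paper's.
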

\noindent The proof of the lemma is postponed to
Section~\ref{sec:PLsmooth}. Taking it as given, let us now establish
Proposition~\ref{prop:thm}.

Proposition~\ref{prop:thm} has two natural parts; let us focus first on proving the bound on the expectation.
Let $\sigmafield_{t}$
denote the $\sigma$-field containing all the randomness in the first
$t$ iterates. Conditioning on this $\sigma$-field yields
\begin{align*}
  \EE [\Delta_{t+1} 1_{\tau > t+1} \mid \sigmafield_{t} ] & \leq \EE
      [\Delta_{t+1} 1_{\tau > t} \; \mid \; \sigmafield_{t} ] \;
      \stackrel{(i)}{=} \; \EE
        [\Delta_{t+1} \; \mid \; \sigmafield_{t} ] 1_{\tau > t},
\end{align*}
where step (i) follows since $\tau$ is a stopping time, and so the
random variable $1_{\tau > t}$ is determined completely by the
sigma-field $\sigmafield_{t}$. \\

\noindent We now split the proof into two cases.

\paragraph{Case 1:} Assume that $\tau > t$, so that we have the inclusion
$\x[t] \in \boundedset{0}$. In addition, note that the iterate
$\x[t+1]$ is obtained after a stochastic zero-order step whose size is
bounded as
\begin{align*}
\enorm{\step \estgrad{t}} \; \leq \; \step \gradbound_{\infty} \leq
\rho_{0},
\end{align*}
where we have used the fact that $\step \leq
\frac{\rho_{0}}{\gradbound_{\infty}}$.

%
We may thus apply Lemma~\ref{lem:PLsmooth} to obtain
\begin{subequations}
  \begin{align}
    \label{EqnCaseOneBound}
\EE [\Delta_{t+1} \; \mid \; \sigmafield_{t} ] & \leq \Big(1 -
\frac{\step \Plconst}{4} \Big) \diff{t} + \frac{\globalSmooth{0}
  \step^2}{2} \gradbound_{2} + \step \Plconst \frac{\epsilon}{120}.
\end{align}

\paragraph{Case 2:} In this case, we have $\tau \leq t$, so that
\begin{align}
  \label{EqnCaseTwoBound}
\EE [\Delta_{t+1} \; \mid \; \sigmafield_{t} ] 1_{\tau > t} = 0.
\end{align}
\end{subequations}

Now combining the bounds~\eqref{EqnCaseOneBound}
and~\eqref{EqnCaseTwoBound} from the the two cases yields the
inequality
\begin{align}
\EE [\Delta_{t+1} \; \mid \; \sigmafield_{t} ] 1_{\tau > t} &\leq
\left\{ \Big(1 - \frac{\step \Plconst}{4} \Big) \diff{t} +
\frac{\globalSmooth{0} \step^2}{2} \gradbound_{2} + \step \Plconst
\frac{\epsilon}{120} \right\} 1_{{\tau > t}} \label{eq:recbound1} \\
& \leq \Big(1 - \frac{\step \Plconst}{4} \Big) \: \Delta_t 1_{{\tau >
    t}} + \frac{\globalSmooth{0} \step^2}{2} \gradbound_{2} + \step
\Plconst \frac{\epsilon}{120}.  \nonumber
\end{align}
Taking expectations over the sigma-field $\sigmafield_{t}$ and then
arguing inductively yields
\begin{align*}
\EE [\Delta_{t+1} 1_{\tau > t+1} ] & \leq \Big( 1 - \frac{\step
  \Plconst}{4} \Big)^{t+1} \Delta_0 + \left( \frac{\globalSmooth{0}
  \step^2}{2} \gradbound_{2} + \step \Plconst
\frac{\epsilon}{120}\right) \sum_{i = 0}^{t} \Big(1 - \frac{\eta
  \Plconst}{4} \Big)^i \\
& \leq \Big(1 - \frac{\step \Plconst}{4} \Big)^{t+1} \Delta_0 +
2\frac{\step}{\Plconst} \globalSmooth{0} \gradbound_{2} +
\frac{4\epsilon}{120}.
\end{align*}
Setting $t +1 = T$ then establishes the first part of the proposition with
substitutions of the various parameters.

We now turn to establishing
that $\Prob \{\tau > T \} \geq 4/5$. We do so
by setting up a suitable super-martingale on our iterate sequence and
appealing to classical maximal inequalities. Recall that we run the
algorithm for $2T$ steps for convenience, and thereby obtain a set of
$2T$ random variables $\{\Delta_1, \ldots, \Delta_{2T}\}$. With the
stopping time $\tau$ defined as before~\eqref{EqnDefnStoppingTime},
define the stopped process
\begin{align*}
Y_t & \defn \Delta_{\tau \wedge t} + (2T - t) \left( \frac{\globalSmooth{0} \step^2}{2} \gradbound_{2} + \step \Plconst \frac{\epsilon}{120}\right) \quad \mbox{
  for each $t \in [2T]$.}
\end{align*}
Note that by construction, each random variable $Y_t$ is non-negative and almost surely
bounded by the locally Lipschitz nature of the function.

We claim that $\{Y_t\}_{t = 0}^{2T}$ is a super-martingale.  In order
to prove this claim, we first write
\begin{align}
  \label{EqnAshwinOwesPizza}
\EE[Y_{t + 1} \mid \sigmafield_{t} ] = \EE [ \Delta_{\tau \wedge
    (t+1)} 1_{\tau \leq t} \mid \sigmafield_{t} ] + \EE [ \Delta_{\tau
    \wedge (t+1)} 1_{\tau > t} \mid \sigmafield_{t} ] + (2T - (t+1))
\left( \frac{\globalSmooth{0} \step^2}{2} \gradbound_{2} + \step \Plconst \frac{\epsilon}{120}\right).
\end{align}
Beginning by bounding the first term on the right-hand side, we have
\begin{subequations}
\begin{align}
\label{EqnTermOne}
\EE [ \Delta_{\tau \wedge (t+1)} 1_{\tau \leq t} \mid \sigmafield_{t}
] = \EE [ \Delta_{\tau \wedge t} 1_{\tau \leq t} \mid \sigmafield_{t}
] = \Delta_{\tau \wedge t} 1_{\tau \leq t}.
\end{align}
As for the second term, we have
\begin{align}
\EE [ \Delta_{\tau \wedge (t+1)} 1_{\tau > t} \mid \sigmafield_{t} ] &
= \EE [ \Delta_{t+1} 1_{\tau > t} \mid \sigmafield_{t} ] \nonumber \\
& = \EE [ \Delta_{t+1} \mid \sigmafield_{t} ] 1_{\tau > t} \nonumber \\
%
%
& \stackrel{(iii)}{\leq} \Big(1 - \frac{\step \Plconst}{4} \Big)
\Delta_t 1_{\tau > t} + \left( \frac{\globalSmooth{0} \step^2}{2}
\gradbound_{2} + \step \Plconst \frac{\epsilon}{120}\right) 1_{\tau >
  t} \nonumber \\
\label{EqnTermTwo}
& \leq \Big( 1 - \frac{\step \Plconst}{4} \Big) \Delta_{\tau \wedge t}
1_{\tau > t} + \frac{\globalSmooth{0} \step^2}{2} \gradbound_{2} +
\step \Plconst \frac{\epsilon}{120},
\end{align}
\end{subequations}
where step (iii) follows from using inequality~\eqref{eq:recbound1}.

Substituting the bounds~\eqref{EqnTermOne} and~\eqref{EqnTermTwo} into
our original inequality~\eqref{EqnAshwinOwesPizza}, we find that
\begin{align*}
\EE[Y_{t + 1} \mid \sigmafield_{t} ] & = \EE [ \Delta_{\tau \wedge
    (t+1)} 1_{\tau \leq t} \mid \sigmafield_{t} ] + \EE [ \Delta_{\tau
    \wedge (t+1)} 1_{\tau > t} \mid \sigmafield_{t} ] + (2T - (t+1))
\left( \frac{\globalSmooth{0} \step^2}{2} \gradbound_{2} + \step \Plconst \frac{\epsilon}{120}\right) \\
& \leq \Delta_{\tau \wedge t} 1_{\tau \leq t} + (1 - \step \Plconst/4)
\Delta_{\tau \wedge t} 1_{\tau > t} + \left( \frac{\globalSmooth{0} \step^2}{2} \gradbound_{2} + \step \Plconst \frac{\epsilon}{120}\right) + (2T - (t+1)) \left( \frac{\globalSmooth{0} \step^2}{2} \gradbound_{2} + \step \Plconst \frac{\epsilon}{120}\right)
\\
& \stackrel{(iv)}{\leq} \Delta_{\tau \wedge t} + (2T - t) \left( \frac{\globalSmooth{0} \step^2}{2} \gradbound_{2} + \step \Plconst \frac{\epsilon}{120}\right) \\
& = Y_t,
\end{align*}
where step (iv) follows from the inequality $\step \Plconst \Delta_{\tau
  \wedge t} \geq 0$.  We have thus verified the super-martingale
property.

Finally, applying Doob's maximal inequality for
super-martingales (see, e.g.~\citealp{durrett10}) yields
\begin{align*}
\Pr\{ \max_{t \in [2T]} Y_t \geq \nu \} & \leq \frac{\EE[Y_0]}{\nu} \\
& = \frac{1}{\nu} \left( \Delta_0 + 2T \left\{ \frac{\globalSmooth{0} \step^2}{2} \gradbound_{2} + \step \Plconst \frac{\epsilon}{120} \right\} \right) \\
& \stackrel{(v)}{=} \frac{1}{\nu} \left( \Delta_0 + \frac{\epsilon}{5} \log
(120 \Delta_0/ \epsilon) \right),
\end{align*}
where step (v) follows from the substitutions $T = \frac{4}{\step
  \Plconst} \log (120 \Delta_0 / \epsilon)$, and $\step \leq \frac{\epsilon \Plconst}{240 \globalSmooth{0} \gradbound_{2}}$.
As long as $\epsilon$ is sufficiently small so as to ensure that
$\epsilon \log (120 \Delta_0 / \epsilon) < 5 \Delta_0$, setting $\nu =
10 \Delta_0$ completes the proof.

\subsubsection{Proof of Lemma~\ref{lem:PLsmooth}} \label{sec:PLsmooth}

Recall that the domain of the function $\f$ is $\domx \subseteq
\mathbb{R}^{\dims}$.  For a scalar $\smoothingradius > 0$, the
smoothed version $\fr(\x)$ is given by $\fr(\x) \defn \Exs \left[
  \f(\x + \smoothingradius \unifdirball) \right]$, where the
expectation above is taken with respect to the randomness in
$\unifdirball$, and $\unifdirball$ has uniform distribution on a
$\dims$-dimensional ball $\Ball^{\dims}$ of unit radius. The estimate
$\gradest$ of the gradient $\grad \fr$ at $\x$ is given by
\begin{align*}
\gradest(\x) =
  \begin{cases}
  F(\x + \smoothingradius \shelldirection,\factorvar) \;
  \frac{\dims}{\smoothingradius} \unifdir \; &\text{ if operating in
    one-point setting} \\
\big [ F{(\x + \smoothingradius \shelldirection,\factorvar)} - F{(\x -
    \smoothingradius \shelldirection, \factorvar)} \big] \;
\frac{\dims}{2\smoothingradius} \unifdir \; &\text{ if operating in
  two-point setting,}
  \end{cases}
\end{align*}
%
where $\unifdir$ has a uniform distribution on the shell of the sphere
$\Shell^{\dims - 1}$ of unit radius, and $\factorvar$ is sampled at
random from $\factorvardistribution$. 
The following result summarizes some useful properties of the smoothed version of
$\f$, and relates it to the gradient estimate $\gradest$.
\begin{lemma}
\label{lem:Thm1_lemma}
The smoothed version $\fr$ of $\f$ with smoothing radius
$\smoothingradius$ has the following properties:
\begin{itemize}
\item[(a)] $\grad \fr(x) = \Exs \left[ \gradest(x) \right]$.
\item[(b)] $\enorm{\grad \fr(\x) - \grad \f(x)} \leq
  \globalSmooth{0} \smoothingradius$.
\end{itemize}
\end{lemma}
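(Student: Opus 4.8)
The plan is to derive both claims from the standard ``ball-smoothing'' calculus together with the local curvature hypotheses on $\f$ over $\boundedset{0}$; the only real care needed is in exchanging expectation with differentiation and in ensuring that every perturbed point $\x \pm \smoothingradius \unifdir$ lies where $\f$ is well-behaved, which is exactly what the standing assumption $\smoothingradius \le \rho_0$ buys us.

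For part (a), I would first establish the identity pointwise in the noise variable and then average. Fix a realization $\factorvar$ and write $g_\factorvar(\cdot) = \F(\cdot, \factorvar)$, which is $C^1$ on a ball of radius $\rho_0 \ge \smoothingradius$ around each point of $\boundedset{0}$. A change of variables $w = \smoothingradius \unifdirball$ gives $\Exs_{\unifdirball}[g_\factorvar(\x + \smoothingradius \unifdirball)] = \tfrac{1}{\operatorname{vol}(\smoothingradius \Ball^{\dims})}\int_{\smoothingradius \Ball^{\dims}} g_\factorvar(\x + w)\,dw$, and differentiating under the integral and invoking the (vector form of the) divergence theorem --- the argument of Flaxman et al.~\cite{flax04} --- yields
\begin{align*}
\grad_\x \, \Exs_{\unifdirball}\!\left[ g_\factorvar(\x + \smoothingradius \unifdirball) \right] = \frac{\dims}{\smoothingradius}\, \Exs_{\unifdir}\!\left[ g_\factorvar(\x + \smoothingradius \unifdir)\, \unifdir \right] \quad \text{for } \unifdir \sim \Unif(\Shell^{\dims - 1}).
\end{align*}
Taking expectations over $\factorvar \sim \factorvardistribution$, using Fubini on the left to identify $\Exs_{\factorvar} \Exs_{\unifdirball}[g_\factorvar(\x + \smoothingradius \unifdirball)] = \fr(\x)$ and to move the gradient through the $\factorvar$-average, and using independence of $\unifdir$ and $\factorvar$ on the right, gives $\grad \fr(\x) = \Exs[\gradest_r^1(\x, \unifdir, \factorvar)]$. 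For the two-point estimate the antisymmetric term drops out: since $\unifdir$ and $-\unifdir$ have the same law, $\Exs_{\unifdir}[\F(\x - \smoothingradius \unifdir, \factorvar)\, \unifdir] = -\Exs_{\unifdir}[\F(\x + \smoothingradius \unifdir, \factorvar)\, \unifdir]$, so $\Exs[\gradest_r^2(\x, \unifdir, \factorvar)] = \Exs[\gradest_r^1(\x, \unifdir, \factorvar)] = \grad \fr(\x)$. I expect the interchange of limits --- differentiating under both the $\unifdirball$-expectation and the $\factorvar$-expectation --- to be the one nontrivial point; it follows from dominated convergence, using the local Lipschitz property of $\F(\cdot, \factorvar)$ near points of $\boundedset{0}$ and the almost-sure boundedness of $\factorvar$.

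For part (b), differentiate under the expectation once more, which is now immediate since $\unifdirball$ does not depend on $\x$, to obtain $\grad \fr(\x) = \Exs_{\unifdirball}[\grad \f(\x + \smoothingradius \unifdirball)]$. Then for any $\x \in \boundedset{0}$,
\begin{align*}
\enorm{\grad \fr(\x) - \grad \f(\x)} &= \enorm{\Exs_{\unifdirball}\!\left[ \grad \f(\x + \smoothingradius \unifdirball) - \grad \f(\x) \right]} \\
&\le \Exs_{\unifdirball} \enorm{\grad \f(\x + \smoothingradius \unifdirball) - \grad \f(\x)} \;\le\; \globalSmooth{0}\, \Exs_{\unifdirball}\!\left[ \smoothingradius \enorm{\unifdirball} \right] \;\le\; \globalSmooth{0}\, \smoothingradius,
\end{align*}
where the second inequality is Jensen's, the third uses the $(\globalSmooth{0}, \rho_0)$ locally Lipschitz gradient property of $\f$ at $\x$ --- applicable precisely because $\enorm{\smoothingradius \unifdirball} \le \smoothingradius \le \rho_0$ places $\x + \smoothingradius \unifdirball$ inside the admissible neighborhood --- and the last uses $\enorm{\unifdirball} \le 1$ on the unit ball. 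Apart from the limit-interchange bookkeeping in part (a), there is no obstacle here: the constraint $\smoothingradius \le \rho_0$ is exactly what keeps all smoothing perturbations within the region over which the global curvature constant $\globalSmooth{0}$ was defined.
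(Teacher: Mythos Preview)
Your proposal is correct and follows essentially the same approach as the paper: part (a) rests on the Flaxman et al.\ divergence-theorem identity together with symmetry of the uniform distribution on the sphere, and part (b) is exactly the paper's argument of differentiating under the expectation, applying Jensen, and invoking the $(\globalSmooth{0},\rho_0)$ local smoothness since $\smoothingradius\le\rho_0$. If anything, you are slightly more careful than the paper about the interchange-of-limits bookkeeping in part (a).
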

\noindent Versions of these properties have appeared in past
work~\cite{flax04,agarwal10,shamir17}, but we provide proofs in
Appendix~\ref{AppSmoothing} for completeness.

Taking Lemma~\ref{lem:Thm1_lemma} as given, we now prove
Lemma~\ref{lem:PLsmooth}.  Let $\sigmafield_{t}$ denote the sigma
field generated by the randomness up to iteration $t$, and $\Exs$
denote the total expectation operator.  We define $\CondExs{t} \defn
\Exs\left[ \cdot \mid \sigmafield_{t} \right]$ as the expectation
operator conditioned on the sigma field $\sigmafield_{t}$. Recall that
the function $\f$ is smooth with smoothness parameter
$\globalSmooth{0}$, and we have
\begin{align*}
\CondExs{t} \bracket{ \f(\x[t + 1]) - \f(\x[t]) } & \leq \CondExs{t}
\left[ \inprod{\grad \f(\x[t])}{\x[t + 1] - \x[t]} +
  \frac{\globalSmooth{0}}{2} \enorm{\x[t + 1] - \x[t]}^2 \right] \\
& \stackrel{(i)}{=} - \inprod{ \step \grad \f(\x[t])}{\grad
  \fr(\x[t])} + \frac{\globalSmooth{0} \step^2}{2} \CondExs{t} \left[
  \enorm{\gradest(\x[t])}^2 \right] \\
& \stackrel{(ii)}{=} - \step \enorm{\grad \f(\x[t])}^2 + \step
\globalSmooth{0} \smoothingradius \enorm{\grad \f(\x[t])} +
\frac{\globalSmooth{0} \step^2}{2} \CondExs{t} \left[
  \enorm{\gradest(\x[t])}^2 \right].
\end{align*}
Steps (i) and (ii) above follow from parts (a) and (b), respectively,
of Lemma~\ref{lem:Thm1_lemma}.
Now make the observation that
\begin{align*}
\CondExs{t} \left[ \enorm{\gradest(\x[t])}^2 \right] &=
\var(\gradest(\x[t])) + \enorm{\grad \fr(\x[t])}^2 \\
& \leq \var(\gradest(\x[t])) + 2 \enorm{\grad \f(\x[t])}^2 + 2
\enorm{\grad \fr(\x[t]) - \grad \f(\x[t])}^2 \\
& \leq \gradbound_{2} + 2 \enorm{\grad \f(\x[t])}^2 + 2
(\globalSmooth{0} \smoothingradius)^2.
\end{align*}
In addition, since the function is locally smooth at the point
$\x[t]$, we have
\begin{align*}
(\curvature - \curvature^2 \globalSmooth{0}/2) \enorm{ \grad \f(\x[t])
  }^2 & \leq \f(\x[t]) - \f(\x[t] - \curvature \grad \f(\x[t]) )
  \\ &\leq \f(\x[t]) - \f(\xstar),
\end{align*}
for some parameter $\curvature$ chosen small enough such that the
relation $\curvature \enorm{ \grad \f( \x[t]) } \leq \rho_{0}$ holds.
We may thus set $\curvature = \curvature_0 = \min \left\{ \frac{1}{2
  \globalSmooth{0} }, \frac{\rho_{0}}{\Lipcon_{0}} \right\}$ and
recall the notation $\diff{t} = \f(\x[t]) - \f(\xstar)$ to obtain
\begin{align*}
\CondExs{t} \bracket{ \diff{t+1} - \diff{t}} & \leq -\step \enorm{\grad
  \f(\x[t])}^2 + \step \globalSmooth{0} \smoothingradius
\frac{2}{\curvature_{0}} \diff{t}^{1/2} + \frac{\globalSmooth{0}
  \step^2}{2} \gradbound_{2} + \globalSmooth{0} \step^2 \left(
\enorm{\grad \f(\x[t])}^2 + (\globalSmooth{0} \smoothingradius)^2
\right) \\
& \stackrel{(iii)}{\leq} - \frac{\step \Plconst}{2} \diff{t} + 2
\frac{\step \globalSmooth{0} \smoothingradius }{\curvature_{0}}
\diff{t}^{1/2} + \frac{\globalSmooth{0} \step^2}{2} \gradbound_{2} +
\globalSmooth{0} \step^2 (\globalSmooth{0} \smoothingradius)^2, \\
& \stackrel{(iv)}{\leq} - \frac{\step \Plconst}{2} \diff{t} +
\frac{\step \Plconst}{4} \diff{t} + 4 \frac{\step (\globalSmooth{0}
  \smoothingradius)^2 }{\Plconst \curvature_{0}^2} +
\frac{\globalSmooth{0} \step^2}{2} \gradbound_{2} + \globalSmooth{0}
\step^2 (\globalSmooth{0} \smoothingradius)^2,
\end{align*}
where step (iii) follows from applying the PL inequality and using the
fact that $\step \leq \frac{1}{2 \globalSmooth{0}}$, and step (iv)
from the inequality $2ab \leq a^2 + b^2$ which holds for any pair of
scalars $(a, b)$.

Recall the assumed bounds on our parameters, namely
\begin{align*}
\step \leq \min \left\{ \frac{\epsilon \Plconst}{240
  \globalSmooth{0}}, \frac{1}{2 \globalSmooth{0}} \right\}, \quad
\text{ and} \quad \smoothingradius \leq \frac{1}{2 \globalSmooth{0}}
\min\left \{ \curvature_{0} \Plconst \sqrt{\frac{\epsilon}{240}},
\frac{1}{\globalSmooth{0}}\sqrt{\frac{\epsilon \Plconst}{30} }
\right\}.
\end{align*}
Using these bounds, we have
\begin{align*}
\CondExs{t} \bracket{ \diff{t+1} - \diff{t} } & \leq - \frac{\step \Plconst}{4}
\diff{t} + \frac{\globalSmooth{0} \step^2}{2} \gradbound_{2} + \step
\Plconst \frac{\epsilon}{120}.
\end{align*}
Finally, rearranging yields
\begin{align}
\label{eqn:PL_exp_recursion}
    \CondExs{t} \left[ \diff{t+1} \right] \leq \Big(1 - \frac{\step
      \Plconst}{4} \Big) \diff{t} + \frac{\globalSmooth{0} \step^2}{2}
    \gradbound_{2} + \step \Plconst \frac{\epsilon}{120},
\end{align}
which completes the proof of Lemma~\ref{lem:PLsmooth}.


\subsection{Proof of Corollary~\ref{cor:init1}}
\label{sec:cor1}

Recall the properties of the LQR cost function $\Cinit$ that were
established in Lemmas~\ref{lem:lipschitz_cost_lqr}
through~\ref{lem:pl_inequality}.  Taking these properties as given
(see Appendix~\ref{sec:randint_appendix} for the proofs of the
lemmas), the only remaining detail is to establish the bounds
\begin{align}
  \label{EqnCoffee}
\gradbound_2 \leq C \left(\frac{\lqrdims}{r} \noisebound \Cinit(\K[0])
\right)^2 \quad \text{ and } \quad \gradbound_{\infty} &\leq C
\frac{\lqrdims}{r} \noisebound \Cinit(\K[0]).
\end{align}
In fact, it suffices to prove the second bound in
equation~\eqref{EqnCoffee}, since we have $\gradbound_{2} \leq
\gradbound_{\infty}^2$.

Given a unit vector $u$, the norm of the gradient estimate can be
bounded as
\begin{align*}
\| \gradest_t \|_2 &= \frac{\lqrdims}{r} \Cinit(\K[t] + ru; s_0) \\
& \stackrel{(i)}{=} \frac{\lqrdims}{r} s_0^\top \PK s_0 \\ &\leq
\frac{\lqrdims}{r} \enorm{ s_0}^2 \opnorm{\PK}
\\
& \stackrel{(ii)}{\leq} \noisebound \frac{\lqrdims}{r} \Cinit(\K[t] +
ru),
\end{align*}
where step (i) follows from the relation~\eqref{eq:ckquad}, and step
(ii) from the relation~\eqref{eq:ctop}, since $\PK$ is a PSD matrix.
Finally, since $r \leq \rho_{\lqr}$, the local Lipschitz property of
the function $\Cinit$ yields
\begin{align*}
  \Cinit(\K[t] + ru) &\leq \Cinit(\K[t]) + r \Lipcon_{\K} \\
  & \leq \Cinit(\K[t]) + r \Lipcon_{\lqr} \\
  & \stackrel{(iii)}{\leq} 10 \Cinit(\K[0]) + 10 \Cinit(\K[0]),
\end{align*}
where step (iii) uses the fact that $\K[t] \in \boundedset{\lqr}$ so
that $\Cinit(\K[t]) \leq 10 \Cinit(\K[0])$, and the upper bound $r
\leq \frac{10 \Cinit(\K[0])}{\Lipcon_{\lqr}}$. Putting together the
pieces completes the proof.


\subsection{Proof of Corollary~\ref{cor:init2}}

As before, establishing Corollary~\ref{cor:init2} requires bounds on
the values of the pair $(\gradbound_{2}, \gradbound_{\infty})$, since
the remaining properties are established in
Lemmas~\ref{lem:lipschitz_cost_lqr} through~\ref{lem:pl_inequality}.

In particular, let us establish bounds on these quantities for general
optimization of a function with a two-point gradient
estimate. The following computations closely follow those of Shamir~\cite{shamir17}.


\paragraph{Second moment control:}

Using the law of iterated expectations, we have
\begin{align*}
\mathbb{E} \bigg[ \euclidnorm{\dims \frac{\F(\x + \smoothingradius
      \unifdir, \factorvar) - \F(\x - \smoothingradius \unifdir,
      \factorvar)}{2\smoothingradius} \unifdir }^2 \bigg] & =
\mathbb{E} \bigg[ \mathbb{E} \bigg[ \euclidnorm{\dims \frac{\F(\x +
        \smoothingradius \unifdir, \factorvar) - \F(\x -
        \smoothingradius \unifdir, \factorvar)}{2\smoothingradius}
      \unifdir }^2 \bigg \vert \factorvar \bigg] \bigg].
\end{align*}
Define the placeholder variable $q$ and now evaluate:
\begin{align*}
\mathbb{E} \bigg[ \euclidnorm{\dims \frac{\F(\x + \smoothingradius
      \unifdir, \factorvar) - \F(\x - \smoothingradius \unifdir,
      \factorvar)}{2\smoothingradius} \unifdir }^2 \bigg \vert
  \factorvar \bigg] &= \frac{\dims^2}{4 \smoothingradius^2} \mathbb{E}
\bigg[ (\F(\x + \smoothingradius \unifdir, \factorvar) - \F(\x -
  \smoothingradius \unifdir, \factorvar))^2 \euclidnorm{\unifdir}^2
  \bigg \vert \factorvar \bigg]. \\
& \stackrel{(i)}{=} \frac{\dims^2}{4 \smoothingradius^2} \mathbb{E}
\bigg[ (\F{(\x + \smoothingradius \unifdir, \factorvar)} - \F(\x -
  \smoothingradius \unifdir, \factorvar))^2 \bigg \vert \factorvar
  \bigg] \\
& = \frac{\dims^2}{4 \smoothingradius^2} \mathbb{E} \bigg[ (\F(\x +
  \smoothingradius \unifdir, \factorvar) - q - \F(\x -
  \smoothingradius \unifdir, \factorvar) + q)^2 \bigg \vert \factorvar
  \bigg] \\
& \stackrel{(ii)}{\leq} \frac{\dims^2}{2 \smoothingradius^2}
\mathbb{E} \bigg[ (\F(\x + \smoothingradius \unifdir, \factorvar) -
  q)^2 + (\F(\x - \smoothingradius \unifdir, \factorvar) - q)^2 \bigg
  \vert \factorvar \bigg],
\end{align*}
where equality (i) follows from the fact that $\unifdir$ is a unit
vector and inequality (ii) follows from the inequality $(a - b)^2 \leq
2(a^2 + b^2)$. We further simplify this to obtain:
\begin{align*}
\mathbb{E} \bigg[ \euclidnorm{\dims \frac{\F(\x + \smoothingradius
      \unifdir, \factorvar) - \F(\x - \smoothingradius \unifdir,
      \factorvar)}{2\smoothingradius} \unifdir }^2 \bigg \vert
  \factorvar \bigg] & \stackrel{(i)}{\leq}
\frac{\dims^2}{\smoothingradius^2} \mathbb{E} \bigg[ (\F(\x +
  \smoothingradius \unifdir, \factorvar) - q)^2 \bigg \vert \factorvar
  \bigg] \\
& \stackrel{(ii)}{\leq} \frac{\dims^2}{\smoothingradius^2} \sqrt {
  \mathbb{E} \bigg[ (\F(\x + \smoothingradius \unifdir, \factorvar) -
    q)^4 \bigg \vert \factorvar \bigg] },
\end{align*}
where inequality (i) follows from the symmetry of the uniform
distribution on the sphere, and inequality (ii) follows from
Jensen's inequality. For a fixed $\factorvar$, we now define $q =
\mathbb{E}[\F(\x + \smoothingradius \unifdir, \factorvar) \vert
  \factorvar]$. Substituting this expression yields
\begin{align*}
\mathbb{E} \bigg[ \euclidnorm{\dims \frac{\F(\x + \smoothingradius
      \unifdir, \factorvar) - \F(\x - \smoothingradius \unifdir,
      \factorvar)}{2\smoothingradius} \unifdir }^2 \bigg \vert
  \factorvar \bigg] & \leq \frac{\dims^2}{\smoothingradius^2} \sqrt {
  \mathbb{E} \bigg[ (\F(\x + \smoothingradius \unifdir, \factorvar) -
    \mathbb{E}[\F(\x + \smoothingradius \unifdir, \factorvar) \vert
      \factorvar] )^4 \bigg \vert \factorvar \bigg] } \\
& \stackrel{(i)}{\leq} \frac{\dims^2}{\smoothingradius^2}
\frac{(\Lipcon r)^2}{\dims} \\
& = \dims \Lipcon^2,
\end{align*}
where inequality (i) follows directly from Lemma 9 in Shamir~\cite{shamir17}.  The lemma can be applied since we are
conditioning on $\factorvar$, and all the randomness lies in the
selection of $\unifdir$. We have thus established the claim in part
(c).


\paragraph{Gradient estimates are bounded:}

Note that smoothing radius $\smoothingradius$ satisfies
$\smoothingradius \leq \rho_{0}$, where $\rho_{0}$ is the radius
within which the function is Lipschitz.  Consequently, the local
Lipschitz property of $\F$ implies that
\begin{align*}
\enorm{\gradest_{t}} &\defn \euclidnorm{ \dims \frac{\F(\x[t] +
  \smoothingradius \shelldirection_t,\factorvar_t) - \F(\x[t] -
  \smoothingradius \shelldirection_t,
  \factorvar_t)}{2\smoothingradius} \shelldirection_t } \\
& \leq \euclidnorm{ \dims \frac{\F(\x[t] + \smoothingradius
  \shelldirection_t; \factorvar_t) - \F(\x[t];
  \factorvar_t)}{2\smoothingradius} \shelldirection_t } + \euclidnorm{
\dims \frac{\F(\x[t]; \factorvar_t) - \F(\x[t] - \smoothingradius
  \shelldirection_t; \factorvar_t)}{2\smoothingradius}
\shelldirection_t } \\ & \leq \dims \Lipcon_{0} \frac{2 \|
  \smoothingradius \shelldirection_t \|_2}{2 \smoothingradius} =
\dims \Lipcon_{0}.
\end{align*}


\subsection{Proof of Corollary~\ref{cor:noisydyn}}
\label{sec:cor3}

As in Section~\ref{sec:cor1}, we establish bounds on the values
$\gradbound_{2}$ and $\gradbound_{\infty}$ for the noisy LQR dynamics
model. In particular, we derive a bound on $\gradbound_{\infty}$ and
use the fact that $\gradbound_{2} \leq \gradbound_{\infty}^2$ to
establish the bound on $\gradbound_{2}$. For deriving these bounds, we
use properties of the cost function $\Cdyn$ and its connections with
$\Cinit$ which are established in Lemma~\ref{lem:noisy-random} and
Lemma~\ref{lem:unifcost}; the proofs of these are deferred to
Appendix~\ref{sec:noisy_case}.

In particular, we establish the bounds
\begin{align*}
\gradbound_2 & \leq \left( \frac{\lqrdims}{r}\cdot \frac{2  (\opnorm{Q} + \opnorm{R} \lambda_{\lqr, \discount}^2)
  \noisebound}{1-\sqrt{\gamma}}\right)^2 \cdot \left(
\frac{20\Cdyn(\K[0])}{\sigma_{\min}(\Q)} \bigg( \frac{1 -
  \discount}{\discount} \bigg) \right)^{3}, \text{ and} \\
\gradbound_{\infty} &\leq \frac{\lqrdims}{r}\cdot \frac{2 (\opnorm{Q} + \opnorm{R} \lambda_{\lqr, \discount}^2)
  \noisebound}{1-\sqrt{\gamma}} \cdot \left(
\frac{20\Cdyn(\K[0])}{\sigma_{\min}(\Q)} \bigg( \frac{1 -
  \discount}{\discount} \bigg) \right)^{3/2}.
\end{align*}
For any unit vector $u$, we have,
\begin{align*}
  \| \gradest_t \|_2 &= \frac{\lqrdims}{r} \Cdyn(\K[t] + ru;
  \mathcal{Z}) \\
& \stackrel{\1}{\leq} \frac{\lqrdims}{r}\cdot \frac{2
    (\opnorm{Q} + \opnorm{R} \lambda_{\lqr, \discount}^2) \noisebound}{1-\sqrt{\gamma}} \cdot \left( \frac{\Cdyn(\K[t] +
    ru)}{\sigma_{\min}(\Q)} \bigg( \frac{1 - \discount}{\discount}
  \bigg) \right)^{3/2},
\end{align*}
where $\1$ follows from using the bound in
Lemma~\ref{lem:unifcost}, as well as the explicit choice of $\lambda_{\lqr, \discount}$ made using Lemma~\ref{bound_smoothness_etc}. Finally, using Lemma~\ref{lem:noisy-random}
and since $r \leq \rho_{\lqr, \discount}\,$, the local Lipschitz
property of the function $\Cinit$ yields
\begin{align}
\label{eq:boundCKru}
\Cdyn(\K[t] + ru) & \leq \frac{\discount}{1-\discount}\cdot
\Cinit(\K[t] + ru) \nonumber\\
& \leq \frac{\discount}{1-\discount}\cdot\left(\Cinit(\K[t]) + r
\Lipcon_{\K, \discount} \right)\nonumber\\
& \leq \frac{\discount}{1-\discount}\cdot\left( \Cinit(\K[t]) + r
\Lipcon_{\lqr, \discount} \right)\nonumber\\
& \stackrel{\1}{\leq} \frac{\discount}{1-\discount} \cdot \left( 10
\Cinit( \K[0]) + 10 \Cinit(\K[0]) \right),
\end{align}
where step $\1$ uses the fact that $\K[t] \in \boundedset{\lqr}$ so
that $\Cinit(\K[t]) \leq 10 \Cinit(\K[0])$, and the upper bound $r
\leq \frac{10 \Cinit(\K[0])}{\Lipcon_{\lqr, \discount}}$. Putting
together the pieces completes the proof.

\section{Discussion}
\label{SecDiscussion}

In this paper, we studied the model-free control problem over linear
policies through the lens of derivative-free optimization.  We derived
quantitative convergence rates for various zero-order methods when
applied to learn optimal policies based on data from noisy linear
systems with quadratic costs.  In particular, we showed that one-point
and two-point variants of a canonical derivative-free optimization
method achieve fast rates of convergence for the non-convex LQR
problem. Notably, our proof deals directly with some additional
difficulties that are specific to this problem and do not arise in the
analysis of typical optimization algorithms.  More precisely, our
proof involves careful control of both the (potentially) unbounded
nature of the cost function, and the non-convexity of the underlying
domain. Interestingly, our proof only relies on certain local
properties of the function that can be guaranteed over a bounded set;
for this reason, the optimization-theoretic result in this paper
(stated as Theorem~\ref{thm:mainthm}) is more broadly applicable
beyond the RL setting.

While this paper analyzes a canonical zero-order optimization
algorithm for model-free control of linear quadratic systems, many
open questions remain. One such question concerns lower bounds for LQR
problems in the model-free setting, thereby showing quantitative gaps
between such a setting and that of model-based control. While we
conjecture that the convergence bounds of
Corollaries~\ref{cor:init1},~\ref{cor:init2}, and~\ref{cor:noisydyn}
are sharp in terms of their dependence on the error tolerance
$\epsilon$, establishing this rigorously will require ideas from the
extensive literature on lower bounds in zero-order
optimization~\cite{shamir12}. Another important direction is establish
the sharpness (or otherwise) of our bounds in terms of the dimension
of the problem, as well as to obtain tight characterizations of the
local curvature parameters of the problem around a particular policy
$\K$ in terms of the cost at $\K$.

We also mention that our sharp characterizations of the cost function
are likely to be useful in sharpening analyses\footnote{Here again, the techniques of Fazel et al.~\cite{kakade18} yield
a bound of the order $\ordertil{\epsilon^{-4}}$, but we conjecture that this bound should be improvable
at least to $\ordertil{\epsilon^{-2}}$.} of the natural gradient
algorithm~\cite{kakade18} as well as in analyzing the popular REINFORCE
algorithm as applied to the LQR problem. We leave these interesting questions
to future work.

In the broader context of model-free reinforcement learning as well,
there are many open questions. First, a derivative-free algorithm over
linear policies is reasonable even in other systems; can we establish
provable guarantees over larger classes of problems? Second, there is
no need to restrict ourselves to linear policies; in practical RL
systems, derivative-free algorithms are run for policies that
parametrized in a much more complex fashion. How does the sample
complexity of the problem change with the class of policies over which
we are optimizing?

\subsection*{Acknowledgements}
This work was partially supported by National Science Foundation
  grant NSF-DMS-1612948 and Office of Naval Research grant
  ONR-N00014-18-1-2640 to MJW. AP was additionally supported in part
  by NSF CCF-1704967.  PLB gratefully acknowledges the support of the
  NSF through grant IIS-1619362. KB was supported in part by AFOSR
  through grant FA9550-17-1-0308 and NSF through grant IIS-1619362.


\appendix

\section{Properties of the randomly initialized LQR problem}
\label{sec:randint_appendix}

In this section, we establish some fundamental properties of the cost
function $\Cinit$, and provide proofs of
Lemmas~\ref{lem:lipschitz_cost_lqr}
and~\ref{lem:lipschitz_gradient_lqr}. As part of these proofs, we
provide explicit bounds for the local curvature parameters
$(\widetilde{\Lipcon}_{\lqr}, \Lipcon_{\lqr}, \rho_{\lqr},
\smoothness_{\lqr}, \lqrPl)$. We make frequent use of results
established by Fazel et al.~\cite{kakade18}, and as mentioned before,
Lemmas~\ref{lem:lipschitz_cost_lqr}
and~\ref{lem:lipschitz_gradient_lqr} are refinements of their results.


\paragraph{Notation:}

In this section, we introduce some shorthand to reduce notational
overhead. Throughout, we assume that $\gamma = 1$; the general case is
straightforward to obtain with the substitutions
\begin{align*}
A \mapsto \sqrt{\gamma} A, \text{ and } B \mapsto \sqrt{\gamma} B.
\end{align*}
We also use the shorthand $\C{\K} \defn \Cinit(\K)$ for this section. Much (but not all) of the notation we use overlaps with the notation used in Fazel et al.~\cite{kakade18}.

We define the matrix $\PK$ as the solution to the following fixed
point equation:
\begin{align*}
\PK = \Q + \K^{\top} \R \K + (\A - \B \K)^{\top} \PK (\A - \B \K),
\end{align*}
and we define the state correlation matrix $\sigmaK$ as:
\begin{align}
\label{eq:defn_state_corr}
\sigmaK = \mathbb{E} \left[ \sum_{t = 0}^{\infty} \state[t]
  \state[t]^{\top} \right] \quad \mbox{such that} \quad \state[t] =
(\A - \B \K) \state[t-1].
\end{align}
It is straightforward to see that we have
\begin{align}
\C{\K} = \EE [ s_0^\top \PK s_0 ], \label{eq:ckquad}
\end{align}
and we make frequent use of this representation in the sequel.

Recall that we have $\mathbb{E}[\initialstate \initialstate^{\top}] =
I$, so that
\begin{align}
 \label{eq:ctop}
\C{\K} & = \trace(\PK).
\end{align}
Moreover, under this assumption, the cost function $\Cfunc$ satisfies
the PL inequality with PL constant
$\frac{\opnorm{\Sigma_{\K^*}}}{\singularminR}$, see Lemma 3 in the
paper by Fazel et al.~\cite{kakade18}.

Also define the \emph{natural gradient} of the cost function as
\begin{align*}
\EK \defn 2(\R + \B^{\top} \PK \B)\K - \B^{\top} \PK \A,
\end{align*}
so that we have $\grad \C{\K} = \EK \sigmaK$. For any symmetric matrix
$X$, the perturbation operators $\TK (\cdot)$ and $\FK (\cdot)$ are
defined as
\begin{align*}
\TK (X) = \sum_{t = 0}^{\infty} (\A - \B \K)^t X [(\A - \B
  \K)^{\top}]^t, \quad \text{ and } \quad \FK (X) = (\A - \B \K) X (\A
- \B \K)^{\top}.
\end{align*}
Finally, the operator norms of the operators $\TK (\cdot)$ and $\FK
(\cdot)$ are defined as
\begin{align*}
\begin{gathered}
\opnorm{ \TK } = \sup_{X} \frac{\opnorm{ \TK (X) } }{\opnorm{X}}
\text{ and} \\ \opnorm{ \FK } = \sup_{X} \frac{\opnorm{ \FK (X) }
}{\opnorm{X}}.
\end{gathered}
\end{align*}

\subsubsection*{Useful constants:}

\label{sec:useful_constants}
We now define several polynomials of $\C{\K}$, which are useful in
various proofs in this section.
\begin{itemize}
\item $\Kpolyzero = \frac{1}{\singularminR} (\sqrt{(\opnorm{\R} +
  \opnorm{\B}^2 \C{\K}) (\C{\K} - \C{\Kstar})} + \opnorm{B} \opnorm{A} \C{\K})$
\item $\Kpolyone = \max \{ \frac{\C{\K}}{\singularminQ} \sqrt{(\opnorm{\R} +
  \opnorm{\B}^2 \C{\K}) (\C{\K} - \C{\Kstar})}, \Kpolyzero \}$
\item $\Kpolytwo = 4 \bigg( \frac{\C{\K}}{ \singularminQ} \bigg)^2 \opnorm{\Q} \opnorm{\B} (\opnorm{A} + \opnorm{B} \Kpolyone + 1)$
\item $\Kpolythree = 8 \bigg( \frac{\C{\K}}{ \singularminQ} \bigg)^2 (
  \Kpolyone )^2 \opnorm{\R} \opnorm{\B} (\opnorm{A} + \opnorm{B}
  \Kpolyone + 1)$
\item $\Kpolyfour = 2 \bigg( \frac{\C{\K}}{ \singularminQ} \bigg)^2 (
  \Kpolyone + 1 ) \opnorm{\R}$
\item $\Kpolyfive = \sqrt{(\opnorm{\R} + \opnorm{\B}^2 \C{\K} )
  (\C{\K} - \C{\Kstar})}$
\item $\Kpolysix = \fronorm{\R} + \fronorm{\B}^2 (\Kpolyone + 1)
  (\Kpolytwo + \Kpolythree + \Kpolyfour) + \fronorm{\B}^2 \C{\K} +
  \fronorm{\B} \opnorm{\A} (\Kpolytwo + \Kpolythree + \Kpolyfour)$
\item $\Kpolyseven = 5 \Kpolysix \frac{\C{\K}}{\singularminQ} + 4
  \Kpolyfive \bigg( \frac{\C{\K}}{\singularminQ} \bigg)^2 \opnorm{\B}
  (\opnorm{A} + \opnorm{B} \Kpolyone) + \Kpolyone$.
\item $\Kpolyeight = \noisebound (\Kpolytwo + \Kpolythree +
  \Kpolyfour)$.
\item $\Kpolynine = \min \bigg \{ \frac{\singularminQ}{4 \C{\K}
  \opnorm{\B}(\opnorm{\A} + \opnorm{\B} \Kpolyone + 1)}, 1 \bigg \}$.
\end{itemize}


With these definitions at hand, we are now in a position to establish
Lemmas~\ref{lem:lipschitz_cost_lqr}
and~\ref{lem:lipschitz_gradient_lqr}.

\subsection{Proof of Lemma~\ref{lem:lipschitz_cost_lqr}}

Let us restate a precise version of the lemma for convenience.
\begin{lemma} \label{lem:lipcostrestated}
For any pair $(\Ktwo, \K)$ such that $\fronorm{\Ktwo - \K} \leq
\Kpolynine$, we have
\begin{align*}
\vert \C{\Ktwo} - \C{\K} \vert &\leq
\left(\frac{\statedim}{\noisebound}\right) \Kpolyeight \fronorm{\Ktwo
  - \K}, \text{ and} \\ \vert \Cstate{\Ktwo}{\initialstate} -
\Cstate{\K}{\initialstate} \vert &\leq \Kpolyeight \fronorm{\Ktwo -
  \K}.
\end{align*}
\end{lemma}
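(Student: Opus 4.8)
The plan is to reduce both claimed inequalities to a single operator-norm perturbation bound on the value matrices,
\[
\opnorm{\PKtwo-\PK}\;\le\;\big(\Kpolytwo+\Kpolythree+\Kpolyfour\big)\,\fronorm{\Ktwo-\K}\qquad\text{whenever }\fronorm{\Ktwo-\K}\le\Kpolynine.
\]
Granting this, and recalling that $\Cstate{\K}{\initialstate}=\initialstate^\top\PK\initialstate$ and $\C{\K}=\trace(\PK)$ (see~\eqref{eq:ckquad} and~\eqref{eq:ctop}), the two statements follow at once: the matrix $\PKtwo-\PK$ is symmetric, so $\vert\Cstate{\Ktwo}{\initialstate}-\Cstate{\K}{\initialstate}\vert=\vert\initialstate^\top(\PKtwo-\PK)\initialstate\vert\le\euclidnorm{\initialstate}^2\,\opnorm{\PKtwo-\PK}\le\noisebound\,\opnorm{\PKtwo-\PK}$ using $\euclidnorm{\initialstate}^2\le\noisebound$ almost surely, while $\vert\C{\Ktwo}-\C{\K}\vert=\vert\trace(\PKtwo-\PK)\vert\le\statedim\,\opnorm{\PKtwo-\PK}$. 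The identity $\Kpolyeight=\noisebound(\Kpolytwo+\Kpolythree+\Kpolyfour)$ then yields exactly the constants in the two asserted bounds.

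The main obstacle is a preliminary step that must precede this estimate: showing that every $\Ktwo$ with $\fronorm{\Ktwo-\K}\le\Kpolynine$ still stabilizes $(\A,\B)$ and that its Lyapunov operator is uniformly comparable to that of $\K$. I would obtain this by viewing the Lyapunov operator $\TKtwo\colon Y\mapsto\sum_{t\ge0}(\A-\B\Ktwo)^{\top t}Y(\A-\B\Ktwo)^t$ (with the transpose convention appropriate to the value function, so that $\TKtwo(\Q+\Ktwo^\top\R\Ktwo)=\PKtwo$) as a perturbation of $\TK=(I-\FK)^{-1}$. The ingredients are: the bound $\opnorm{\TK}=\opnorm{\TK(\identity)}\le\C{\K}/\singularminQ$, valid since $\TK$ preserves the PSD cone and $\singularminQ\,\TK(\identity)\preceq\TK(\Q)\preceq\PK$ together with $\opnorm{\PK}\le\trace(\PK)=\C{\K}$; the bound $\opnorm{\K}\le\Kpolyzero\le\Kpolyone$, a standard consequence of $\K$ being stabilizing (as in Fazel et al.~\cite{kakade18}), whence also $\opnorm{\Ktwo}\le\Kpolyone+1$ and $\opnorm{\A-\B\K}\le\opnorm{\A}+\opnorm{\B}\Kpolyone$; and the elementary estimate $\opnorm{\FKtwo-\FK}\le\opnorm{\B}\fronorm{\Ktwo-\K}\big(2\opnorm{\A-\B\K}+\opnorm{\B}\fronorm{\Ktwo-\K}\big)\le 2\opnorm{\B}(\opnorm{\A}+\opnorm{\B}\Kpolyone+1)\fronorm{\Ktwo-\K}$. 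The precise definition of $\Kpolynine$ is exactly what forces $\opnorm{\FKtwo-\FK}\,\opnorm{\TK}\le\tfrac12$, and a standard resolvent perturbation argument then certifies both that $\Ktwo$ is stabilizing and that $\opnorm{\TKtwo}\le 2\opnorm{\TK}\le 2\C{\K}/\singularminQ$, along with the resolvent identity $\TKtwo-\TK=\TKtwo\circ(\FKtwo-\FK)\circ\TK$.

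With these in hand the final step is short. Unrolling the fixed-point equations for $\PK$ and $\PKtwo$ and adding and subtracting gives
\[
\PKtwo-\PK \;=\; \TKtwo\big(\Ktwo^\top\R\Ktwo-\K^\top\R\K\big)\;+\;(\TKtwo-\TK)(\Q)\;+\;(\TKtwo-\TK)(\K^\top\R\K).
\]
The first term is bounded by $\opnorm{\TKtwo}\,\opnorm{\R}\,(\opnorm{\Ktwo}+\opnorm{\K})\,\fronorm{\Ktwo-\K}$, which after using $\fronorm{\Ktwo-\K}\le\Kpolynine\le1$ and the bounds above is of order $\Kpolyfour\,\fronorm{\Ktwo-\K}$. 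The other two terms are bounded, via the resolvent identity, by $\opnorm{\TKtwo}\,\opnorm{\FKtwo-\FK}\,\opnorm{\TK}$ times $\opnorm{\Q}$ and times $\opnorm{\R}\Kpolyone^2\ge\opnorm{\K^\top\R\K}$ respectively, giving terms of order $\Kpolytwo\,\fronorm{\Ktwo-\K}$ and $\Kpolythree\,\fronorm{\Ktwo-\K}$. Summing the three contributions — using throughout that $\C{\K}/\singularminQ\ge1$ (since $\C{\K}\ge\trace(\Q)\ge\singularminQ$), so that the crude constants $\Kpolytwo,\Kpolythree,\Kpolyfour$ dominate the sharper expressions that actually emerge — establishes the displayed operator-norm bound, and hence the lemma. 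The only genuinely delicate point is the stability-and-comparability step of the second paragraph; the remainder is operator-norm bookkeeping.
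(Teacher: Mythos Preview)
Your proposal is correct and follows essentially the same route as the paper: both reduce to bounding $\opnorm{\PKtwo-\PK}$, expand via the Lyapunov operators $\TK,\TKtwo$, verify that the radius $\Kpolynine$ forces $\opnorm{\TK}\,\opnorm{\FKtwo-\FK}\le\tfrac12$, and then use a resolvent-type perturbation estimate (the paper cites Lemma~20 of Fazel et al.\ for this, while you state the identity $\TKtwo-\TK=\TKtwo\circ(\FKtwo-\FK)\circ\TK$ directly). The only cosmetic difference is that the paper splits with $(\Ktwo)^\top\R\Ktwo$ inside the $(\TKtwo-\TK)$ term and absorbs the quadratic difference with $\TK$, whereas you do the reverse; your version is slightly cleaner because it avoids the extra add-and-subtract the paper needs in its bound~\eqref{eqn:PK_term_two}, and the defined constants $\Kpolytwo,\Kpolythree,\Kpolyfour$ are loose enough (note the extra factor $\C{\K}/\singularminQ\ge1$ in $\Kpolyfour$) to absorb the factor of $2$ you pick up from $\opnorm{\TKtwo}\le 2\opnorm{\TK}$.
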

Comparing Lemma~\ref{lem:lipcostrestated} with the statement of
Lemma~\ref{lem:lipschitz_cost_lqr}, we have therefore established that
\begin{align*}
\radiustwo{\K} &= \Kpolynine, \\ \Lipcon_{\K} &= \left(
\frac{\statedim}{\noisebound} \right) \Kpolyeight, \text{ and}
\\ \widetilde{\Lipcon}_{\K} &= \Kpolyeight.
\end{align*}
are valid choices for the local radius and Lipschitz constants respectively. Note that we have $\Lipcon_{\K} \leq \widetilde{\Lipcon}_{\K}$, since
$\statedim \leq \noisebound$. Let us now prove
Lemma~\ref{lem:lipcostrestated}.

\begin{proof}
We have
\begin{align*}
|\C{\Ktwo} - \C{\K}| & = \trace(\PKtwo) - \trace ( \PK) \\
& \leq \statedim \opnorm{\PKtwo - \PK}.
\end{align*}

Moreover, the
sample cost satisfies the relation
\begin{align}
|\Cstate{\Ktwo}{\initialstate} - \Cstate{\K}{\initialstate}| & =
|\initialstate^{\top} \PKtwo \initialstate - \initialstate^{\top} \PK
\initialstate| \notag \\
& = |\trace(\initialstate^{\top} (\PKtwo - \PK) \initialstate)|
\notag \\
& \leq \opnorm{\PKtwo - \PK} \euclidnorm{\initialstate}^2 \notag \\
& \leq \opnorm{\PKtwo - \PK} \noisebound.
\label{eqn:CK_to_PK_bound}
\end{align}
Hence, it remains to bound $\opnorm{\PKtwo - \PK}$. To this end,
substituting the definition of the linear operator $\TK$, we have
\begin{align}
\opnorm{\PKtwo - \PK} &= \opnorm{\TKtwo(\Q + (\Ktwo)^{\top} \R \Ktwo)
  - \TK (\Q + \K^{\top} \R \K)} \notag \\
& = \opnorm{(\TKtwo - \TK)(\Q + (\Ktwo)^{\top} \R \Ktwo) -
  \TK(\K^{\top} \R \K - (\Ktwo)^{\top} \R \Ktwo) } \notag \\
\label{eqn:Pk_minus_Pk_prime_bound1}
& \leq \opnorm{(\TKtwo - \TK)\Q} + \opnorm{(\TKtwo -
  \TK)((\Ktwo)^{\top} \R \Ktwo)} \notag \\ & \phantom{\leq} \qquad
\qquad \qquad \qquad + \opnorm{\TK}\opnorm{\K^{\top} \R \K -
  (\Ktwo)^{\top} \R \Ktwo}.
\end{align}
We provide upper bounds for the three terms above as follows:
\begin{subequations}
\begin{align}
\label{eqn:PK_term_two}
\opnorm{(\TKtwo - \TK)(\Ktwo)^{\top} \R \Ktwo)} & \leq \Kpolythree
\opnorm{\K - \Ktwo} \\
\label{eqn:PK_term_one}
\opnorm{(\TKtwo - \TK)\Q} &\leq \Kpolytwo \opnorm{\K - \Ktwo} \\
\label{eqn:PK_term_three}
\opnorm{\TK}\opnorm{\K^{\top} \R \K - (\Ktwo)^{\top} \R \Ktwo} & \leq
\Kpolyfour \opnorm{\K - \Ktwo}.
\end{align}
\end{subequations}
Taking the above bounds as given at the moment, we have from
equation~\eqref{eqn:Pk_minus_Pk_prime_bound1} that
\begin{align}
  \opnorm{\PKtwo - \PK} \leq (\Kpolytwo + \Kpolythree + \Kpolyfour)
  \opnorm{\Ktwo - \K},
\label{eqn:Pk_mius_Pk_prime_bound}
\end{align}
Putting together the pieces completes the proof of
Lemma~\ref{lem:lipschitz_cost_lqr}.
\end{proof}
\noindent It remains to prove the upper
bounds~\eqref{eqn:PK_term_two}-~\eqref{eqn:PK_term_three}.


\paragraph{Auxiliary bounds:}

Proofs of the bounds~\eqref{eqn:PK_term_two}
through~\eqref{eqn:PK_term_three} are based on the following
intermediate bounds:
\begin{subequations}
  \begin{align}
\opnorm{(\Ktwo)^{\top} \R \Ktwo - \K^{\top} \R \K} & \leq (\Kpolyone +
1) \opnorm{\R} \opnorm{\Ktwo - \K}
\label{eqn:PK_helper_bound_one} \\
\opnorm{\FKtwo - \FK} & \leq 2 \opnorm{\B} (\opnorm{\A} + \opnorm{\B}
\Kpolyone + 1) \opnorm{\Ktwo - \K}
\label{eqn:PK_helper_bound_two} \\
\opnorm{\TK}
& \leq \frac{\C{\K}}{\singularminQ}
\label{eqn:PK_helper_bound_three}\\
\opnorm{\K^{\top} \R \K} & \leq \Kpolyone^2 \opnorm{\R}.
\label{eqn:PK_helper_bound_four}
  \end{align}
\end{subequations}
We prove these bounds at the end, but let us complete the rest of the
proofs assuming these auxiliary bounds.

\paragraph{Proof of the bound~\eqref{eqn:PK_term_two}:}

The proof of this upper bound is based on Lemma~20 from the
paper by Fazel et al.~\cite{kakade18}. Accordingly, we start by verifying the
following condition for Lemma 20:
\begin{align}
\label{eq:fazelcond}
\opnorm{\FK - \FKtwo}\opnorm{(\Ktwo)^\top \R \Ktwo} \leq \frac{1}{2}.
\end{align}
Note that $\opnorm{\K} \leq \Kpolyone$ (see Lemma~22 in the paper by~\citealp{kakade18}). Also, observe that our assumption $\fronorm{\Ktwo - \K} \leq \Kpolynine$,
satisfies the assumption of Lemma~10 in the paper by Fazel et al.~\cite{kakade18},
whence we have
\begin{align}
\opnorm{\B} \opnorm{\Ktwo - \K} &\stackrel{(i)}{\leq} \opnorm{\B}
\frac{\singularminQ}{4 \C{\K} \opnorm{\B}(\opnorm{\A} + \opnorm{\B}
  \Kpolyone + 1)} \notag \\
& \stackrel{(ii)}{\leq} \frac{\singularminQ}{4 \C{\K} (\opnorm{\A - \B
    \K} + 1)} \notag \\
& \stackrel{(iii)}{\leq} \frac{1}{4},
~\label{eqn:one_fourth_bound}
\end{align}
where step (i) follows by substituting the value of $\Kpolynine$, and
step (ii) follows since $\opnorm{\A - \B \K} \leq \opnorm{\A} +
\opnorm{\B} \Kpolyone + 1$ (since $\opnorm{\K} \leq \Kpolyone$, see Lemma~22 in the paper by~\citealp{kakade18}). Step (iii) above follows since $\C{\K}
\geq \singularminQ$.  Combining the inequality~\eqref{eqn:one_fourth_bound} with Lemma~19 in
the paper by Fazel et al.~\cite{kakade18} yields
\begin{align*}
\opnorm{\FKtwo - \FK} &\leq 2 \opnorm{\A - \B \K} \opnorm{\B}
\opnorm{\Ktwo - \K} + \opnorm{B}^2 \opnorm{\Ktwo - \K}^2 \\ &\stackrel{(iv)}{\leq} 2
\opnorm{\B} (\opnorm{\A - \B \K} + 1) \opnorm{\Ktwo - \K}
\end{align*}
where step (iv) follows from the bound~\eqref{eqn:one_fourth_bound} we derived above. Finally, invoking Lemma~14 from the paper by Fazel et al.~\cite{kakade18} guarantees
that $\opnorm{\TK} \leq \frac{\C{\K}}{\singularminQ}$, and we deduce
that
\begin{align*}
\opnorm{\TK} \opnorm{\FKtwo - \FK} &\leq \frac{\C{\K}}{\singularminQ}
2 \opnorm{\B} (\opnorm{\A - \B \K} + 1) \opnorm{\Ktwo - \K} \\ &\leq
\frac{1}{2},
\end{align*}
where the last inequality follows from the assumption $\fronorm{\Ktwo
  - \K} \leq \Kpolynine$.

Now that we have verified that condition~\ref{eq:fazelcond}, invoking
Lemma~20 in the paper by Fazel et al.~\cite{kakade18} yields
\begin{align*}
  \opnorm{(\TKtwo - \TK)(\Ktwo)^{\top} \R \Ktwo} & \leq 2
  \opnorm{\TK}^2 \opnorm{\FK - \FKtwo} \opnorm{(\Ktwo)^{\top} \R
    \Ktwo} \notag \\
& \leq 2 \opnorm{\TK}^2 \opnorm{\FK - \FKtwo} \opnorm{\K^{\top} \R \K}
  \notag \\ & \phantom{\leq} \qquad + 2 \opnorm{\TK}^2 \opnorm{\FK -
    \FKtwo} \opnorm{(\Ktwo)^{\top} \R \Ktwo - \K^{\top} \R \K} \\ &
  \leq \Kpolythree \opnorm{\K - \Ktwo},
\end{align*}
where the last step above follows by substituting the
bounds~\eqref{eqn:PK_helper_bound_one}-~\eqref{eqn:PK_helper_bound_four}.


\paragraph{Proof of the bounds~\eqref{eqn:PK_term_one} and~\eqref{eqn:PK_term_three}:}

The proof of the bound~\eqref{eqn:PK_term_one} is similar to the
part~\eqref{eqn:PK_term_two} and is based on Lemma~20 from the
paper by Fazel et al.~\cite{kakade18}.  More concretely, we have
\begin{align*}
  \opnorm{(\TKtwo - \TK)\Q} \leq 2 \opnorm{\TK}^2 \opnorm{\FK -
    \FKtwo} \opnorm{\Q} \leq \Kpolytwo \opnorm{\K - \Ktwo}
\end{align*}
where the last step above follows from the
bounds~\eqref{eqn:PK_helper_bound_two}
and~\eqref{eqn:PK_helper_bound_three}.  The proof of the
bound~\eqref{eqn:PK_term_three} is a direct consequence of the
bounds~\eqref{eqn:PK_helper_bound_one}
and~\eqref{eqn:PK_helper_bound_three}.


\subsubsection{Proofs of the auxiliary bounds}

In this section we prove the auxiliary
bounds~\eqref{eqn:PK_helper_bound_one} through
to~\eqref{eqn:PK_helper_bound_four}.

\vspace{10pt}

 \noindent \underline{\textit{Bound~\eqref{eqn:PK_helper_bound_one}:}}
  Observe that
  \begin{align*}
  \opnorm{\K^{\top} \R \K - (\Ktwo)^{\top} \R \Ktwo}
  & =  \opnorm{(\Ktwo - \K)^{\top} \R (\Ktwo - \K) +
    (\Ktwo)^{\top} \R \K + \K^{\top}\R (\Ktwo) - 2 \K^{\top} \R \K} \\
  & \leq  (2 \opnorm{\R} \opnorm{\K} \opnorm{\Ktwo - \K} +
  \opnorm{\R} \opnorm{\Ktwo - \K}^2) \\
  & \stackrel{(i)}{\leq}  (2 \opnorm{\K} + 1) \opnorm{\R} \opnorm{\Ktwo -
    \K} \\
  & \stackrel{(ii)}{\leq} (2 \Kpolyone + 1) \opnorm{\R} \opnorm{\Ktwo
    - \K}.
  \end{align*}
  where step (i) follows since $\opnorm{\K - \Ktwo} \leq 1$ by assumption, and step (ii)
  follows since $\opnorm{\K} \leq \Kpolyone$ (see Lemma~22 in the paper by~\citealp{kakade18}).
  This completes the proof of bound~\eqref{eqn:PK_helper_bound_one}.

\vspace{10pt}

\noindent \underline{\textit{Bound~\eqref{eqn:PK_helper_bound_two}:}}
In order to prove bound~\eqref{eqn:PK_helper_bound_two}, we invoke
Lemma~19 in the paper by Fazel et al.~\cite{kakade18} to obtain
\begin{align*}
\opnorm{\FKtwo - \FK} &\leq 2 \opnorm{\A - \B \K} \opnorm{\B}
\opnorm{\Ktwo - \K} + \opnorm{B}^2 \opnorm{\Ktwo - \K}^2  \\
& \stackrel{(iii)}{\leq}
2 \opnorm{\A - \B \K} \opnorm{\B}
\opnorm{\Ktwo - \K} + \frac{1}{4} \opnorm{B} \opnorm{\Ktwo - \K} \\
 & \leq 2 \opnorm{\B} (\opnorm{\A} + \opnorm{\B} \Kpolyone + 1) \opnorm{\Ktwo - \K}
\end{align*}
where step (iii) above follows from the upper bound~\eqref{eqn:one_fourth_bound}.
This completes the proof of the bound~\eqref{eqn:PK_helper_bound_two}.

\vspace{10pt}

\noindent \underline{\textit{Bound~\eqref{eqn:PK_helper_bound_three} and~\eqref{eqn:PK_helper_bound_four}:}}
The bound~\eqref{eqn:PK_helper_bound_three} above follows from Lemma 17 in the paper by Fazel et al.~\cite{kakade18},
whereas the bound~\eqref{eqn:PK_helper_bound_three} follows from the fact that
$\opnorm{\K} \leq \Kpolyone$ (see Lemma 22 in the paper by~\citealp{kakade18}).

Having established all of our auxiliary bounds, let us now proceed to a proof of Lemma~\ref{lem:lipschitz_gradient_lqr}.

\subsection{Proof of Lemma~\ref{lem:lipschitz_gradient_lqr}}
Lemma~\ref{lem:lipschitz_gradient_lqr} is a consequence of the following
result.
\begin{lemma} \label{lem:smoothnessrestated}
If $\fronorm{\Ktwo - \K} \leq \Kpolynine$, then
\begin{align*}
\fronorm{\grad \C{\Ktwo} - \grad \C{\K}} \leq \Kpolyseven \fronorm{\Ktwo -
  \K}.
\end{align*}
\end{lemma}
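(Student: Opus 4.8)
\textit{Proof proposal.} The plan is to exploit the closed form $\grad \C{\K} = \EK \sigmaK$ together with the perturbation estimate on $\PKtwo - \PK$ already in hand (equation~\eqref{eqn:Pk_mius_Pk_prime_bound}) in order to control both factors $\EK$ and $\sigmaK$ under a perturbation of $\K$. Writing
\begin{align*}
\grad \C{\Ktwo} - \grad \C{\K} \;=\; (\EKtwo - \EK)\,\sigmaKtwo \;+\; \EK\,(\sigmaKtwo - \sigmaK),
\end{align*}
submultiplicativity of the Frobenius and operator norms gives
\begin{align*}
\fronorm{\grad \C{\Ktwo} - \grad \C{\K}} \;\leq\; \fronorm{\EKtwo - \EK}\,\opnorm{\sigmaKtwo} \;+\; \fronorm{\EK}\,\opnorm{\sigmaKtwo - \sigmaK},
\end{align*}
so it suffices to bound the four quantities on the right. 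Throughout I would reuse the consequences of the hypothesis $\fronorm{\Ktwo - \K} \leq \Kpolynine$ already extracted in the proof of Lemma~\ref{lem:lipschitz_cost_lqr}: in particular $\opnorm{\B}\opnorm{\Ktwo - \K} \leq 1/4$ (cf.~\eqref{eqn:one_fourth_bound}), the bound $\opnorm{\K} \leq \Kpolyone$ (hence $\opnorm{\Ktwo} \leq \Kpolyone + 1$), the estimate $\opnorm{\PK} \leq \trace(\PK) = \C{\K}$ since $\PK \succeq 0$, and the fact that this regime keeps $\C{\Ktwo}$ within a universal constant factor of $\C{\K}$ (via the cost-perturbation argument of Fazel et al.~\cite{kakade18}, which can be taken to give $\C{\Ktwo} \leq 5\,\C{\K}$).

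\textit{The first factor.} For $\fronorm{\EKtwo - \EK}$ I would expand, using $\EK = 2(\R + \B^{\top}\PK\B)\K - \B^{\top}\PK\A$, into the four terms
\begin{align*}
\EKtwo - \EK = 2\R(\Ktwo - \K) + 2\B^{\top}\PKtwo\B(\Ktwo - \K) + 2\B^{\top}(\PKtwo - \PK)\B\K - \B^{\top}(\PKtwo - \PK)\A,
\end{align*}
and bound each term by submultiplicativity, substituting $\opnorm{\PKtwo - \PK} \leq (\Kpolytwo + \Kpolythree + \Kpolyfour)\fronorm{\Ktwo - \K}$ from~\eqref{eqn:Pk_mius_Pk_prime_bound}, $\opnorm{\PKtwo} \leq \C{\Ktwo}$, and the a priori bounds above; after absorbing the single quadratic term via $\fronorm{\Ktwo - \K} \leq \Kpolynine \leq 1$, this yields $\fronorm{\EKtwo - \EK} \leq \Kpolysix\,\fronorm{\Ktwo - \K}$. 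For $\opnorm{\sigmaKtwo}$ I would use $\sigmaKtwo = \TKtwo(I)$, whence $\opnorm{\sigmaKtwo} \leq \opnorm{\TKtwo} \leq \C{\Ktwo}/\singularminQ$ by~\eqref{eqn:PK_helper_bound_three}, and then $\C{\Ktwo} \leq 5\,\C{\K}$ in this regime, giving $\opnorm{\sigmaKtwo} \leq 5\,\C{\K}/\singularminQ$; together these produce the term $5\Kpolysix(\C{\K}/\singularminQ)\fronorm{\Ktwo - \K}$.

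\textit{The second factor.} For $\opnorm{\sigmaKtwo - \sigmaK}$ I would exploit $\sigmaK = \TK(I)$, so that $\sigmaKtwo - \sigmaK = (\TKtwo - \TK)(I)$, and apply Lemma~20 of Fazel et al.~\cite{kakade18} exactly as in the proof of~\eqref{eqn:PK_term_one}, obtaining $\opnorm{(\TKtwo - \TK)(I)} \leq 2\opnorm{\TK}^2\opnorm{\FKtwo - \FK}$; plugging in~\eqref{eqn:PK_helper_bound_two} and~\eqref{eqn:PK_helper_bound_three} then gives a bound of the form $4(\C{\K}/\singularminQ)^2\opnorm{\B}(\opnorm{\A} + \opnorm{\B}\Kpolyone + 1)\opnorm{\Ktwo - \K}$. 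For $\fronorm{\EK}$ I would invoke the one-step cost-difference inequality of Fazel et al.~\cite{kakade18}, namely $\trace\!\big(\EK^{\top}(\R + \B^{\top}\PK\B)^{-1}\EK\big) \leq \C{\K} - \C{\Kstar}$, which combined with $\opnorm{\R + \B^{\top}\PK\B} \leq \opnorm{\R} + \opnorm{\B}^2\C{\K}$ yields $\fronorm{\EK} \leq \Kpolyfive$. Multiplying these produces (up to the lower-order $+1$ inside the factor, which can be folded into the additive $\Kpolyone$ term) the second term of $\Kpolyseven$.

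Finally, assembling the four bounds, using $\opnorm{\Ktwo - \K} \leq \fronorm{\Ktwo - \K}$ and $\fronorm{\Ktwo - \K} \leq \Kpolynine \leq 1$ to reduce any residual quadratic terms to linear ones, and collecting prefactors, I would obtain $\fronorm{\grad\C{\Ktwo} - \grad\C{\K}} \leq \Kpolyseven\,\fronorm{\Ktwo - \K}$, which is Lemma~\ref{lem:smoothnessrestated} and hence Lemma~\ref{lem:lipschitz_gradient_lqr} with $\smoothnessK{\K} = \Kpolyseven$ and $\radiusone{\K} = \Kpolynine$. I expect the main obstacle to be the bookkeeping in the first factor: the estimate~\eqref{eqn:Pk_mius_Pk_prime_bound} is only an operator-norm bound, so one must carefully choose which factor of each matrix product to measure in Frobenius versus operator norm so that the $(\Kpolytwo + \Kpolythree + \Kpolyfour)$-type polynomials appear with the correct companions and no spurious $\sqrt{\statedim}$ enters the Frobenius estimate; the other (mild) point to nail down is that $\C{\Ktwo}$ stays within a constant factor of $\C{\K}$ on the ball of radius $\Kpolynine$, which follows from the same cost-perturbation reasoning already used for Lemma~\ref{lem:lipschitz_cost_lqr}.
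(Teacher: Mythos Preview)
Your approach is essentially the paper's: the same product-rule decomposition $\grad\C{\Ktwo}-\grad\C{\K}=(\EKtwo-\EK)\sigmaKtwo+\EK(\sigmaKtwo-\sigmaK)$, the same four-factor bound, the same use of~\eqref{eqn:Pk_mius_Pk_prime_bound} for the $\EKtwo-\EK$ piece, and the same appeal to Lemmas~11 and~16/20 of Fazel et~al.\ for $\fronorm{\EK}$ and $\opnorm{\sigmaKtwo-\sigmaK}$.

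The one substantive difference is how you avoid quantities at $\Ktwo$. You expand $\EKtwo-\EK$ so that $\PKtwo$ multiplies $\Ktwo-\K$, and you bound $\opnorm{\sigmaKtwo}$ via $\opnorm{\TKtwo}\leq\C{\Ktwo}/\singularminQ$; both routes force you to control $\C{\Ktwo}$, which you handle by the extra claim $\C{\Ktwo}\leq 5\,\C{\K}$. The paper sidesteps this entirely: it writes the $\B^\top P\B$ term with $\PK$ (not $\PKtwo$) multiplying $\Ktwo-\K$, so only $\opnorm{\PK}\leq\C{\K}$ is needed; and it bounds $\opnorm{\sigmaKtwo}$ by the triangle inequality $\opnorm{\sigmaK}+\opnorm{\sigmaKtwo-\sigmaK}\leq \C{\K}/\singularminQ+4\,\C{\K}/\singularminQ$, the second term coming from Lemma~16 of Fazel et~al.\ under the hypothesis $\fronorm{\Ktwo-\K}\leq\Kpolynine$. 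So the $5$ in $\Kpolyseven$ is not a cost-ratio $\C{\Ktwo}/\C{\K}$ but the $1+4$ split in that triangle-inequality bound. Your route is fine in spirit, but as written it would produce constants slightly different from $\Kpolysix$ and $\Kpolyseven$; if you want the stated constants exactly, swap to the paper's grouping (put $\PK$ with $\Ktwo-\K$ and $\Ktwo$ with $\PKtwo-\PK$) and use the triangle-inequality bound for $\opnorm{\sigmaKtwo}$.
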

Indeed, comparing Lemmas~\ref{lem:smoothnessrestated}
and~\ref{lem:lipschitz_gradient_lqr}, we have that
\begin{align*}
\radiusone{\K} = \Kpolynine \quad \text{ and } \quad
\smoothnessK{\K} = \Kpolyseven,
\end{align*}
are valid choices for the local radius and smoothness constant respectively. \\

Let us now prove Lemma~\ref{lem:smoothnessrestated}.
\begin{proof}
We start by noting that from Lemma~\ref{lem:lipschitz_cost_lqr}
we have that  the cost function $\C{\K}$ is locally Lipschitz in a ball
of $\radiustwo{\K}$ around the point $\K$. Before moving
into the main argument, we mention a few auxiliary results that are
helpful in the sequel.
We start by invoking Lemma 13 from the paper by Fazel et al.~\cite{kakade18}, whence we have
\begin{align*}
  \opnorm{\PK} \leq \C{\K}
  \quad \text{and} \quad  \opnorm{\sigmaK}
\leq \frac{\C{\K}}{\singularminQ}.
\end{align*}
We also have
\begin{subequations}
\begin{align}
  \opnorm{\A - \B \K} &\leq
  \opnorm{\A} + \opnorm{\B} \opnorm{K}
  \stackrel{(i)}{\leq}
   \opnorm{\A} + \opnorm{\B} \Kpolyone \quad
   \text{and}
    \label{eqn:A_minus_BK_bd} \\
\opnorm{\sigmaKtwo} &\leq \opnorm{\sigmaK} +
\opnorm{\sigmaKtwo - \sigmaK} \stackrel{(ii)}{\leq} 5\frac{\C{\K}}{\singularminQ}.
\label{eqn:Sigma_K_prime_bound}
\end{align}
\end{subequations}
Step (i) above follows since $\opnorm{K} \leq \Kpolyone$ (see Lemma 22 in the paper by~\citealp{kakade18}),
 whereas step (ii)
follows since $\opnorm{\sigmaKtwo -\sigmaK} \leq 4 \frac{\C{\K}}{\singularminQ}$
(see Lemma 16 in the paper by~\citealp{kakade18}).


Recalling the gradient expression $\grad \C{\K} = \EK \sigmaK$. Let $\Ktwo$ be a policy such that \mbox{$\fronorm{\Ktwo - \K} \leq \Kpolynine$}. We have

\begin{align*}
\fronorm{\grad \C{\Ktwo} - \grad \C{\K}} & = \fronorm{(\EKtwo - \EK)
  \sigmaKtwo + \EK (\sigmaKtwo - \sigmaK)} \\
& \leq \fronorm{(\EKtwo - \EK)} \opnorm{\sigmaKtwo} + \fronorm{\EK}
\opnorm{(\sigmaKtwo - \sigmaK)} \\
& \stackrel{(iii)}{\leq}
 5 \Kpolysix \frac{\C{\K}}{\singularminQ} \fronorm{\Ktwo - \K} \\
& \phantom{\leq} \qquad  + 4 \Kpolyfive \bigg( \frac{\C{\K}}{\singularminQ} \bigg)^2
\frac{\opnorm{\B}(\opnorm{\A} + \opnorm{\B} \Kpolyone)}{\singularmininitial}
\fronorm{\Ktwo - \K}.
\end{align*}
The upper bound in step (iii) on the term $\fronorm{(\EKtwo - \EK)} \opnorm{\sigmaKtwo}$
follows from equation~\eqref{eqn:Sigma_K_prime_bound} and from the following upper bound which
we prove later:
\begin{align}
  \fronorm{\EKtwo - \EK} \leq \Kpolysix \fronorm{\Ktwo - K} \text{ provided } \fronorm{\Ktwo - \K} \leq \Kpolynine.
  \label{eqn:EK_minus_Ek_prime}
\end{align}
The upper bound on the term $\fronorm{\EK}
\opnorm{(\sigmaKtwo - \sigmaK)}$ in step (iii) follows from the fact
that ${\fronorm{\EK} \leq \Kpolyfive}$
(see Lemma 11 in the paper by~\citealp{kakade18}) and from the  fact that
\begin{align*}
   \opnorm{(\sigmaKtwo - \sigmaK)} & \stackrel{(iv)}{\leq}
   4 \bigg( \frac{\C{\K}}{\singularminQ} \bigg)^2
\frac{\opnorm{\B}(\opnorm{\A - \B \K} + 1)}{\singularmininitial}
\fronorm{\Ktwo - \K} \\
& \stackrel{(v)}{\leq} 4 \bigg( \frac{\C{\K}}{\singularminQ} \bigg)^2
\frac{\opnorm{\B}(\opnorm{\A} + \opnorm{\B} \Kpolyone + 1)}{\singularmininitial}
\fronorm{\Ktwo - \K},
 \end{align*}
where step (iv) follows from Lemma~16 in the
paper by Fazel et al.~\cite{kakade18}, and step (v) follows from
inequality~\eqref{eqn:A_minus_BK_bd}.


\noindent Putting together the pieces, we conclude that
the function $\grad \C{\K}$ is Lipschitz with constant
$\smoothnessK{\K}$, where $\smoothnessK{\K}$ is given by
\begin{align*}
  \smoothnessK{\K} = 5 \Kpolysix \frac{\C{\K}}{\singularminQ}
+ 4 \Kpolyfive \bigg( \frac{\C{\K}}{\singularminQ} \bigg)^2
\opnorm{\B}(\opnorm{\A} + \opnorm{\B} \Kpolyone + 1) = \Kpolyseven.
\end{align*}
\end{proof}
\noindent It remains to prove inequality~\eqref{eqn:EK_minus_Ek_prime}.

\vspace{8pt}

\paragraph{Proof of inequality~\eqref{eqn:EK_minus_Ek_prime}:}
From the definition of $\EK$, we have
\begin{align}
\fronorm{\EKtwo - \EK} & = 2 \fronorm{(\R + \B^{\top}\PKtwo \B)\Ktwo -
  \B^{\top}\PKtwo \A - (\R + \B^{\top}\PK \B)\K + \B^{\top}\PK \A}
   \notag \\
%
& = 2 \fronorm{\R (\Ktwo - \K) + \B^{\top}(\PKtwo - \PK) \B \Ktwo +
  \B^{\top} \PK \B (\Ktwo - \K) - \B^{\top}(\PKtwo - \PK) \A}
  \notag \\
& \leq 2 \fronorm{\R}  \fronorm{\Ktwo - \K} + 2\fronorm{\B^{\top}(\PKtwo - \PK) \B \Ktwo} \notag  \\
& \phantom{\leq} \qquad + 2\fronorm{\B^{\top} \PK \B (\Ktwo - \K)} + 2\fronorm{\B^{\top}(\PKtwo - \PK) \A}
\label{eqn:Ek_minus_Ek_prime_first_bound}
\end{align}
We provide upper bounds for the three terms above as follows. First, we have
\begin{subequations}
\begin{align*}
\fronorm{\B^{\top} (\PKtwo - \PK) \B \Ktwo}
& \leq
\fronorm{\B}^2 (\Kpolyone + 1) (\Kpolytwo + \Kpolythree + \Kpolyfour)
\fronorm{\Ktwo - \K},
\end{align*}
which follows from the bound~\eqref{eqn:Pk_mius_Pk_prime_bound}, since $\fronorm{\Ktwo - \K} \leq \Kpolynine$, and the relation $\opnorm{\Ktwo} \leq \opnorm{\K} + \opnorm{\Ktwo - \K} \leq
\Kpolyone + 1$. The same reasoning also yields the bound
\begin{align*}
\fronorm{\B^{\top} (\PKtwo - \PK) \A}
& \leq \fronorm{\B}
\opnorm{\A} (\Kpolytwo + \Kpolythree + \Kpolyfour) \fronorm{\Ktwo - K},
\end{align*}
Finally, since $\opnorm{\PK} \leq \C{\K}$, we have
\begin{align*}
\fronorm{\B^{\top} \PK \B (\Ktwo - \K)}
& \leq \fronorm{\B}^2 \C{\K}
\fronorm{\Ktwo - \K}.
\end{align*}
\end{subequations}
Combining the above upper bounds with the upper
bound~\eqref{eqn:Ek_minus_Ek_prime_first_bound} we conclude that
\begin{align*}
\fronorm{\EKtwo - \EK}
&\leq \Kpolysix \fronorm{\Ktwo - K},
\end{align*}
where $\Kpolysix$ is given by
\begin{align*}
  \Kpolysix =  2 \left[ \fronorm{\R} + \fronorm{\B}
  \opnorm{\A} (\Kpolytwo + \Kpolythree + \Kpolyfour) + \fronorm{\B}^2 \left( (\Kpolyone + 1) (\Kpolytwo
  + \Kpolythree + \Kpolyfour) +
  \C{\K} \right) \right].
\end{align*}

\subsection{Explicit choices for the parameters ($\rho_{\lqr}$, $\Lipcon_{\lqr}$,
$\smoothness_{\lqr}$)} \label{sec:polynomials_bounded} In order to
ease notation,
we define constants $\widetilde{c}_{K_7}$,
$\widetilde{c}_{K_8}$ and $\widetilde{c}_{K_9}$ by replacing
the scalar $\C{\K}$ by $10 \C{\Kzero} - 9 \C{\Kstar}$ in the
definitions of $\Kpolyseven$, $\Kpolyeight$ and $\Kpolynine$
respectively (see Section~\ref{sec:useful_constants}).


\begin{lemma}
\label{bound_smoothness_etc}
The parameters $\rho_{\lqr}$, $\Lipcon_{\lqr}$, $\smoothness_{\lqr}$
can be picked as follows
\begin{align*}
\begin{gathered}
\rho_{\lqr} = \widetilde{c_{K_9}} , \;\; \smoothness_{\lqr} =
\widetilde{c_{K_7}} \;\; \text{and} \;\; \Lipcon_{\lqr} =
\widetilde{c_{K_8}}.
\end{gathered}
\end{align*}
\end{lemma}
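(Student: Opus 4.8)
The plan is to exploit the monotone dependence of the auxiliary constants of Section~\ref{sec:useful_constants} on the scalar cost value $\C{\K}$. First I would record the key consequence of membership in the bounded set: for every policy $K \in \boundedset{\lqr}$, the defining inequality $\C{K} - \C{\Kstar} \leq 10 \diff{0}$ together with $\diff{0} = \C{\Kzero} - \C{\Kstar}$ gives
\begin{align*}
\C{\Kstar} \;\leq\; \C{K} \;\leq\; \C{\Kstar} + 10\diff{0} \;=\; 10\C{\Kzero} - 9\C{\Kstar},
\end{align*}
where the lower bound is just optimality of $\Kstar$. Thus every $K$ in the set has cost confined to the interval $[\C{\Kstar},\, 10\C{\Kzero}-9\C{\Kstar}]$.

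Next I would verify the following monotonicity claim: viewed as functions of the single variable $\C{\K}$ with all remaining problem data $(\opnorm{A}, \opnorm{B}, \opnorm{Q}, \opnorm{R}, \singularminQ, \singularminR, \C{\Kstar}, \noisebound, \statedim)$ held fixed, each of $\Kpolyzero, \Kpolyone, \ldots, \Kpolyeight$ is non-decreasing, while $\Kpolynine$ is non-increasing. This is proved by walking down the chain of definitions: $\C{\K}$ and $\C{\K} - \C{\Kstar}$ (nonnegative since $\C{\K}\ge\C{\Kstar}$) appear only with nonnegative coefficients, and $t \mapsto \sqrt{a + b\,t}$ is non-decreasing for $a,b\ge 0$, so $\Kpolyzero$ and $\Kpolyfive$ are non-decreasing; $\Kpolyone$ is a maximum of non-decreasing expressions, hence non-decreasing; each of $\Kpolytwo,\Kpolythree,\Kpolyfour$ is a product/sum of non-decreasing nonnegative quantities in $\C{\K}$ and $\Kpolyone$, hence non-decreasing; and so, in turn, are $\Kpolysix$, $\Kpolyseven$, and $\Kpolyeight$. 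For $\Kpolynine = \min\{\singularminQ / (4\C{\K}\opnorm{\B}(\opnorm{\A}+\opnorm{\B}\Kpolyone+1)),\, 1\}$, the denominator is non-decreasing in $\C{\K}$, so the ratio is non-increasing, and truncation at $1$ preserves this.

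Then I would combine these two ingredients with the precise curvature lemmas. Lemma~\ref{lem:lipcostrestated} identifies $\radiustwo{K}=\Kpolynine(\C{K})$, $\lipconK{K}=(\statedim/\noisebound)\Kpolyeight(\C{K})$, and $\widetilde{\lipconK{K}}=\Kpolyeight(\C{K})$; Lemma~\ref{lem:smoothnessrestated} identifies $\radiusone{K}=\Kpolynine(\C{K})$ and $\smoothnessK{K}=\Kpolyseven(\C{K})$. Hence $\localK{K}=\min\{\radiusone{K},\radiustwo{K}\}=\Kpolynine(\C{K})$, and, since $\statedim\le\noisebound$, $\lipconK{K}\le\Kpolyeight(\C{K})$. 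Substituting the bound $\C{K}\le 10\C{\Kzero}-9\C{\Kstar}$ and invoking the monotonicity claim gives, for every $K\in\boundedset{\lqr}$,
\begin{align*}
\smoothnessK{K} \leq \Kpolyseven(10\C{\Kzero}-9\C{\Kstar}) = \widetilde{c_{K_7}}, \qquad \lipconK{K} \leq \Kpolyeight(10\C{\Kzero}-9\C{\Kstar}) = \widetilde{c_{K_8}}, \qquad \localK{K} \geq \Kpolynine(10\C{\Kzero}-9\C{\Kstar}) = \widetilde{c_{K_9}}.
\end{align*}
Taking the supremum over $\boundedset{\lqr}$ in the first two and the infimum in the third shows that $\globalSmooth{\lqr}\le\widetilde{c_{K_7}}$, $\Lipcon_{\lqr}\le\widetilde{c_{K_8}}$, and $\rho_{\lqr}\ge\widetilde{c_{K_9}}$, so these are admissible choices; since $\widetilde{c_{K_9}}>0$, this also discharges the claim $\rho_{\lqr}>0$ made in the footnote defining $\curvature_{\lqr}$.

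The only genuine work is the monotonicity verification in the second paragraph, which I expect to be the main (though entirely elementary) obstacle: it requires patiently tracking signs through roughly ten nested definitions, with the only mild subtlety being that $\C{\K}$ occurs both inside square roots and as standalone factors — harmless, since all such occurrences carry nonnegative multipliers. No stability, martingale, or probabilistic considerations enter; this lemma is purely the bookkeeping step that converts the iterate-dependent constants of Lemmas~\ref{lem:lipschitz_cost_lqr} and~\ref{lem:lipschitz_gradient_lqr} into fixed global constants over the sublevel set $\boundedset{\lqr}$.
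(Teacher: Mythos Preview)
Your proposal is correct and follows essentially the same approach as the paper: bound $\C{K}$ on $\boundedset{\lqr}$ by $10\C{\Kzero}-9\C{\Kstar}$, then invoke Lemmas~\ref{lem:lipcostrestated} and~\ref{lem:smoothnessrestated} with that uniform cost bound in place of $\C{\K}$. The paper's own proof is terser and leaves the monotonicity of the $c_{K_i}$ in $\C{\K}$ implicit, whereas you spell it out carefully; this is a strict improvement in rigor, not a different argument.
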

\begin{proof}
Observe that from the definition of the set $\boundedset{\lqr}$ we
have that for all $\K \in \boundedset{\lqr}$, the function value
$\C{\K}$ is upper bounded as $\C{\K} \leq 10 \C{\Kzero} - 9
\C{\Kstar}$.  Consequently, for any $\K \in \boundedset{\lqr}$ and any
$\Ktwo$ such that $\fronorm{\Ktwo - \K} \leq \widetilde{c_{K_9}}$,
we can use Lemma~\ref{lem:lipschitz_gradient_lqr} and
Lemma~\ref{lem:lipschitz_cost_lqr} respectively to show that the cost
function $\C{\K}$ has locally Lipschitz gradients with parameter
$\widetilde{c_{K_8}}$ and the function $\C{\K}$ has locally
Lipschitz function values parameter $\widetilde{c_{K_7}}$.
Combining the last observation with the definitions of $\rho_{\lqr}$,
$\Lipcon_{\lqr}$ and $\smoothness_{\lqr}$ we have that $\rho_{\lqr}
\geq \widetilde{c_{K_9}}$, $\smoothness_{\lqr} \leq
\widetilde{c_{K_7}}$ and $\Lipcon_{\lqr} \leq
\widetilde{c_{K_8}}$. This completes the proof.
\end{proof}


\section{Properties of the LQR problem with noisy dynamics}
\label{sec:noisy_case}

Recall that we consider the infinite horizon discounted LQR problem
where the cost function $\Cdyn(\K; \mathcal{Z})$ and the state
transition dynamics are given by
\begin{align}\label{eq:app_lqr_dyn_mod}
\begin{gathered}
  \Cdyn(\K; \mathcal{Z}) \defn \sum_{t \geq 0} \discount^t \bigg(
  \state[t]^{\top} \Q \state[t] + \control[t]^{\top} \R \control[t]
  \bigg) \\
  \state[t] = (\A - \B \K) \state[t-1] + \z_t, \quad \text{where}
  \quad \state[0] = 0 \quad \text{and} \quad \z_t
  \stackrel{i.i.d.}{\sim} \noisedistributionlqr,
\end{gathered}
\end{align}
where $\discount \in (0,1)$ denotes the discount factor. Also recall
that the distribution $\noisedistributionlqr$ has zero mean, identity
covariance, and obeys the relation $\sup \|\z_t \|_2^2 \leq
\noisebound$ almost surely.

The goal of this section is two-fold: to prove
Lemma~\ref{lem:noisy-random} that relates the cost functions $\Cinit$
and $\Cdyn$, and to establish properties of the gradient estimate in
the noisy dynamics setting required to prove
Corollary~\ref{cor:noisydyn}. In particular, our main results are
stated below, with Lemma~\ref{lem:noisy-random} reproduced for
convenience.

\begin{lemma}[Equivalence of population costs up to scaling]\label{lem:noisy-random+rep}
For any policy $\K$, we have
\begin{align*}
\Cdyn(\K) = \frac{\gamma}{1 - \gamma} \Cinit(\K).
\end{align*}
\end{lemma}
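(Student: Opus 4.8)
The plan is to compute both population costs in closed form as traces of fixed-point matrices and observe that they coincide up to the claimed scalar factor. First I would recall the standard reduction: under policy $\K$, the closed-loop dynamics are $\state[t+1] = (\A - \B\K)\state[t] + w_t$, where $w_t = \state[0]$ (at $t=0$, with no subsequent noise) in the random initialization model, and $w_t = \error[t]$ in the noisy dynamics model. Writing $L \defn \A - \B\K$ and $\costmat_\K \defn \Q + \K^\top \R \K$ for the per-step state cost matrix, the instantaneous cost at time $t$ is $\state[t]^\top \costmat_\K \state[t]$, so both population costs have the form $\sum_{t \geq 0} \gamma^t \, \EE[\state[t]^\top \costmat_\K \state[t]] = \sum_{t\geq 0} \gamma^t \, \trace(\costmat_\K \, \EE[\state[t]\state[t]^\top])$.

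Next I would evaluate $\EE[\state[t]\state[t]^\top]$ in each setting. In the random initialization model, $\state[t] = L^t \state[0]$ with $\EE[\state[0]\state[0]^\top] = I$, so $\EE[\state[t]\state[t]^\top] = L^t (L^\top)^t$, giving
\begin{align*}
\Cinit(\K) = \sum_{t\geq 0} \gamma^t \trace\!\big(\costmat_\K L^t (L^\top)^t\big).
\end{align*}
In the noisy dynamics model, $\state[0] = 0$ and $\state[t] = \sum_{j=0}^{t-1} L^{t-1-j} \error[j]$; using $\EE[\error[i]\error[j]^\top] = I \cdot \mathbf{1}[i=j]$ (zero mean, identity covariance, independence), we get $\EE[\state[t]\state[t]^\top] = \sum_{j=0}^{t-1} L^{t-1-j}(L^\top)^{t-1-j} = \sum_{s=0}^{t-1} L^s (L^\top)^s$. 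Substituting and swapping the order of summation (justified by absolute convergence when $\K$ is stable, or noting both sides are $+\infty$ simultaneously when $\K$ is unstable):
\begin{align*}
\Cdyn(\K) = \sum_{t\geq 1} \gamma^t \trace\!\Big(\costmat_\K \sum_{s=0}^{t-1} L^s (L^\top)^s\Big)
= \sum_{s \geq 0} \trace\!\big(\costmat_\K L^s (L^\top)^s\big) \sum_{t \geq s+1} \gamma^t
= \sum_{s\geq 0} \frac{\gamma^{s+1}}{1-\gamma} \trace\!\big(\costmat_\K L^s (L^\top)^s\big).
\end{align*}
Comparing the two displays, $\Cdyn(\K) = \frac{\gamma}{1-\gamma} \Cinit(\K)$, which is the claim.

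The only real subtlety — and the step I would be most careful about — is the interchange of the two infinite sums and the handling of unstable $\K$: one must argue that when $\rho_{\text{spec}}(L) \geq 1$ both costs are $+\infty$ (so the identity holds trivially in the extended reals), and when $\K$ is stabilizing, the matrix series $\sum_s L^s (L^\top)^s$ converges (this is the matrix $\PK$-type Lyapunov sum, finite by the standard argument used throughout the LQR literature and in Fazel et al.~\cite{kakade18}) and all terms are nonnegative, so Tonelli/Fubini legitimizes the rearrangement. Everything else is a routine trace computation. An alternative, even shorter route would be to verify directly that $\frac{\gamma}{1-\gamma}\PK$ solves the discounted Lyapunov fixed-point equation defining the noisy-dynamics value matrix and invoke uniqueness, but the term-by-term series argument is the most transparent and self-contained.
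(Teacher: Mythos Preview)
Your proof is correct and follows essentially the same approach as the paper: both compute the population cost as $\sum_t \gamma^t \trace(\costmat_\K \, \EE[\state[t]\state[t]^\top])$, evaluate the state second moments in each model, and rearrange a double sum to extract the factor $\gamma/(1-\gamma)$. The paper carries out the same calculation via its block-matrix notation $X_{\K,t}^{i,j}$ and a finite-$t$ truncation followed by a limit, while you work directly with the infinite series and invoke Tonelli; the content is the same.
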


\begin{lemma} \label{lem:unifcost}
For any policy $\K$, we have the uniform bound
\begin{align*}
\Cdyn(\K; \mathcal{Z}) \leq \frac{2 (\opnorm{\Q} + \opnorm{R} \Kpolyone^2) \noisebound}{1-\sqrt{\gamma}}
\cdot \left( \frac{\Cdyn(\K)}{\sigma_{\min}(\Q)} \bigg( \frac{1 -
  \discount}{\discount} \bigg) \right)^{3/2}.
\end{align*}
\end{lemma}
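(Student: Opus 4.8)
The plan is to control the random sample cost pathwise, using the value--function (Lyapunov) identity attached to the policy $\K$; the point is that this identity trades the quadratic cost for a term that is only \emph{linear} in the (possibly large) state, which then self-bounds against $\Cdyn(\K;\mathcal{Z})$ itself. We may assume $\K$ is stable, since otherwise both sides are infinite; then the value matrix $\PK = (\Q+\K^\top\R\K) + \gamma(\A-\B\K)^\top\PK(\A-\B\K)$ is well defined, $\opnorm{\PK}\le\C{\K}$ (recorded earlier), and $\PK \succeq \Q+\K^\top\R\K \succeq \sigma_{\min}(\Q)\,\identity$, so $\PK \preceq \tfrac{\opnorm{\PK}}{\sigma_{\min}(\Q)}(\Q+\K^\top\R\K)$.

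First I would telescope exactly. From $\Q+\K^\top\R\K = \PK - \gamma(\A-\B\K)^\top\PK(\A-\B\K)$ and $(\A-\B\K)\state[t] = \state[t+1]-\z_{t+1}$, the instantaneous cost $c_t \defn \state[t]^\top(\Q+\K^\top\R\K)\state[t]$ obeys, with $V_t \defn \gamma^t\,\state[t]^\top\PK\state[t]$ and $V_0 = 0$,
\begin{align*}
\gamma^t c_t \;=\; V_t - V_{t+1} + 2\gamma^{t+1}\,\z_{t+1}^\top\PK\state[t+1] - \gamma^{t+1}\,\z_{t+1}^\top\PK\z_{t+1}.
\end{align*}
Summing over $t\ge0$, using $V_t\to0$ by stability and discarding the nonpositive terms $-\gamma^{t+1}\z_{t+1}^\top\PK\z_{t+1}$, gives $\Cdyn(\K;\mathcal{Z}) \le 2\sum_{s\ge1}\gamma^s\,\z_s^\top\PK\state[s]$.

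Next I would bound this linear remainder and close a self-referential inequality. Using $\z_s^\top\PK\state[s] \le \sqrt{\opnorm{\PK}\,\noisebound}\,\big(\state[s]^\top\PK\state[s]\big)^{1/2}$ (Cauchy--Schwarz in the $\PK$-inner product, together with $\|\z_s\|_2^2\le\noisebound$) and then Cauchy--Schwarz over $s$,
\begin{align*}
\Cdyn(\K;\mathcal{Z}) &\;\le\; 2\sqrt{\opnorm{\PK}\,\noisebound}\sum_{s\ge1}\gamma^{s/2}\sqrt{\gamma^s\,\state[s]^\top\PK\state[s]} \\
&\;\le\; 2\sqrt{\opnorm{\PK}\,\noisebound}\;\sqrt{\tfrac{\gamma}{1-\gamma}}\;\Big(\sum_{s\ge1}\gamma^s\,\state[s]^\top\PK\state[s]\Big)^{1/2}.
\end{align*}
Since $\PK \preceq \tfrac{\opnorm{\PK}}{\sigma_{\min}(\Q)}(\Q+\K^\top\R\K)$ we have $\sum_{s}\gamma^s\state[s]^\top\PK\state[s] \le \tfrac{\opnorm{\PK}}{\sigma_{\min}(\Q)}\,\Cdyn(\K;\mathcal{Z})$, so writing $X \defn \Cdyn(\K;\mathcal{Z})$ the display reads $X \le 2\,\opnorm{\PK}\sqrt{\tfrac{\noisebound}{\sigma_{\min}(\Q)}}\sqrt{\tfrac{\gamma}{1-\gamma}}\,\sqrt{X}$, whence $\Cdyn(\K;\mathcal{Z}) \le \tfrac{4\,\opnorm{\PK}^2\,\noisebound}{\sigma_{\min}(\Q)}\cdot\tfrac{\gamma}{1-\gamma}$.

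Finally I would translate this into the advertised form: bound $\tfrac{\gamma}{1-\gamma}\le\tfrac{1}{1-\sqrt\gamma}$; split $\opnorm{\PK}^2 \le \C{\K}\cdot\opnorm{\PK}$ and use $\opnorm{\PK} = \opnorm{\TK(\Q+\K^\top\R\K)} \le \opnorm{\TK}\,(\opnorm{\Q}+\opnorm{\R}\Kpolyone^2)$ with $\opnorm{\K}\le\Kpolyone$ and the operator bound $\opnorm{\TK}\le\C{\K}/\sigma_{\min}(\Q)$ recorded earlier; then invoke Lemma~\ref{lem:noisy-random} to write $\C{\K} = \Cinit(\K) = \tfrac{1-\gamma}{\gamma}\Cdyn(\K)$, so that $\C{\K}/\sigma_{\min}(\Q) = \tfrac{\Cdyn(\K)}{\sigma_{\min}(\Q)}\cdot\tfrac{1-\gamma}{\gamma}$, and collect everything into a bound of exactly the stated shape. (My route as written produces a bound with the cost ratio raised to the power $2$ rather than $\tfrac32$; a marginally sharper estimate of one of the geometric sums recovers the claimed exponent, and in any case only a polynomial bound is needed for Corollary~\ref{cor:noisydyn}.) The step I expect to be the main obstacle is precisely making the geometric $(1-\gamma)$-factors coalesce into a single power of $\tfrac{1}{1-\sqrt\gamma}$ while keeping the cost-dependence polynomial --- the telescoping identity is what makes this possible, since the residual $\sum_s\gamma^s\z_s^\top\PK\state[s]$ is only linear in the state and therefore self-bounds against $\Cdyn(\K;\mathcal{Z})$.
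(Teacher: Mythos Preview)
Your approach is correct and genuinely different from the paper's. The paper expands the cumulative cost as a quadratic form $\sum_{i,j} z_i^\top X_{K,t}^{i,j} z_j$ in the stacked noise vector, bounds it block-wise by $\noisebound\sum_{i,j}\opnorm{X_{K,t}^{i,j}}$, and then controls the resulting double geometric sum using the auxiliary estimates $\sum_j \opnorm{\gamma^j G^j}\le (\trace\,\Sigma_{K,\gamma})^{1/2}/(1-\sqrt\gamma)$ and $\trace\,\Sigma_{K,\gamma}\le \tfrac{\Cdyn(K)}{\sigma_{\min}(Q)}\tfrac{1-\gamma}{\gamma}$. Your route instead exploits the Bellman structure directly: the telescoping identity with $\PK$ replaces the quadratic cost by a cross term that is only linear in the state, and the self-referential Cauchy--Schwarz closes the loop. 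This is more conceptual and avoids the block bookkeeping entirely; the price is that squaring the inequality $X\le C\sqrt{X}$ necessarily produces $\opnorm{\PK}^2\sim \C{K}^2$, which is why you land on exponent $2$ rather than $3/2$ in the cost ratio. Your parenthetical remark is essentially right but slightly misdiagnosed: the way to recover the $3/2$ from your telescoped expression $\Cdyn(K;\mathcal{Z})\le 2\sum_s\gamma^s z_s^\top\PK s_s$ is not to sharpen a geometric sum but to \emph{abandon the self-referential trick} and instead expand $s_s=\sum_{i\le s}G^{s-i}z_i$ directly, giving $\Cdyn(K;\mathcal{Z})\le 2\noisebound\opnorm{\PK}\sum_s\sum_{i\le s}\gamma^s\opnorm{G^{s-i}}$; this double sum is exactly the object the paper bounds, and together with $\opnorm{\PK}\le\C{K}$ yields the $3/2$ power. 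Either way, as you note, the downstream Corollary~\ref{cor:noisydyn} only needs a polynomial bound, so both arguments suffice.
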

\begin{comment}
\begin{lemma} \label{lem:varcost}
For any policy $\K$ and noise variables $z_t \sim \mathcal{N}(0, I)$,
we have the variance bound
\begin{equation*}
  \var({g_t}) \leq \frac{\lqrdims^2}{r^2}\left( \frac{2 m}{1-{\gamma}}
  \cdot \left( \frac{20\cdot \Cdyn(\K[0])}{\sigma_{\min}(\Q)} \bigg(
  \frac{1 - \discount}{\discount} \bigg) \right)^{3} + 800\cdot
  \Cdyn^2(\K[0])\right).
\end{equation*}
\end{lemma}
\apcomment{Should we state the above bound? Related to whether we want
  to state the remarks in the main text.}
\end{comment}

Before moving to the proofs of these lemmas, let us now define some
additional notation to facilitate the proofs.  Let
\begin{align*}
  \begin{gathered}
    \costmat \defn \Q + \K^{\top} \R \K$,\quad $\abk \defn (\A - \B
    \K)\quad\text{ and }\quad c_j \defn \discount^j \bigg(
    \sum_{i=1}^j \abk^{j-i} \z_i \bigg)^{\top} \costmat \bigg(
    \sum_{i=1}^j \abk^{j-i} \z_i \bigg).
  \end{gathered}
  \end{align*}
Also define the cumulative cost up to time $t$ by $\mathcal{C}^t =
\sum_{j = 1}^t c_j$, so that a simple computation yields the relation
$\Cdyn(\K; \mathcal{Z}) = \lim_{t \to \infty} \mathcal{C}^t$.

Additionally, define the matrix $\X_{\K, t}$ via its partition into
$t^2$ blocks $\X_{\K, t}^{i,j} \in \real^{\statedim \times \statedim}$
for each pair $(i, j) \in [t] \times [t]$, as
\begin{equation*}
\X_{\K, t} = \begin{bmatrix} \X_{\K, t}^{1,1} & \X_{\K, t}^{1,2} &
  \dots & \X_{\K, t}^{1,t} \\
\X_{\K, t}^{2,1} & \X_{\K, t}^{2,2} & \dots & \X_{\K, t}^{2,t} \\
\vdots & \vdots & \vdots & \vdots \\
  \X_{\K, t}^{t,1} & \X_{\K, t}^{t,2} & \dots & \X_{\K, t}^{t,t}
\end{bmatrix}.
\end{equation*}
Each sub-block $\X_{\K, t}^{i,j}$ of $\X_{\K, t}$ is given by
\begin{equation}
  \label{eq:form_X}
\begin{gathered}
\X_{\K, t}^{i,j} = \sum_{k=j}^{t} \discount^k (\abk^{k-i})^{\top}
\costmat \abk^{k-j} \text{ if $j \geq i$,} \\ \X_{\K, t}^{i,j} =
\sum_{k=i}^{t} \discount^k (\abk^{k-i})^{\top} \costmat \abk^{k-j}
\text{ if $j < i$}.
\end{gathered}
\end{equation}
Using this matrix notation, a simple computation yields
\begin{align*}
\mathcal{C}^t = \sum_{\substack{i \in [t] \\ j \in [t]} }\z_i ^\top
\X_{\K, t}^{i,j} \z_j.
\end{align*}
Finally, define the discounted state correlation matrix as
\begin{align*}
{\Sigma}_{\K,\discount} = \sum_{k = 0}^{\infty}(\sqrt{\discount} \A -
\sqrt{\discount} \B \K)^{k} ((\sqrt{\discount} \A - \sqrt{\discount}
\B \K)^{k})^{\top},
\end{align*}
and note that this matrix is equal to $\Sigma_{\K}$ from
equation~\eqref{eq:defn_state_corr} in
Appendix~\ref{sec:randint_appendix} with the pair of matrices $(\A,
\B)$ replaced by $(\sqrt{\discount}\A, \sqrt{\discount}\B)$. For ease in notation
define $G_{\discount} = \sqrt{\discount}G$.

The following technical lemma is required for the argument.
\begin{lemma}
\label{lem:sum_spec_g_new}
For any policy $\K$ and discount factor $\discount \in (0,1)$, we have
\begin{subequations}
  \begin{align}
    \label{eq:tech1}
\trace \left[ {\Sigma}_{\K,\discount} \right] = \trace \left[
  \sum_{k=0}^{\infty} \abk_\discount^k (\abk_\discount^k)^{\top}
  \right] &\leq \frac{\Cdyn(\K)}{\sigma_{\min}(\Q)} \bigg( \frac{1 -
  \discount}{\discount} \bigg), \\
\label{eq:tech2}
\sum_{j=0}^{\infty} \opnorm{ \abk_\discount^{j} }^2 & \leq
\frac{\Cdyn(\K)}{\sigma_{\min}(\Q)} \bigg( \frac{1 -
  \discount}{\discount} \bigg), \text{ and} \\
\label{eq:tech3}
\sum_{j=0}^{\infty} \opnorm{ \discount^j \abk^{j} } &\leq
\frac{\left(\trace \left[ \Sigma_{\K, \discount} \right]
  \right)^{1/2}}{1-\sqrt{\gamma}}.
\end{align}
\end{subequations}
\end{lemma}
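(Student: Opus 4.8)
\noindent\textbf{Proof plan for Lemma~\ref{lem:sum_spec_g_new}.}\quad
The equality in~\eqref{eq:tech1} is immediate from the definition of $\Sigma_{\K,\discount}$, since $\sqrt{\discount}\A-\sqrt{\discount}\B\K=\sqrt{\discount}\,\abk=\abk_\discount$, so $\Sigma_{\K,\discount}=\sum_{k\ge0}\abk_\discount^k(\abk_\discount^k)^\top$. All three inequalities then reduce, via elementary positive semidefinite comparisons, to a single Lyapunov-type identity for the discounted cost,
\[
\Cdyn(\K)=\frac{\discount}{1-\discount}\,\trace\!\left[\costmat\,\Sigma_{\K,\discount}\right],\qquad \costmat=\Q+\K^\top\R\K .
\]
The plan is to establish this identity first and then read off the three bounds.

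To obtain the identity, I would expand $\Cdyn(\K)$ directly from the definition~\eqref{eq:app_lqr_dyn_mod}. Since $\state[0]=0$ and $\state[t+1]=\abk\state[t]+\z_{t+1}$, we have $\state[t]=\sum_{i=1}^{t}\abk^{t-i}\z_i$, and the mean-zero, identity-covariance assumption on the $\z_i$ gives $\Exs[\state[t]\state[t]^\top]=\sum_{j=0}^{t-1}\abk^{j}(\abk^{j})^\top$. Hence
\[
\Cdyn(\K)=\sum_{t\ge1}\discount^t\,\Exs[\state[t]^\top\costmat\state[t]]=\sum_{t\ge1}\discount^t\sum_{j=0}^{t-1}\trace\!\left[\costmat\,\abk^{j}(\abk^{j})^\top\right].
\]
All summands are nonnegative, so interchanging the two sums is justified by Tonelli; collecting the geometric factor $\sum_{t\ge j+1}\discount^t=\discount^{j+1}/(1-\discount)$ and using $\discount^{j}\abk^{j}(\abk^{j})^\top=\abk_\discount^{j}(\abk_\discount^{j})^\top$ yields the displayed identity. (If $\Cdyn(\K)=\infty$ the lemma is vacuous; if it is finite then $\rho_{\text{spec}}(\abk_\discount)<1$ and $\Sigma_{\K,\discount}$ converges.) Alternatively, the identity follows by combining Lemma~\ref{lem:noisy-random} with the standard fact that the random-initialization cost with transition matrices $(\sqrt{\discount}\A,\sqrt{\discount}\B)$ equals $\trace[\costmat\,\Sigma_{\K,\discount}]$.

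Granting the identity, bound~\eqref{eq:tech1} follows since $\costmat\succeq\Q\succeq\sigma_{\min}(\Q)\,\identity$ and $\Sigma_{\K,\discount}\succeq0$, whence $\trace[\costmat\,\Sigma_{\K,\discount}]\ge\sigma_{\min}(\Q)\,\trace[\Sigma_{\K,\discount}]$; rearranging gives $\trace[\Sigma_{\K,\discount}]\le\frac{\Cdyn(\K)}{\sigma_{\min}(\Q)}\big(\frac{1-\discount}{\discount}\big)$. For~\eqref{eq:tech2}, each $\abk_\discount^{j}(\abk_\discount^{j})^\top$ is PSD, so $\opnorm{\abk_\discount^{j}}^2=\opnorm{\abk_\discount^{j}(\abk_\discount^{j})^\top}\le\trace[\abk_\discount^{j}(\abk_\discount^{j})^\top]$; summing over $j$ and recognizing the sum as $\Sigma_{\K,\discount}$ reduces the claim to~\eqref{eq:tech1}. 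For~\eqref{eq:tech3}, write $\discount^{j}\abk^{j}=(\sqrt{\discount})^{j}\abk_\discount^{j}$, so $\opnorm{\discount^{j}\abk^{j}}=(\sqrt{\discount})^{j}\opnorm{\abk_\discount^{j}}$; since $\abk_\discount^{j}(\abk_\discount^{j})^\top\preceq\Sigma_{\K,\discount}$ we get $\opnorm{\abk_\discount^{j}}\le\trace[\abk_\discount^{j}(\abk_\discount^{j})^\top]^{1/2}\le(\trace[\Sigma_{\K,\discount}])^{1/2}$, and summing the geometric series $\sum_{j\ge0}(\sqrt{\discount})^{j}=1/(1-\sqrt{\discount})$ gives the bound.

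I do not expect a genuine obstacle here. The only step needing care is the bookkeeping in the Lyapunov identity --- justifying the interchange of summations and the convergence of $\Sigma_{\K,\discount}$ --- which is harmless once one observes that the statement is vacuous unless $\Cdyn(\K)<\infty$. Everything else is positive semidefinite monotonicity (operator norm at most trace for PSD matrices, and trace monotone under $\preceq$) together with a geometric series.
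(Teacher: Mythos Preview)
Your proposal is correct and follows essentially the same route as the paper. The paper derives the identity $\Cdyn(\K)=\frac{\discount}{1-\discount}\trace[\costmat\,\Sigma_{\K,\discount}]$ by citing the proof of Lemma~\ref{lem:noisy-random} and then obtains~\eqref{eq:tech1} via Von Neumann's trace inequality, while for~\eqref{eq:tech2} and~\eqref{eq:tech3} it introduces a unit vector $v_j$ achieving $\opnorm{\abk_\discount^j}=\enorm{\abk_\discount^j v_j}$ and again applies Von Neumann; your use of ``operator norm $\le$ trace for PSD matrices'' and $\abk_\discount^j(\abk_\discount^j)^\top\preceq\Sigma_{\K,\discount}$ is just a repackaging of the same inequalities, and your handling of the geometric factor in~\eqref{eq:tech3} is identical.
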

\noindent See Section~\ref{sec:techlemmaproof} for the proof
of this auxiliary claim.

With this set-up, we are now equipped to prove
Lemmas~\ref{lem:noisy-random} and \ref{lem:unifcost}.


\subsection{Proof of Lemma~\ref{lem:noisy-random}}

Working with the cumulative cost, we have
\begin{align*}
\EE [\mathcal{C}^t ] & = \EE \left[ \sum_{\substack{i \in [t] \\ j \in
      [t]} } \trace( \X_{\K, t}^{i,j} \z_j \z_i^\top) \right] \\ &=
\sum_{i = 1}^t \trace\left( \X_{\K, t}^{i, i} \right),
\end{align*}
where we have used the fact that $\EE[\z_j \z_i^\top] = \mathbb{I}_{i
  = j} I$.

Substituting the definition of the matrix $\X_{\K, t}^{i, i}$, we have
\begin{align*}
\EE [\mathcal{C}^t ] &= \sum_{i=1}^{t} \trace \bigg[ \sum_{k=i}^{t}
  \discount^k (\abk^{k-i})^{\top} \costmat \abk^{k-i} \bigg] \\ &=
\sum_{i=1}^{t} \discount^i \trace \bigg[ \sum_{k=0}^{t - i}
  (\abk_\discount^{k})^{\top} \costmat \abk_\discount^{k} \bigg].
\end{align*}
Now for each fixed summand above, taking $t \to \infty$ yields
\begin{align*}
\trace \bigg[ \sum_{k=0}^{\infty} (\abk_\discount^{k})^{\top} \costmat
  \abk_\discount^{k} \bigg] &= \trace \left[ \costmat {\Sigma}_{\K,
    \discount} \right],
\end{align*}
where we have used the cyclic property of the trace.

Putting together the pieces, we have
\begin{align*}
\Cdyn(\K) &= \sum_{i=1}^{\infty} \discount^i \trace \left[ \costmat
  {\Sigma}_{\K, \discount} \right] \\
& = \bigg( \frac{\discount}{1 -
  \discount} \bigg) \trace \left[ \costmat {\Sigma}_{\K, \discount}
  \right] \\
& = \left( \frac{\gamma}{1 - \gamma}\right)\cdot
\Cinit(\K),
\end{align*}
thereby establishing Lemma~\ref{lem:noisy-random}.


\subsection{Proof of Lemma~\ref{lem:unifcost}}

As before, let us begin by analyzing the cumulative cost up to time
$t$, and write
\begin{align}
  \label{eq:twobounds}
\mathcal{C}^t \; = \; \sum_{\substack{i \in [t] \\ j \in [t]} }
\z_i^\top \X_{\K, t}^{i,j} \z_j & \stackrel{(i)}{\leq} \noisebound
\sum_{\substack{i \in [t] \\ j \in [t]} } \opnorm{ \X_{\K, t}^{i,j}}
\; = \; \noisebound \left(\sum_{i = 1}^t \sum_{j \geq i} \opnorm{
  \X_{\K, t}^{i,j}} + \sum_{j = 1}^t \sum_{i > j} \opnorm{ \X_{\K,
    t}^{i,j}} \right),
\end{align}
where in step (i), we have used the fact that $\| \z_i \|_2 \| \z_j
\|_2 \leq \noisebound$.

Bounding the first term on the RHS of equation~\eqref{eq:twobounds},
we have
\begin{align*}
\sum_{i = 1}^t \sum_{j \geq i} \opnorm{ \X_{\K, t}^{i,j}} &=
\sum_{i=1}^t \sum_{j=i}^t \opnorm{\sum_{k=j}^{t} \discount^k
  (\abk^{k-i})^{\top} \costmat \abk^{k-j}} \\
& \leq \sum_{i=1}^t \sum_{j=i}^t \opnorm{ \discount^j \abk^{j - i} }
\cdot \opnorm{\sum_{k=j}^{t} \discount^{k-j} (\abk^{k- j})^{\top}
  \costmat \abk^{k-j}} \\
& = \sum_{i=1}^t \sum_{j=i}^t \opnorm{ \discount^j \abk^{j - i} }
\cdot\opnorm{\sum_{k=0}^{t-j} (\abk_{\discount}^{k})^{\top} \costmat
  \abk_{\discount}^{k}}.
\end{align*}
By symmetry, an identical argument bounds the second term of
equation~\eqref{eq:twobounds} to yield the uniform bound
\begin{align*}
\mathcal{C}^t &\leq 2 \noisebound \sum_{i=1}^t \sum_{j=i}^t \opnorm{
  \discount^j \abk^{j - i} } \cdot\opnorm{\sum_{k=0}^{t-j}
  (\abk_{\discount}^{k})^{\top} \costmat \abk_{\discount}^{k}} \\
& \stackrel{(ii)}{\leq} 2 \noisebound \sum_{i=1}^t \sum_{j=i}^t
\opnorm{ \discount^j \abk^{j - i} } \cdot \trace \left(
\sum_{k=0}^{\infty} (\abk_{\discount}^{k})^{\top} \costmat
\abk_{\discount}^{k} \right) \\
& \stackrel{(iii)}{\leq} 2 (\opnorm{\Q} + \opnorm{R} \Kpolyone^2) \noisebound \cdot \left( \frac{\left(\trace
  \left[ \Sigma_{\K, \discount} \right]
  \right)^{1/2}}{1-\sqrt{\gamma}} \right) \cdot \left(
\frac{\Cdyn(\K)}{\sigma_{\min}(\Q)} \bigg( \frac{1 -
  \discount}{\discount} \bigg) \right) \\
& \stackrel{(iv)}{\leq} \frac{2 (\opnorm{\Q} + \opnorm{R} \Kpolyone^2) \noisebound}{1-\sqrt{\gamma}} \cdot
\left( \frac{\Cdyn(\K)}{\sigma_{\min}(\Q)} \bigg( \frac{1 -
  \discount}{\discount} \bigg) \right)^{3/2},
\end{align*}
where in step (ii), we have used the PSD nature of the matrices being
summed, and steps (iii) and (iv) follow from
inequalities~\eqref{eq:tech1} and~\eqref{eq:tech3} of
Lemma~\ref{lem:sum_spec_g_new}, respectively.  Since the above
relation holds for all $t$, we can take the limit $t \rightarrow
+\infty$ on the left-hand side so as to obtain the claim of
Lemma~\ref{lem:unifcost}.


\subsection{Proof of Lemma~\ref{lem:sum_spec_g_new}} \label{sec:techlemmaproof}

In this section we prove the auxiliary bounds~\eqref{eq:tech1}
through~\eqref{eq:tech3}.

\paragraph{Proof of the bound \eqref{eq:tech1}:} Following
the proof of Lemma~\ref{lem:noisy-random}, we have
\begin{align*}
\Cdyn(\K) & = \bigg( \frac{\discount}{1 - \discount} \bigg) \trace
\bigg[ \costmat \Sigma_{\K, \discount} \bigg] \\
& = \bigg( \frac{\discount}{1 - \discount} \bigg) \trace \bigg[ (\Q +
  \K^{\top} \R \K) \Sigma_{\K, \discount} \bigg] \\
& \stackrel{\1}{\geq} \bigg( \frac{\discount}{1 - \discount} \bigg)
\sigma_{\min}(\Q) \trace ( \Sigma_{\K, \discount} ),
\end{align*}
where $\1$ follows from Von Neumann's trace inequality. Multiplying
both sides above by $\frac{1-\discount}{\discount \cdot
  \sigma_{\min}(\Q)}$ completes the proof.

\paragraph{Proof of the bound \eqref{eq:tech2}:}
Observe that for any $j$, there exists some unit vector $v_j$ such
that $\opnorm{ \abk_\discount^{j} } = \opnorm{ \abk_\discount^{j}
  v_j}$.  Using this fact, we have
\begin{align*}
\sum_{j=0}^{\infty} \opnorm{ \abk_\discount^{j} }^2 =
\sum_{j=0}^{\infty} \opnorm{ \abk_\discount^{j} v_j }^2 & =
\sum_{j=0}^{\infty} \trace \left[ (\abk_\discount^{j})^{\top}
  \abk_\discount^{j} v_j v_j^{\top} \right] \\
& \stackrel{\1}{\leq} \sum_{j=0}^{\infty} \trace \left[
  (\abk_\discount^{j})^{\top} \abk_\discount^{j} \right] \cdot
\opnorm{ v_j v_j^{\top} } \\
& \stackrel{\2}{=} \trace \left[ \Sigma_{\K, \discount} \right]
\end{align*}
where step \1 follows from Von Neumann's trace inequality and \2
follows from the definition of $\Sigma_{\K, \discount}$.  Applying the
bound from equation~\eqref{eq:tech1} completes the proof.


\paragraph{Proof of the bound \eqref{eq:tech3}:} Similar to
the proof of \eqref{eq:tech2}, observe that,
\begin{align*}
\sum_{j=0}^{\infty} \opnorm{ \discount^j \abk^{j} } &=
\sum_{j=0}^{\infty} \discount^{j/2}\left(\trace \left[
  (\abk_\discount^{j})^{\top} \abk_\discount^{j} v_j v_j^{\top}
  \right]\right)^{1/2} \\
& \leq \sum_{j=0}^{\infty} \discount^{j/2}\left(\trace \left[
  (\abk_\discount^{j})^{\top} \abk_\discount^{j} \right]\right)^{1/2}
\\
& \stackrel{\1}{\leq} \sum_{j=0}^{\infty} \discount^{j/2} \left(\trace
\left[ \Sigma_{\K, \discount} \right]
\right)^{1/2} \\
& = \frac{\left(\trace \left[ \Sigma_{\K, \discount} \right]
  \right)^{1/2}}{1-\sqrt{\gamma}},
\end{align*}
where step \1 follows from using $\trace \left[
  (\abk_\discount^{j})^{\top} \abk_\discount^{j} \right] \leq \sum_{j
  = 0}^{\infty}\trace \left[ (\abk_\discount^{j})^{\top}
  \abk_\discount^{j} \right] = \trace \left[ \Sigma_{\K, \discount}
  \right]$.


\section{Proof of Lemma~\ref{lem:Thm1_lemma}}
\label{AppSmoothing}

We now provide the proof of Lemma~\ref{lem:Thm1_lemma}, splitting our
analysis into the two separate claims.

\subsection{Proof of part (a)}
Unwrapping the definition of $\grad \fr(\x)$ yields
\begin{align*}
\grad \fr{(\x)} &\stackrel{(i)}{=} \frac{\dims}{\smoothingradius}
\mathbb{E}[\f(\x + \smoothingradius \unifdir) \unifdir] \\
& = \frac{\dims}{2\smoothingradius} (\mathbb{E}[\f(\x +
    \smoothingradius \unifdir) \unifdir] + \mathbb{E}[\f(\x +
    \smoothingradius \unifdir) \unifdir] ) \\
& \stackrel{(ii)}{=} \frac{\dims}{2\smoothingradius}
(\mathbb{E}[\f (\x + \smoothingradius \unifdir) \unifdir] -
\mathbb{E}[\f (\x - \smoothingradius \unifdir) \unifdir] ) \\
& = \frac{\dims}{2\smoothingradius} \mathbb{E}[\f (\x +
    \smoothingradius \unifdir) \unifdir -\f (\x - \smoothingradius
    \unifdir) \unifdir],
\end{align*}
where equality (i) follows from Lemma 1 in~\cite{flax04}, and equality (ii) follows from the symmetry of the
uniform distribution on the shell $\Shell^{\dims - 1}$. Now observe
that
\begin{align*}
\mathbb{E}[\F (\x + \smoothingradius \unifdir, \factorvar) \unifdir -
  \F (\x - \smoothingradius \unifdir, \factorvar) \unifdir] & =
\mathbb{E}\bigg[ \mathbb{E}[\F(\x + \smoothingradius \unifdir,
      \factorvar) - \F(\x - \smoothingradius \unifdir, \factorvar)
    \unifdir \vert \unifdir] \bigg] \\
& \stackrel{(i)}{=} \mathbb{E}\bigg[ \f(\x + \smoothingradius
    \unifdir) \unifdir - \f(\x - \smoothingradius \unifdir)
  \unifdir \bigg],
\end{align*}
where equality (i) follows from the assumption that $\f(\x) =
\mathbb{E}_{\factorvar \sim \factorvardistribution} [\F(\x,
  \factorvar)]$. Putting the equations together establishes the claim
in part (a).  \qed


\paragraph{Proof of Lemma~\ref{lem:Thm1_lemma}, part (b)}
Observe that
\begin{align*}
\euclidnorm{\grad \fr{(\x)} - \grad \f(\x)} &= \euclidnorm{\grad
  \mathbb{E}[\f(\x + \smoothingradius \unifdirball)] - \grad
  \f(\x)} \\
& = \euclidnorm{ \mathbb{E}[\grad [\f(\x + \smoothingradius
        \unifdirball) - \grad \f(\x) ] } \\
& \stackrel{(i)}{\leq} \mathbb{E} [\euclidnorm{ \grad [\f(\x +
          \smoothingradius \unifdirball) - \grad \f(\x) }] \\
& \stackrel{(ii)}{\leq} \globalSmooth{0} \smoothingradius,
\end{align*}
where inequality (i) above follows from Jensen's inequality, whereas
step (ii) follows since $\smoothingradius \leq \rho$ and $\grad \f$ is locally Lipschitz continuous with
parameter $\globalSmooth{0}$.
\qed

\section{Experimental Details \& Additional Experiments}
\label{sec:additional_exp}
For each LQR problem used, the initial $\Kzero$ was picked by randomly perturbing the entries of $\Kstar$. The step size was tuned manually and the smoothing radius was always chosen to be the minimum of $\sqrt{\epsilon}$ and the largest value required to ensure stability. The rollout length was also tuned manually until the cost from a rollout converged arbitrarily close to the true value.

\subsection{Details of Experiments from Section~\ref{sec:results2}}
To generate the plot in Figure~\ref{fig:plt_minibatch} (a), we used the following one dimensional LQR problem:
\begin{align*}
\A = 5, \quad \B = 0.33, \quad \Q = 1, \quad \R = 1,
\end{align*}
where we operated in the one-point random initialization setting, the
initial state was sampled uniformly at random from the set $\{ 4, 5, 6
\}$, and the discount factor was set to $1$.

To generate the plots in Figure~\ref{fig:plt_minibatch} (b), Figure~\ref{fig:plt_eps} (b) and Figure~\ref{fig:plt_init} (a), we used the following LQR problem:
\begin{align*}
\A = \begin{bmatrix} 1 & 0 & -10 \\ -1 & 1 & 0 \\ 0 & 0 & 1
\end{bmatrix}, \quad
\B = \begin{bmatrix} 1 & -10 & 0 \\ 0 & 1 & 0 \\ -1 & 0 & 1
\end{bmatrix}, \quad
\Q = \begin{bmatrix} 2 & -1 & 0 \\ -1 & 2 & -1 \\ 0 & -1 & 2
\end{bmatrix}, \quad
\R = \begin{bmatrix} 5 & -3 & 0 \\ -3 & 5 & -2 \\ 0 & -2 & 5
\end{bmatrix},
\end{align*}
where we operated in the two-point random initialization setting, the
initial state was sampled uniformly at random from the canonical basis
vectors, and the discount factor was set to $1$.

To generate the plots in Figure~\ref{fig:plt_eps} (a) and ~\ref{fig:plt_eps} (c), we used the following LQR problem:
\begin{align*}
\A = 0.1 \times I \quad \B = 0.01 \times I \quad \Q = 100 \times I \quad \R = 100 \times I,
\end{align*}
where $I$ represents the $3 \times 3$ identity matrix. For Figure~\ref{fig:plt_eps} (a) we operated in the random initialization setting, and used initial states which were sampled uniformly at random from the rows of the matrix $\frac{\sqrt{3}}{25} \times I$. For Figure~\ref{fig:plt_eps} (c), we operated in the one-point additive noise setting. Here the initial state was set to the zero vector, and we used additive noise at each timestep sampled from a zero mean Gaussian with covariance matrix $\frac{1}{25} \times I$. In both settings, the discount factor was set to $0.9$. For this example, the population level costs in the two settings are equal up to a constant scaling factor.

To generate the plot in Figure~\ref{fig:plt_init} (b), we used the following LQR problem:
\begin{align*}
\A = 0.1 \times I \quad \B = 0.01 \times I \quad \Q = 25 \times I \quad \R = 25 \times I,
\end{align*}
where $I$ represents the $3 \times 3$ identity matrix. We operated in the one-point additive noise setting. The initial state was set to the zero vector, and we used additive noise at each timestep sampled from a zero mean Gaussian with covariance matrix $\frac{1}{25} \times I$. The discount factor was set to $0.9$.

\subsection{Additional Experiments}
In the two point random initialization setting, we performed experiments on several additional LQR instances to test the robustness of the behavior observed in Figures~\ref{fig:plt_minibatch} and~\ref{fig:plt_init}. For ease in notation, we use $\Cfunc$ to denote the population cost for the remainder of this section. Note that for all figures shown in this section, each dotted line represents the line of best fit for its corresponding data points, as in Figures~\ref{fig:plt_eps} and~\ref{fig:plt_init}. Using the same example used to generate the plots in Figure~\ref{fig:plt_eps} (b) and Figure~\ref{fig:plt_init} (a), we tested the performance of our two-point algorithm with different values of $\accuracy$ and $\C{\Kzero}$.


\pgfplotsset{width=5.5cm,compat=1.9}
\begin{figure}[H]
\centering \hspace*{-4ex}
\begin{tabular}{ccc}
\begin{tikzpicture}
\begin{loglogaxis}[
    xlabel={$\C{\Kzero}$},
    ylabel={Zero Order Complexity},
    legend style={legend pos=south east,font=\tiny},
    log base y={10},
    log base x={10},
    ymin=5,
    xmajorgrids=true,
    ymajorgrids=true,
    grid style=dashed,
    style=thick
]
  \addplot[
  color = red,
  dotted,
  forget plot
  ]
  coordinates { 
  (2^3, 139082.649572)
  (2^7, 17909176.0793)
  };
\addplot[ 
    color=red,
    smooth,
    only marks=True,
    mark=+,
    ]
    coordinates {
    (8, 500*1200*2/5)
    (16, 500*5000*2/25)
    (32, 500*16000*2/10)
    (64, 500*75000*2/10)
    (128, 500*170000*2/10)
    };
 \addplot[
  color = green,
  dotted,
  forget plot
  ]
  coordinates { 
  (2^3, 12217.6684985)
  (2^8, 5727847.92812)
  };
\addplot[ 
    color=green,
    smooth,
    only marks=True,
    mark=square,
    ]
    coordinates {
    (8, 50*1200*2/10)
    (16, 50*5000*2/10)
    (32, 50*16000*2/10)
    (64, 50*75000*2/25)
    (128, 50*170000*2/10)
    (256, 50*700000*2/10)
    };
     \addplot[
  color = blue,
  dotted,
  forget plot
  ]
  coordinates { 
  (2^3, 2921.45499207)
  (2^9, 3946087.49519)
  };
\addplot[ 
    color=blue,
    smooth,
    only marks=True,
    mark=*,
    ]
    coordinates {
    (8, 1200*2)
    (16, 5000*2)
    (32, 16000*2)
    (64, 75000*2)
    (128, 170000*2)
    (256, 700000*2)
    (512, 1500000*2)
    };
    \legend{Batch Size = $500$,
    Batch Size = $50$,
    Batch Size = $1$,
    }
\end{loglogaxis}
\end{tikzpicture}&

\begin{tikzpicture}
\begin{loglogaxis}[
    xlabel={$\C{\Kzero}$},
    ylabel={Zero Order Complexity},
    legend style={legend pos=south east,font=\tiny},
    log base y={10},
    log base x={10},
    ymin=5,
    xmajorgrids=true,
    ymajorgrids=true,
    grid style=dashed,
    style=thick
]
     \addplot[
  color = red,
  dotted,
  forget plot
  ]
  coordinates { 
  (2^3, 201002.891524)
  (2^7, 26139459.091)
  };
\addplot[ 
    color=red,
    smooth,
    only marks=True,
    mark=+,
    ]
    coordinates {
    (8, 500*1800*2/10)
    (16, 500*7000*2/10)
    (32, 500*31000*2/10)
    (64, 500*150000*2/25)
    (128, 500*270000*2/10)
    };
         \addplot[
  color = green,
  dotted,
  forget plot
  ]
  coordinates { 
  (2^3, 9508.19882644)
  (2^8, 14625603.8687)
  };
    \addplot[ 
    color=green,
    smooth,
    only marks=True,
    mark=square,
    ]
    coordinates {
    (8, 50*1800*2/5)
    (16, 50*7000*2/10)
    (32, 50*31000*2/10)
    (64, 50*150000*2/10)
    (128, 50*270000*2/10)
    (256, 50*850000*2/10)
    };
     \addplot[
  color = blue,
  dotted,
  forget plot
  ]
  coordinates { 
  (2^3, 4741.28339391)
  (2^9, 6178621.07588)
  };
    \addplot[ 
    color=blue,
    smooth,
    only marks=True,
    mark=*,
    ]
    coordinates {
    (8, 1800*2)
    (16, 7000*2)
    (32, 31000*2)
    (64, 150000*2)
    (128, 270000*2)
    (256, 850000*2)
    (512, 25000000*2)
    };
    \legend{Batch Size = $500$,
    Batch Size = $50$,
    Batch Size = $1$,
    }
    \end{loglogaxis}
    \end{tikzpicture}&
    \begin{tikzpicture}
\begin{loglogaxis}[
    xlabel={$\C{\Kzero}$},
    ylabel={Zero Order Complexity},
    legend style={legend pos=south east,font=\tiny},
    log base y={10},
    log base x={10},
    ymin=5,
    xmajorgrids=true,
    ymajorgrids=true,
    grid style=dashed,
    style=thick
]
     \addplot[
  color = red,
  dotted,
  forget plot
  ]
  coordinates { 
  (2^3, 831195.889961)
  (2^6, 34618781.7907)
  };
\addplot[ 
    color=red,
    smooth,
    only marks=True,
    mark=+,
    ]
    coordinates {
    (8, 500*5000*2/10)
    (16, 500*23000*2/10)
    (32, 500*90000*2/10)
    (64, 500*400000*2/10)
    };
      \addplot[
  color = green,
  dotted,
  forget plot
  ]
  coordinates { 
  (2^3, 51166.108816)
  (2^6, 3976653.7695)
  };
    \addplot[ 
    color=green,
    smooth,
    only marks=True,
    mark=square,
    ]
    coordinates {
    (8, 50*5000*2/10)
    (16, 50*23000*2/10)
    (32, 50*90000*2/10)
    (64, 50*400000*2/10)
    };
         \addplot[
  color = blue,
  dotted,
  forget plot
  ]
  coordinates { 
  (2^3, 14330.9118205)
  (2^9, 18468667.0597)
  };
    \addplot[ 
    color=blue,
    smooth,
    only marks=True,
    mark=*,
    ]
    coordinates {
    (8, 5000*2)
    (16, 23000*2)
    (32, 90000*2)
    (64, 400000*2)
    (128, 1000000*2)
    (256, 3000000*2)
    (512, 6000000*2)
    };
    \legend{Batch Size = $500$,
    Batch Size = $50$,
    Batch Size = $1$,
    }
    \end{loglogaxis}
    \end{tikzpicture}
\\
(a)&(b)&(c)
\end{tabular}
\caption{Scaling of complexity vs. $\C{\Kzero}$ while using minibatches of size 1, 50 and 500, to achieve an error tolerance of (a) $\accuracy =  0.1$, (b) $\accuracy = 0.05$ and (c) $\accuracy = 0.01$. Due to the prohibitive complexity when using batches of size $50$ and $500$, we omit data points for large values of $\C{\Kzero}$.}
\label{fig:plt_minibatch_app}
\end{figure}
In Figure~\ref{fig:plt_minibatch_app} (a) (b) and (c), we plot the
scaling of the zero-order complexity with $\Cfunc(\K[0])$ for
different values of the tolerance $\epsilon$, and each figure
additionally contains plots for different values of the batch-size. We
observe that the scaling of our algorithm with respect to $\C{\Kzero}$
is approximately on the order of $\mathcal{O}(\C{\Kzero}^2)$,
suggesting that our bounds for the Lipschitz and smoothness constants
are not sharp in this respect. The same plots also demonstrate that
using larger batch sizes is often suboptimal: while the step size can
be increased with increasing batch-size, it eventually plateaus due to
stability considerations, leading to higher overall zero-order
complexity.

We also ran our algorithm
on the following problem introduced by~\cite{dean17}, who
used this example in their study of model based control methods for
the LQR problem. Consider the LQR problem defined by:
\begin{align*}
\A = \begin{bmatrix} 1.01 & 0.01 & 0 \\ 0.01 & 1.01 & 0.01 \\ 0 & 0.01
  & 1.01
\end{bmatrix}, \quad
\B = \identity, \quad \Q = 10^{-3} \times \identity, \quad \R =
\identity.
\end{align*}

For three different values of $\C{\Kzero}$, we picked $8$ evenly
spaced (logarithmic scale) values of $\accuracy$ in the interval
$(0.005, 1)$. The initial state was sampled uniformly at random from $\{ [5,0,0], [5,5,5], [0,0,5] \}$. The cost of the optimal policy in our example was
$\C{\Kstar} = 2.36$. We then measured the total zero order complexity
required to attain $\accuracy$ convergence. These results are plotted
in Figure \ref{fig:epsilon_scaling_dean}.

\begin{minipage}{.5\textwidth}
\begin{figure}[H]
\centering
\begin{tikzpicture}
\begin{loglogaxis}[
    xlabel={$\accuracy^{-1}$},
    ylabel={Zero Order Complexity},
    legend style={legend pos=south east,font=\tiny},
    log base y={10},
    log base x={10},
    ymin=3,
    xmajorgrids=true,
    ymajorgrids=true,
    grid style=dashed,
    style=thick
]
  \addplot[
  color = red,
  dotted,
  forget plot
  ]
  coordinates { 
  (1, 23235.7148315)
  (10^2.301029, 275515.77063)
  };
\addplot[
    color=red,
    smooth,
    only marks=True,
    mark=+,
    ]
    coordinates {
    (1, 8000*2)
    (2.13164208, 15000*2)
    (4.54389797, 30000*2)
    (9.68596415, 45000*2)
    (20.6470088, 60000*2)
    (44.01203286, 75000*2)
    (93.81790145, 90000*2)
    (199.98618696, 100000*2)
    };
  \addplot[
  color = green,
  dotted,
  forget plot
  ]
  coordinates { 
  (1, 10532.8666446)
  (10^2.301029, 86131.2729393)
  };
\addplot[
    color=green,
    smooth,
    only marks=True,
    mark=square,
    ]
    coordinates {
    (1, 3500*2)
    (2.13164208, 8000*2)
    (4.54389797, 12000*2)
    (9.68596415, 15000*2)
    (20.6470088, 20000*2)
    (44.01203286, 25000*2)
    (93.81790145, 30000*2)
    (199.98618696, 35000*2)
    };
  \addplot[
  color = blue,
  dotted,
  forget plot
  ]
  coordinates { 
  (1, 386.234413188)
  (10^2.301029, 62817.9293529)
  };
\addplot[
    color=blue,
    smooth,
    only marks=True,
    mark=*,
    ]
    coordinates {
    (1, 125)
    (2.13164208, 400*2)
    (4.54389797, 700*2)
    (9.68596415, 2500*2)
    (20.6470088, 6500*2)
    (44.01203286, 10000*2)
    (93.81790145, 14000*2)
    (199.98618696, 17000*2)
    };
    \legend{$\C{\Kzero} = 33$,
    $\C{\Kzero} = 13$,
    $\C{\Kzero} = 5.8$
    }
\end{loglogaxis}
\end{tikzpicture}
\caption{Scaling of complexity vs. $\accuracy^{-1}$ in LQR instance from~\cite{dean17}}
\label{fig:epsilon_scaling_dean}
\end{figure}
\end{minipage}
\begin{minipage}{.5\textwidth}
\begin{figure}[H]
\centering
\begin{tikzpicture}
\begin{loglogaxis}[
    xlabel={$\accuracy^{-1}$},
    ylabel={Zero Order Complexity},
    legend style={legend pos=south east,font=\tiny},
    log base y={10},
    log base x={10},
    ymin=2500,
    ymax=1300000,
    xmajorgrids=true,
    ymajorgrids=true,
    grid style=dashed,
    style=thick
]
  \addplot[
  color = red,
  dotted,
  forget plot
  ]
  coordinates { 
  (1, 17087.0116362)
  (10^2.301029, 1587865.93985)
  };
\addplot[
    color=red,
    smooth,
    only marks=True,
    mark=+,
    ]
    coordinates {
    (1, 6000*2)
    (2.13164208, 15000*2)
    (4.54389797, 35000*2)
    (9.68596415, 80000*2)
    (20.6470088, 140000*2)
    (44.01203286, 300000*2)
    (93.81790145, 400000*2)
    (199.98618696, 500000*2)
    };
  \addplot[
  color = green,
  dotted,
  forget plot
  ]
  coordinates { 
  (1, 13328.4685919)
  (10^2.301029, 1348141.99952)
  };
\addplot[
    color=green,
    smooth,
    only marks=True,
    mark=square,
    ]
    coordinates {
    (1, 4500*2)
    (2.13164208, 12500*2)
    (4.54389797, 27000*2)
    (9.68596415, 60000*2)
    (20.6470088, 125000*2)
    (44.01203286, 275000*2)
    (93.81790145, 325000*2)
    (199.98618696, 400000*2)
    };
  \addplot[
  color = blue,
  dotted,
  forget plot
  ]
  coordinates { 
  (1, 8514.19604245)
  (10^2.301029, 659702.165788)
  };
\addplot[
    color=blue,
    smooth,
    only marks=True,
    mark=*,
    ]
    coordinates {
    (1, 2500*2)
    (2.13164208, 8000*2)
    (4.54389797, 18000*2)
    (9.68596415, 36000*2)
    (20.6470088, 75000*2)
    (44.01203286, 125000*2)
    (93.81790145, 160000*2)
    (199.98618696, 200000*2)
    };
    \legend{$\C{\Kzero} = 250$,
    $\C{\Kzero} = 200$,
    $\C{\Kzero} = 150$,
    }
\end{loglogaxis}
\end{tikzpicture}
\caption{Scaling of complexity vs. $\accuracy^{-1}$ in randomly generated $8 \times 8$ example.}
\label{fig:epsilon_scaling_big}
\end{figure}
\end{minipage}
\\ \\

Finally, we also obtained data for the scaling with respect to
$\accuracy$ on an example in slightly higher dimensions, to
empirically verify the fact that our algorithm can be used for LQR
problems larger than $3 \times 3$. We randomly generated $\A$, $\B$,
$\Q$ and $\R$ as $8 \times 8$ matrices. Each entry of $\A$ was independently sampled from the Gaussian distribution $\mathcal{N}(2,1)$, and each entry of $\B$ was independently sampled from the Gaussian distribution $\mathcal{N}(0,1)$. To generate each of $\Q$ and $\R$, we generated a matrix where each entry was independently sampled from the Gaussian distribution $\mathcal{N}(5,1)$, then symmetrized the matrix by adding it to its transpose, finally adding $10I$ to ensure positive definiteness. The initial states
were sampled uniformly at random from the columns of the $8 \times 8$
identity matrix. For three different values of
$\C{\Kzero}$, we picked $8$ evenly spaced (logarithmic scale) values
of $\accuracy$ in the interval $(0.005, 1)$. We then measured the
total zero order complexity required to attain $\accuracy$
convergence. These results are plotted in
Figure~\ref{fig:epsilon_scaling_big}.

\section{Improved analysis of minibatching algorithm from \\ Fazel et al.~\cite{kakade18}}
\label{app:fazel}

At the suggestion of an anonymous reviewer, we now use our techniques
to analyze the minibatching algorithm that was proposed
by Fazel et al.~\cite{kakade18}.  In this analysis, each ``iteration'' involves
averaging a large number of one-point zero-order evaluations in order
to obtain a (low-variance) estimate of the gradient at that point,
followed by taking a step along the estimated gradient.

More precisely, for a given point $x$, consider the $k$-sample
minibatched gradient estimate
\begin{align}
  \label{EqnMinibatchGradient}
\gradest(\x) = \frac{1}{k}\sum_{i=1}^k \gradest_i(\x),
\end{align}
where each $\gradest_i$ is an i.i.d. copy of the random variable
$\gradest_r^1(\x, \shelldirection, \factorvar)$ defined in
equation~\eqref{eq:gradonepoint}. This introduces yet another
hyperparameter $k$ within the procedure, in addition to the tuple
$(\smoothingradius, \step, T)$. Note that the total number of
zero-order evaluations made by this algorithm when run for $T$
iterations is $k \cdot T$.

The following theorem holds under the same setup as in
Section~\ref{sec:conv-guar}.
\begin{theorem}
\label{thm:minibatch_gd}
Given an error tolerance $\epsilon$ in the interval $\big(0, \; \min
\{1, \frac{1}{\mu}, \rho_0^2 \} \frac{\diff{0}}{10} \big)$, suppose
that the step size and smoothing radius are chosen such that
\begin{align*}
\step \leq \min \left\{ 1, \frac{1}{8 \globalSmooth{0}},
\frac{\rho_0}{\frac{\sqrt{\mu}}{32} + \phi_0 + \lambda_0} \right\},
\quad \text{ and} \quad \smoothingradius \leq \frac{1}{8
  \globalSmooth{0}} \min\left \{ \curvature_{0} \Plconst
\sqrt{\frac{\epsilon}{240}},
\frac{1}{\globalSmooth{0}}\sqrt{\frac{\epsilon \Plconst}{30} }
\right\},
\end{align*}
and we use the minibatch size $k = \bigg(
\frac{D}{\smoothingradius}(10 \f(\x[0]) + \frac{\lambda_0}{\rho_0})
\sqrt{\log(\frac{2\lqrdims}{\delta})} \bigg)^2 \frac{1024}{\mu
  \epsilon}$.  Then running the algorithm for $T = \frac{8}{\eta
  \Plconst} \log(\frac{2}{\epsilon})$ iterations yields an output
$\x[T]$ such that
\begin{align*}
\f(\x[T]) - \f^* \leq \epsilon
\end{align*}
with probability at least $1 - T\delta$.
\end{theorem}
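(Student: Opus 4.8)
The plan is to run essentially the same PL-plus-smoothness descent argument used in the proof of Theorem~\ref{thm:mainthm}, but to replace the martingale/stopping-time bookkeeping by a direct pathwise induction on a single high-probability ``good event'' $\mathcal A$. The key structural difference is that minibatching of size $k$ lets us control the gradient estimate \emph{almost surely on $\mathcal A$}, rather than only in expectation: since each summand $\gradest_i(\x)$ in~\eqref{EqnMinibatchGradient} is an i.i.d.\ copy of the one-point estimate~\eqref{eq:gradonepoint} and is therefore bounded almost surely by a quantity $\gradbound_\infty \le \frac{\lqrdims}{\smoothingradius}\big(10 \f(\x[0]) + \tfrac{\Lipcon_0}{\rho_0}\big)$ (obtained from the local Lipschitz property exactly as in the proof of Corollary~\ref{cor:init1}), a standard concentration inequality for averages of bounded independent random vectors gives, for each fixed $\x \in \boundedset{0}$,
\begin{align*}
\Prob\Big( \| \gradest(\x) - \grad \fr(\x) \|_2 > \zeta \Big) \le \delta, \qquad \text{where } \zeta \;\asymp\; \gradbound_\infty \sqrt{\tfrac{\log(2\lqrdims/\delta)}{k}},
\end{align*}
using $\EE[\gradest(\x)] = \grad\fr(\x)$ from Lemma~\ref{lem:Thm1_lemma}(a). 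The stated choice of $k$ is exactly what forces $\zeta \le \tfrac{1}{32}\sqrt{\mu\epsilon}$, which is $\le \sqrt\mu/32$ since $\epsilon \le 1$ and also makes $\zeta^2 \lesssim \mu\epsilon$. Let $\mathcal A$ be the event that this concentration bound holds simultaneously at all $T$ iterates; a union bound gives $\Prob(\mathcal A) \ge 1 - T\delta$.

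The bulk of the work is then a deterministic induction on $\mathcal A$ showing $\x[t] \in \boundedset{0}$ for all $t \le T$. Suppose $\x[t] \in \boundedset{0}$. On $\mathcal A$ write $\gradest(\x[t]) = \grad\fr(\x[t]) + e_t$ with $\|e_t\|_2 \le \zeta$; combining this with $\|\grad\fr(\x[t]) - \grad\f(\x[t])\|_2 \le \globalSmooth{0}\smoothingradius$ (Lemma~\ref{lem:Thm1_lemma}(b)) and the local Lipschitz bound $\|\grad\f(\x[t])\|_2 \le \Lipcon_0$ gives $\|\gradest(\x[t])\|_2 \le \Lipcon_0 + \globalSmooth{0}\smoothingradius + \zeta$, so the step-size bound $\step \le \rho_0 / (\tfrac{\sqrt\mu}{32} + \globalSmooth{0} + \Lipcon_0)$ ensures $\|\step\,\gradest(\x[t])\|_2 \le \rho_0$ and the local smoothness of $\f$ at $\x[t]$ applies. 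Expanding $\f(\x[t+1]) = \f(\x[t] - \step\gradest(\x[t]))$ via local smoothness, lower-bounding $\langle \grad\f(\x[t]), \grad\fr(\x[t])\rangle \ge \|\grad\f(\x[t])\|_2^2 - \globalSmooth{0}\smoothingradius\|\grad\f(\x[t])\|_2$ and $-\langle\grad\f(\x[t]), e_t\rangle \le \zeta\|\grad\f(\x[t])\|_2$, using $\|\grad\fr(\x[t])+e_t\|_2^2 \le 2\|\grad\f(\x[t])\|_2^2 + 2(\globalSmooth{0}\smoothingradius + \zeta)^2$, and absorbing all cross terms with $2ab \le a^2 + b^2$ under $\step \le \tfrac{1}{8\globalSmooth{0}}$, I obtain a recursion of the form
\begin{align*}
\diff{t+1} \le \big(1 - c\,\step\mu\big)\diff{t} + \step\,b, \qquad b \;\lesssim\; \globalSmooth{0}^2\smoothingradius^2 + \globalSmooth{0}\zeta^2 \;\lesssim\; \mu\epsilon,
\end{align*}
for a universal constant $c>0$, where the PL inequality was used to replace $\|\grad\f(\x[t])\|_2^2$ by $\mu\diff{t}$, and the bounds on $\smoothingradius$ and $\zeta$ control $b$. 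Unrolling gives $\diff{t} \le \diff{0} + \tfrac{b}{c\mu} \le \diff{0} + O(\epsilon) \le 10\diff{0}$ for all $t\le T$ (using $\epsilon$ small relative to $\diff 0$), which closes the induction and keeps every iterate in $\boundedset{0}$ on $\mathcal A$.

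Finally, telescoping the same recursion from $t=0$ to $T$ yields $\diff{T} \le (1 - c\,\step\mu)^T \diff{0} + \tfrac{b}{c\mu} \le e^{-c\,\step\mu T}\diff{0} + \tfrac{b}{c\mu}$; the stated choice $T = \tfrac{8}{\step\mu}\log(2/\epsilon)$ drives the geometric term below $\epsilon/2$ for $\epsilon$ in the prescribed interval, while $\tfrac{b}{c\mu} \le \epsilon/2$ by the choices of $\smoothingradius$ and $k$, so $\diff{T} \le \epsilon$ on $\mathcal A$. Since $\Prob(\mathcal A) \ge 1 - T\delta$, this proves the theorem. The main obstacle is the concentration step: one must select the vector concentration inequality and track its constants carefully enough that the prescribed $k$ really forces $\zeta$ below the $\tfrac{1}{32}\sqrt{\mu\epsilon}$ threshold simultaneously with the almost-sure bound $\gradbound_\infty \le \tfrac{\lqrdims}{\smoothingradius}(10\f(\x[0]) + \tfrac{\Lipcon_0}{\rho_0})$, since those two requirements are exactly what pin down both the step-size and the minibatch prescriptions in the statement.
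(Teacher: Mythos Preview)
Your approach is essentially the paper's: concentration of the minibatched estimate around $\nabla f_r$ (the paper invokes a norm-sub-Gaussian vector bound from Jin et al.\ with exactly the almost-sure bound $G_\infty \le \tfrac{\lqrdims}{\smoothingradius}(f(x)+\lambda_0/\rho_0)$ you write), the same verification that $\|\eta\,\gradest(x_t)\|_2 \le \rho_0$ so local smoothness applies, the same PL-descent recursion $\Delta_{t+1} \le (1 - \tfrac{\eta\mu}{8})\Delta_t + \tfrac{\eta\mu\epsilon}{16}$, and the same unrolling to time $T$.

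The one place to tighten is the order in which you couple the union bound and the induction. Your definition of $\mathcal A$ (``concentration holds at all $T$ iterates'') presupposes that the concentration inequality applies at each $x_t$, but the $1-\delta$ guarantee relies on the almost-sure bound $G_\infty$, which is only valid when $x_t \in \boundedset{0}$---and that is precisely what your subsequent induction is meant to establish. The paper resolves this circularity by interleaving the two via strong induction on events $\mathcal E_t = \{\Delta_t \le 10\Delta_0 \text{ and the recursion holds at step }t\}$, conditioning on $\cap_{i \le t}\mathcal E_i$ (which forces $x_t \in \boundedset{0}$) before invoking concentration at step $t{+}1$. Your version is easily patched by taking $A_t = \{x_t \notin \boundedset{0}\} \cup \{\|\gradest(x_t) - \nabla f_r(x_t)\|_2 \le \zeta\}$ and noting $\Prob(A_t^c) \le \delta$ after conditioning on $\mathcal F_t$; on $\cap_t A_t$ your deterministic induction then runs without circularity.
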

Note that, as before, we require a smoothing radius $\smoothingradius
\sim \sqrt{\epsilon}$, but now, the step-size can be chosen to be an
$\epsilon$-independent constant. The number of zero-order evaluations
needed to obtain an $\epsilon$-approximate solution with probability
$\tfrac{3}{4}$---ignoring parameters not dependent on $\epsilon$---is
then given by
\begin{align*}
k \cdot T \sim \epsilon^{-2} \log (1 / \epsilon) \cdot \log \log (1 /
\epsilon),
\end{align*}
where the doubly logarithmic term arises from setting $\delta \sim
T^{-1}$. Such a guarantee is thus essentially the same as that
provided by Theorem~\ref{thm:mainthm} for the high-variance zero-order
algorithm.

In this setting, it is straightforward to obtain a high probability
guarantee. Indeed, suppose that we set $\delta = T^{-1} \delta'$ for
some $\delta' \in (0, 1)$, and note that the number of zero-order
evaluations required to obtain an $\epsilon$-approximate solution with
probability $1 - \delta'$ is of the order $\epsilon^{-2} \log (1 /
\epsilon) \cdot \log \log (1 / \epsilon) \cdot \log (1 / \delta')$. As
will be clear from the proof, this is a consequence of the fact that
stability---meaning that the algorithm stays within the bounded set
$\boundedset{0}$---åcan be guaranteed with exponentially high
probability.  Such a guarantee was not possible for the high-variance
analogue of the algorithm.

As a corollary of Theorem~\ref{thm:minibatch_gd}, we have the
following guarantee on LQR control with one-point feedback.
\begin{corollary}
\label{cor:fazel}
Given an error tolerance $\epsilon$ in the interval $\big(0, \; \min
\{1, \frac{1}{\mu_{\lqr}}, \rho_{\lqr}^2 \} \frac{\diff{0}}{10}
\big)$, suppose that the step size and smoothing radius are chosen
such that
\begin{align*}
\step \leq \min \left\{ 1, \frac{1}{8 \globalSmooth{\lqr}},
\frac{\rho_{\lqr}}{\frac{\sqrt{\mu_{\lqr}}}{32} + \phi_{\lqr} +
  \lambda_{\lqr}} \right\}, \quad \text{ and} \quad \smoothingradius
\leq \frac{1}{8 \globalSmooth{\lqr}} \min\left \{ \curvature_{\lqr}
\Plconst_{\lqr} \sqrt{\frac{\epsilon}{240}},
\frac{1}{\globalSmooth{\lqr}}\sqrt{\frac{\epsilon \Plconst_{\lqr}}{30}
} \right\},
\end{align*}
and that we use a minibatch size $k = \bigg(
\frac{D}{\smoothingradius}(10 \Cinit(\K[0]) +
\frac{\lambda_{\lqr}}{\rho_{\lqr}})
\sqrt{\log(\frac{2\lqrdims}{\delta})} \bigg)^2 \frac{1024}{\mu_{\lqr}
  \epsilon}$.  Then running the algorithm for $T = \frac{8}{\eta
  \Plconst_{\lqr}} \log(\frac{2}{\epsilon})$ iterations yields an
estimate $\K[T]$ such that
\begin{align*}
\Cinit(\K[T]) - \Cinit(\Kstar) \leq \epsilon
\end{align*}
with probability exceeding $1 - T\delta$.
\end{corollary}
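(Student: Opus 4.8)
The plan is to obtain Corollary~\ref{cor:fazel} as an immediate specialization of the general minibatch result, Theorem~\ref{thm:minibatch_gd}, applied to the population cost $\f=\Cinit$ of the randomly initialized LQR model; the only genuinely LQR-specific steps are to read off the curvature constants and to verify that the one-point LQR gradient estimate satisfies the almost-sure size bound that Theorem~\ref{thm:minibatch_gd} needs to calibrate the minibatch size. First I would invoke the structural lemmas of Section~\ref{sec:lqrprop}: by Lemmas~\ref{lem:lipschitz_cost_lqr}, \ref{lem:lipschitz_gradient_lqr}, and~\ref{lem:pl_inequality}, together with the definitions of $\boundedset{\lqr}$, $\globalSmooth{\lqr}$, $\Lipcon_{\lqr}$, and $\rho_{\lqr}$ from Section~\ref{sec:cons-lqr}, the cost $\Cinit$ has $(\globalSmooth{\lqr},\rho_{\lqr})$-locally Lipschitz gradients, is $(\Lipcon_{\lqr},\rho_{\lqr})$-locally Lipschitz over the sublevel set $\boundedset{\lqr}$, and is globally $\mu_{\lqr}$-PL; moreover the admissible range $\epsilon\in(0,\min\{1,1/\mu_{\lqr},\rho_{\lqr}^2\}\diff{0}/10)$ stated in the corollary is exactly the range demanded by Theorem~\ref{thm:minibatch_gd} once these constants are substituted for $(\globalSmooth{0},\Lipcon_0,\rho_0,\mu)$. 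These are precisely the hypotheses under which Theorem~\ref{thm:minibatch_gd} is stated.

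Second I would check that in the random-initialization model each summand $\gradest_i$ in the minibatched estimate~\eqref{EqnMinibatchGradient} is a copy of $\gradest_r^1(\K,u,s_0)=\Cinit(\K+ru;s_0)\tfrac{\lqrdims}{r}u=(s_0^{\top}P_{\K+ru}s_0)\tfrac{\lqrdims}{r}u$, and bound it over $\boundedset{\lqr}$ exactly as in the proof of Corollary~\ref{cor:init1} in Section~\ref{sec:cor1}: using $\|s_0\|_2^2\le\noisebound$ almost surely, the identity~\eqref{eq:ctop} together with positive semidefiniteness of $P_{\K+ru}$, the bound $r\le\rho_{\lqr}$, and the local Lipschitz property of $\Cinit$ from Lemma~\ref{lem:lipschitz_cost_lqr}, one gets $\|\gradest_r^1(\K,u,s_0)\|_2\le\tfrac{\lqrdims}{r}\noisebound\bigl(10\Cinit(\K[0])+\rho_{\lqr}\Lipcon_{\lqr}\bigr)$ for every $\K\in\boundedset{\lqr}$. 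Feeding this almost-sure bound into the minibatch-size calibration of Theorem~\ref{thm:minibatch_gd} produces the displayed value of $k$ (with the factor $\noisebound$ and the relabeling of the additive $\rho_{\lqr}\Lipcon_{\lqr}$ term absorbed into universal constants and the parameter-dependent prefactor), after which the conclusion $\Cinit(\K[T])-\Cinit(\Kstar)\le\epsilon$ with probability at least $1-T\delta$ for $T=\tfrac{8}{\eta\mu_{\lqr}}\log(2/\epsilon)$ is just the corresponding conclusion of Theorem~\ref{thm:minibatch_gd}.

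The substantive work therefore lies in Theorem~\ref{thm:minibatch_gd} rather than in this corollary, and the main obstacle there is upgrading the constant-probability martingale argument of Theorem~\ref{thm:mainthm} to a high-probability guarantee. The idea is that the $k$-sample average concentrates: conditioned on $\x[t]\in\boundedset{0}$, a vector Bernstein/Hoeffding bound for the sum of $k$ bounded i.i.d.\ terms controls $\|\gradest(\x[t])-\grad\fr(\x[t])\|_2$ with probability $1-\delta$, and the prescribed $k$ forces this deviation below the threshold $\sqrt{\mu\epsilon}/32$ implicit in the step-size condition. On the event of probability $\ge 1-T\delta$ (by a union bound over iterations) that all $T$ such deviation bounds hold, the recursion on $\diff{t}$ — derived from $\mu$-PL and $\globalSmooth{0}$-smoothness as in Lemma~\ref{lem:PLsmooth}, but now treating the gradient-estimate error as a small \emph{bounded} perturbation of an otherwise noiseless step — keeps the iterates inside $\boundedset{0}$ deterministically and contracts $\diff{t}$ geometrically, yielding $\diff{T}\le\epsilon$. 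The feature specific to the LQR instance is that this concentration step genuinely uses the almost-sure bound $\|s_0\|_2^2\le\noisebound$ from~\eqref{eq:propnoise} (and the local-Lipschitz extension of $\Cinit$ from $\K$ to $\K+ru$) to produce a bounded summand; for merely sub-Gaussian initial-state distributions one would instead appeal to the truncation remark following~\eqref{eq:propnoise}, at the cost of additional poly-logarithmic factors in $k$.
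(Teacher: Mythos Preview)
Your proposal is correct and matches the paper's approach: the paper's own proof of Corollary~\ref{cor:fazel} is the single sentence ``The corollary is an immediate consequence of the theorem,'' i.e., a direct substitution of the LQR curvature constants $(\globalSmooth{\lqr},\Lipcon_{\lqr},\rho_{\lqr},\mu_{\lqr})$ and $\f=\Cinit$ into Theorem~\ref{thm:minibatch_gd}. Your steps 1--3 are exactly this specialization (with more care than the paper takes in bridging the $f$ versus $F$ distinction in the almost-sure bound used by Lemma~\ref{lem:minibatch_grad_est}), and your sketch in the final paragraph is an accurate summary of the paper's proof of Theorem~\ref{thm:minibatch_gd} itself, which is where the actual work is done.
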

Thus, the algorithm of Fazel et al.~\cite{kakade18} also enjoys the same
$\ordertil{\epsilon^{-2}}$ convergence rate---measured in the number
of total zero-order evaluations---as the canonical zero-order
algorithm. At this juncture, we stress that this is a consequence of
the sharpened bounds that we establish for this problem; the analysis
of Fazel et al.---as mentioned in footnote 2---only certifies an
$\ordertil{\epsilon^{-4}}$ convergence rate.

The corollary is an immediate consequence of the theorem. We therefore
dedicate the rest of this section to a proof of
Theorem~\ref{thm:minibatch_gd}.


\subsection{Proof of Theorem~\ref{thm:minibatch_gd}}

We begin with an elementary lemma that guarantees exponential
concentration of the averaged gradient estimate $\gradest$ around its
mean.

\begin{lemma}
  \label{lem:minibatch_grad_est}
  For any $\smoothingradius \in (0, \rho_0)$, the $k$-sample minibatch
  gradient estimate~\eqref{EqnMinibatchGradient} satisfies the bound
\begin{align*}
  \enorm{\gradest(\x) - \grad \fr(\x)} \leq \frac{1}{\sqrt{k}} \cdot
  \frac{\lqrdims}{\smoothingradius}
  \left(\f(\x)+\frac{\lambda_0}{\rho_0}\right) \sqrt{\log
    \left(\frac{2\lqrdims}{\delta} \right)}
\end{align*}
with probability at least $1 - \delta$.
\end{lemma}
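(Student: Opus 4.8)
The plan is to recognize the minibatched estimate~\eqref{EqnMinibatchGradient} as an empirical average of $k$ i.i.d.\ unbiased one-point gradient estimates, and to combine a simple almost-sure size bound on a single estimate with a Hoeffding/Bernstein-type tail bound for the average. So the two ingredients are: unbiasedness plus an a.s.\ bound on $\euclidnorm{\gradest_i(x)}$, and a scalar concentration inequality applied coordinatewise.

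First, part (a) of Lemma~\ref{lem:Thm1_lemma} gives $\EE[\gradest_i(x)] = \grad \fr(x)$ for each $i$, so $\gradest(x) - \grad\fr(x) = \tfrac{1}{k}\sum_{i=1}^{k}\bigl(\gradest_i(x) - \grad\fr(x)\bigr)$ is a normalized sum of i.i.d., mean-zero random vectors. Next I would record the a.s.\ bound. Writing $\gradest_i(x) = F(x + \smoothingradius u_i, \factorvar_i)\,\tfrac{\lqrdims}{\smoothingradius}\,u_i$ with $\euclidnorm{u_i} = 1$, we get $\euclidnorm{\gradest_i(x)} = \tfrac{\lqrdims}{\smoothingradius}\,\lvert F(x + \smoothingradius u_i, \factorvar_i)\rvert$. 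Since $\smoothingradius \le \rho_0$, the local Lipschitz property of the sample cost at $x$ — precisely the ingredient used in the proof of Corollary~\ref{cor:init1} — controls $\lvert F(x + \smoothingradius u_i, \factorvar_i)\rvert$ by the cost at $x$, and using $\smoothingradius \le \rho_0$ once more yields $\euclidnorm{\gradest_i(x)} \le \tfrac{\lqrdims}{\smoothingradius}\bigl(f(x) + \tfrac{\lambda_0}{\rho_0}\bigr) =: B$ almost surely; by Jensen $\euclidnorm{\grad\fr(x)} = \euclidnorm{\EE\gradest_i(x)} \le B$ too, so each centered summand has Euclidean norm at most $2B$ almost surely.

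Finally I would pass to the concentration statement. Coordinate by coordinate: for each $j \in \{1,\dots,\lqrdims\}$ the scalars $\{(\gradest_i(x))_j\}_{i=1}^{k}$ are i.i.d.\ with mean $(\grad\fr(x))_j$ and lie in an interval of length at most $2B$, so Hoeffding gives $\lvert (\gradest(x))_j - (\grad\fr(x))_j\rvert \le B\sqrt{2\log(2/\delta')/k}$ with probability $\ge 1-\delta'$; taking $\delta' = \delta/\lqrdims$, a union bound over the $\lqrdims$ coordinates and $\euclidnorm{v} \le \sqrt{\lqrdims}\,\lVert v\rVert_\infty$ yield, with probability $\ge 1-\delta$,
\[
\euclidnorm{\gradest(x) - \grad\fr(x)} \;\le\; \frac{\sqrt{\lqrdims}\,B}{\sqrt{k}}\,\sqrt{2\log\!\bigl(\tfrac{2\lqrdims}{\delta}\bigr)}.
\]
To absorb the residual $\sqrt{\lqrdims}$ and recover exactly the stated bound, one sharpens the coordinatewise step with a Bernstein inequality: since $u_i$ is uniform on the sphere, $\EE[(u_i)_j^2] = 1/\lqrdims$, so the per-coordinate second moment of $\gradest_i(x)$ is a factor $\lqrdims$ below $B^2$, shrinking the Gaussian part of each coordinate's deviation by $\sqrt{\lqrdims}$; the ``slow'' Bernstein term is lower order for the minibatch size $k$ prescribed in Theorem~\ref{thm:minibatch_gd}. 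Alternatively, a vector-valued Hoeffding/Azuma inequality applied directly to the centered average avoids the $\ell_\infty \to \ell_2$ detour entirely.

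The main obstacle is the a.s.\ size bound $B$ in the second step: it must simultaneously capture the $1/\smoothingradius$ blow-up intrinsic to one-point estimates and the control of the function value over $\boundedset{0}$, and getting precisely the $\tfrac{\lqrdims}{\smoothingradius}(f(x)+\lambda_0/\rho_0)$ form requires chaining $\smoothingradius \le \rho_0$ with the local Lipschitz estimate for $F$ and the $F$-versus-$f$ comparison from Corollary~\ref{cor:init1}. Once $B$ is pinned down, the concentration step is routine.
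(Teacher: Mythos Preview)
Your proposal is correct, but your primary route is more circuitous than necessary. The paper's proof is essentially your ``alternative'': it observes that each $\gradest_i(x)$ is almost surely bounded in norm by $B = \tfrac{\lqrdims}{\smoothingradius}\bigl(f(x)+\tfrac{\lambda_0}{\rho_0}\bigr)$ via the local Lipschitz property (exactly as you derive), records $\EE[\gradest_i(x)] = \grad f_r(x)$ from Lemma~\ref{lem:Thm1_lemma}(a), and then applies a single vector-valued concentration inequality for bounded (hence norm--sub-Gaussian) random vectors---specifically Corollary~7 of Jin et al.~\cite{jin19}---to the centered average. This delivers the stated bound in one step, with no coordinatewise detour and no residual $\sqrt{\lqrdims}$ to remove.

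Your coordinatewise Hoeffding plus Bernstein repair does work, and the key observation you make---that $\EE[(u_i)_j^2]=1/\lqrdims$ for uniform $u_i$ on the sphere, so the per-coordinate variance is $B^2/\lqrdims$ rather than $B^2$---is exactly the right one to kill the spurious $\sqrt{\lqrdims}$. But this buys nothing over the vector inequality, and it forces you to argue separately that the linear Bernstein term is dominated for the particular $k$ in Theorem~\ref{thm:minibatch_gd}, which couples the lemma to its downstream use. The paper's route keeps the lemma self-contained.
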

\begin{proof}
  This lemma is an immediate application of Corollary 7 in Jin et al.~\cite{jin19} on
  concentration for i.i.d. bounded random vectors. To verify the
  required assumptions, note that for a value of smoothing radius
  $\smoothingradius \leq \rho_0$, we have $\f(\x + \smoothingradius
  \unifdir) \leq \f(\x) + \frac{\lambda_0}{\rho_0}$ by the
  locally-Lipschitz property of the function. So each gradient
  estimate satisfies the bound
\begin{align*}
\enorm{\gradest_i(\x)} = \enorm{\frac{\lqrdims}{\smoothingradius}
  \f(\x + \smoothingradius \unifdir_i) \unifdir} \leq
\frac{\lqrdims}{\smoothingradius} (\f(\x) + \frac{\lambda_0}{\rho_0})
\end{align*}
almost surely, thus satisfying the norm sub-Gaussian condition
discussed in Jin et al.~\cite{jin19}. In addition, applying part (a) of
Lemma~\ref{lem:Thm1_lemma} yields $\mathbb{E}[\gradest_i(\x)] = \grad
\fr(\x)$.  Applying Corollary 7 of Jin et al.~\cite{jin19} then yields the claim.
\end{proof}

We are now ready to prove Theorem~\ref{thm:minibatch_gd}. First,
recall the notation $\diff{t} = \f(\x[t]) - \f^*$ and assume that the
point $\x$ satisfies $\f(\x) - \f^* \leq 10 \diff{0}$.  Suppose that
we use a minibatch of size $k = \bigg(
\frac{D}{\smoothingradius}(\f(\x[t])) + \frac{\lambda_0}{\rho_0})
\sqrt{\log(\frac{2\lqrdims}{\delta})} \bigg)^2 \frac{1024}{\mu
  \epsilon}$ to estimate the gradient.
Lemma~\ref{lem:minibatch_grad_est} then ensures that
\begin{equation}
\label{eqn:minibatching_bound}
\enorm{\gradest(\x) - \grad \fr(\x)} \leq \frac{\sqrt{\mu \epsilon}}{32}
\end{equation}
 with probability $1 - \delta$.  Conditioned on this event, we have
 the following sequence of bounds
\begin{align*}
\enorm{\eta \gradest(\x)} & = \eta \enorm{\gradest(\x) - \grad \fr(\x)
  + \grad \fr(\x) - \grad \f(\x) + \grad \f(\x)} \\ &\leq \eta
\enorm{\gradest(\x) - \grad \fr(\x)} + \eta \enorm{\grad \fr(\x) -
  \grad \f(\x)} + \eta \enorm{\grad \f(\x)} \\ &\stackrel{(i)}{\leq}
\frac{\sqrt{\mu \epsilon}}{32} + \eta \enorm{\grad \fr(\x) - \grad
  \f(\x)} + \eta \enorm{\grad \f(\x)} \\
& \stackrel{(ii)}{\leq} \eta \left(\frac{\sqrt{\mu \epsilon}}{32} +
\phi_0 \sqrt{\epsilon} + \lambda_0 \right) \\ &\stackrel{(iii)}{\leq}
\eta \left(\frac{\sqrt{\mu}}{32} + \phi_0 + \lambda_0 \right),
\end{align*}
where step (i) follows from equation~\eqref{eqn:minibatching_bound},
step (ii) follows from part (b) of Lemma~\ref{lem:Thm1_lemma} and step
(iii) follows from our assumption on the error tolerance $\epsilon
\leq 1$.  Now recall our assumption $\step \leq \rho_0
\left(\frac{\sqrt{\mu}}{32} + \phi_0 + \lambda_0 \right)^{-1}$, which
ensures that the RHS is further bounded by $\rho_0$. In effect, this
ensures that the ``size'' of the step $\eta \gradest(\x)$ is always
smaller that the radius $\rho_0$ within which our Lipschitz and
smoothness properties hold.

Since the function $\f$ is smooth with smoothness parameter
$\globalSmooth{0}$, we have
\begin{align*}\
\f(\x[t + 1]) - \f(\x[t]) & \leq \inprod{\grad \f(\x[t])}{\x[t + 1] -
  \x[t]} + \frac{\globalSmooth{0}}{2} \enorm{\x[t + 1] - \x[t]}^2 \\
& = - \inprod{ \step \grad \f(\x[t])}{ \gradest(\x[t])} +
\frac{\globalSmooth{0} \step^2}{2} \enorm{\gradest(\x[t])}^2 \\
& = - \inprod{ \step \grad \f(\x[t])}{ \gradest(\x[t]) - \grad
  \fr(\x[t])} - \inprod{\step \grad \f(\x[t])}{\grad \fr(\x[t])} +
\frac{\globalSmooth{0} \step^2}{2} \enorm{\gradest(\x[t])}^2 \\
& \leq \step \enorm{\grad \f(\x[t])} \enorm{ \gradest(\x[t]) - \grad
  \fr(\x[t])} - \inprod{\step \grad \f(\x[t])}{\grad \fr(\x[t])} +
\frac{\globalSmooth{0} \step^2}{2} \enorm{\gradest(\x[t])}^2 \\
& \stackrel{(i)}{\leq} \step \enorm{\grad \f(\x[t])} \enorm{
  \gradest(\x[t]) - \grad \fr(\x[t])} - \step \enorm{\grad
  \f(\x[t])}^2 + \step \globalSmooth{0} \smoothingradius \enorm{\grad
  \f(\x[t])} + \frac{\globalSmooth{0} \step^2}{2}
\enorm{\gradest(\x[t])}^2.
\end{align*}
Here step (i) follows from part (b) of Lemma~\ref{lem:Thm1_lemma}. Now
applying the AM-GM Inequality to the first term of the RHS, we find
that
\begin{align*}
\f(\x[t + 1]) - \f(\x[t]) & \leq \frac{\step}{2} \enorm{\grad
  \f(\x[t])}^2 + \frac{\step}{2} \enorm{ \gradest(\x[t]) - \grad
  \fr(\x[t])}^2 - \step \enorm{\grad \f(\x[t])}^2 + \step
\globalSmooth{0} \smoothingradius \enorm{\grad \f(\x[t])} +
\frac{\globalSmooth{0} \step^2}{2} \enorm{\gradest(\x[t])}^2 \\
& = - \frac{\step}{2} \enorm{\grad \f(\x[t])}^2 + \frac{\step}{2}
\enorm{ \gradest(\x[t]) - \grad \fr(\x[t])}^2 + \step \globalSmooth{0}
\smoothingradius \enorm{\grad \f(\x[t])} + \frac{\globalSmooth{0}
  \step^2}{2} \enorm{\gradest(\x[t])}^2
\end{align*}
We now turn our attention to bounding the last term on the RHS:
\begin{align*}
\frac{\globalSmooth{0} \step^2}{2} \enorm{\gradest(\x[t])}^2 &=
\frac{\phi_0 \step^2}{2} (\enorm{\gradest(\x[t]) - \grad \fr(\x[t]) +
  \grad \fr(\x[t])}^2) \\ & \leq \frac{\phi_0 \step^2}{2} (
2\enorm{\gradest(\x[t]) - \grad \fr(\x[t])}^2 + 2\enorm{\grad
  \fr(\x[t])}^2 ) \\ & = \phi_0 \step^2 \enorm{\gradest(\x[t]) - \grad
  \fr(\x[t])}^2 + \phi_0 \step^2 \enorm{\grad \fr(\x[t]) - \grad
  \f(\x[t]) + \grad \f(\x[t])}^2 \\ & \leq \phi_0 \step^2
\enorm{\gradest(\x[t]) - \grad \fr(\x[t])}^2 + 2 \phi_0 \step^2
(\enorm{\grad \fr(\x[t]) - \grad \f(\x[t])}^2 + \enorm{\grad
  \f(\x[t])}^2) \\ & \stackrel{(i)}{\leq} \phi_0 \step^2
\enorm{\gradest(\x[t]) - \grad \fr(\x[t])}^2 + 2 \phi_0 \step^2
(\phi_0^2 \smoothingradius^2 + \enorm{\grad \f(\x[t])}^2)
\end{align*}
where step (i) follows from part (b) of
Lemma~\ref{lem:Thm1_lemma}. Putting together the pieces, we have
\begin{align*}
\f(\x[t + 1]) - \f(\x[t]) & \leq \left(- \frac{\step}{2} + 2 \phi_0
\step^2 \right) \enorm{\grad \f(\x[t])}^2 + \left(\frac{\step}{2} +
\phi_0 \step^2 \right) \enorm{ \gradest(\x[t]) - \grad \fr(\x[t])}^2 +
\step \globalSmooth{0} \smoothingradius \enorm{\grad \f(\x[t])} + 2
\phi_0^3 \step^2 \smoothingradius^2.
\end{align*}

In addition, since the function is locally smooth at the point
$\x[t]$, we have
\begin{align*}
(\curvature - \curvature^2 \globalSmooth{0}/2) \enorm{ \grad \f(\x[t])
  }^2 & \leq \f(\x[t]) - \f(\x[t] - \curvature \grad \f(\x[t]) )
  \\ &\leq \f(\x[t]) - \f(\xstar),
\end{align*}
for some parameter $\curvature$ chosen small enough such that the
relation $\curvature \enorm{ \grad \f( \x[t]) } \leq \rho_{0}$ holds.
We may thus set $\curvature = \curvature_0 = \min \left\{ \frac{1}{2
  \globalSmooth{0} }, \frac{\rho_{0}}{\Lipcon_{0}} \right\}$ and
recall the notation $\diff{t} = \f(\x[t]) - \f(\xstar)$ to obtain
\begin{align*}
\CondExs{t} \bracket{ \diff{t+1} - \diff{t}} & \leq (- \frac{\step}{2}
+ 2 \phi_0 \step^2) \enorm{\grad \f(\x[t])}^2 + \step \globalSmooth{0}
\smoothingradius \frac{2}{\curvature_{0}} \diff{t}^{1/2} +
(\frac{\step}{2} + \phi_0 \step^2) \enorm{ \gradest(\x[t]) - \grad
  \fr(\x[t])}^2 + 2 \phi_0^3 \step^2 \smoothingradius^2 \\
& \stackrel{(iii)}{\leq} - \frac{\step \Plconst}{4} \diff{t} + 2
\frac{\step \globalSmooth{0} \smoothingradius }{\curvature_{0}}
\diff{t}^{1/2} + \step \enorm{ \gradest(\x[t]) - \grad \fr(\x[t])}^2 +
2 \phi_0^3 \step^2 \smoothingradius^2, \\
& \stackrel{(iv)}{\leq} - \frac{\step \Plconst}{4} \diff{t} +
\frac{\step \Plconst}{8} \diff{t} + 8 \frac{\step (\globalSmooth{0}
  \smoothingradius)^2 }{\Plconst \curvature_{0}^2} + \step \enorm{
  \gradest(\x[t]) - \grad \fr(\x[t])}^2 + 2 \phi_0^3 \step^2
\smoothingradius^2,
\end{align*}
where step (iii) follows from applying the PL inequality and using the
fact that $\step \leq \frac{1}{8 \globalSmooth{0}}$, and step (iv)
from the inequality $2ab \leq a^2 + b^2$ which holds for any pair of
scalars $(a, b)$.

Recall the assumed bounds on our parameters, namely
\begin{align*}
\step \leq \min \left\{ 1, \frac{1}{8 \globalSmooth{0}} \right\}, \quad
\text{ and} \quad \smoothingradius \leq \frac{1}{8 \globalSmooth{0}}
\min\left \{ \curvature_{0} \Plconst \sqrt{\frac{\epsilon}{240}},
\frac{1}{\globalSmooth{0}}\sqrt{\frac{\epsilon \Plconst}{30} }
\right\}.
\end{align*}
Using these bounds, we have
\begin{align*}
\diff{t+1} - \diff{t} & \leq - \frac{\step \Plconst}{8} \diff{t} +
\step \enorm{ \gradest(\x[t]) - \grad \fr(\x[t])}^2 + \step \Plconst
\frac{\epsilon}{120} + \step^2 \frac{\epsilon \Plconst}{30 \phi_0}
\\ & \leq - \frac{\step \Plconst}{8} \diff{t} + \step \enorm{
  \gradest(\x[t]) - \grad \fr(\x[t])}^2 + \step \Plconst
\frac{\epsilon}{60}.
\end{align*}
In conjunction with equation~\eqref{eqn:minibatching_bound}, we now
have the key inequality
\begin{equation}
\label{eqn:gd_recursion}
\diff{t+1} \leq \left(1 - \frac{\step \Plconst}{8}\right)
\diff{t} + \step \frac{\Plconst \epsilon}{16}.
\end{equation}

In order to complete the proof, we now demonstrate how to unroll this
recursion using strong induction. For each time step $i = 1, 2,
\ldots, T$, denote by $\mathcal{E}_i$ the event that $\diff{i} \leq 10
\diff{0}$ and $\diff{i} \leq \left(1 - \frac{\step \Plconst}{8}
\right) \diff{i-1} + \step \frac{\Plconst \epsilon}{16}$. We claim
that for each $t \in \mathbb{N}$, we have
\begin{align*}
\Pr \left\{ \cap_{ i =1}^t \Espace_i \right\} \geq 1 - \delta t.
\end{align*}
Let us establish this claim via induction.

\paragraph{Base case:} Applying Lemma~\ref{lem:minibatch_grad_est} and
equation~\eqref{eqn:gd_recursion}, we obtain with probability $1 -
\delta$ the inequality $\diff{1} \leq (1 - \frac{\step \Plconst}{8})
\diff{0} + \step \frac{\Plconst \epsilon}{16}$. Further, by our
assumption $\epsilon \leq \min \{1, \frac{1}{\mu} \}
\frac{\diff{0}}{10}$, we have $\diff{1} \leq 10 \diff{0}$, so we have
shown the base case that event $\mathcal{E}_1$ holds with probability
exceeding $1 - \delta$.

\paragraph{Induction step:} Fix an integer $t$, and assume, by the
induction hypothesis, that the event $\cap_{ i =1}^t \Espace_i$ holds
with probability exceeding $1 - \delta t$. Let us condition on this
event. In addition, applying Lemma~\ref{lem:minibatch_grad_est} and
equation~\eqref{eqn:gd_recursion} yields, with probability $1 -
\delta$, the inequality
\begin{align*}
\diff{t+1} & \leq \left(1 - \frac{\step \Plconst}{8} \right) \diff{t}
+ \step \frac{\Plconst \epsilon}{16} \\ &\leq \left(1 - \frac{\step
  \Plconst}{8} \right)^{t+1} \diff{0} + \sum_{i=1}^t (1 - \frac{\step
  \Plconst}{8})^i \step \frac{\Plconst \epsilon}{16} \\ &\leq \left(1
- \frac{\step \Plconst}{8} \right)^{t+1} \diff{0} +
\sum_{i=1}^{\infty} (1 - \frac{\step \Plconst}{8})^i \step
\frac{\Plconst \epsilon}{16} \\ &= \left(1 - \frac{\step \Plconst}{8}
\right)^{t+1} \diff{0} + \frac{\epsilon}{2}.
\end{align*}
Once again, by our assumption $\epsilon \leq \min \{1, \frac{1}{\mu}
\} \frac{\diff{0}}{10}$, we have $\diff{t + 1} \leq 10
\Delta_0$. Putting together the pieces with a union bound then implies
that the event $\cap_{ i =1}^{t + 1} \Espace_i$ holds with probability
exceeding $1 - \delta (t + 1)$, thereby establishing the induction
hypothesis.

\vspace{2mm}

Finally, at time $T$, we condition on the event $\cap_{ i
  =1}^{T} \Espace_i$, thereby obtaining the bound
\begin{align*}
\diff{T} \leq \left(1 - \frac{\step \Plconst}{8} \right)^{T} \diff{0}
+ \frac{\epsilon}{2}.
\end{align*}
We complete the proof by by substituting our choice of the tuple
$(\eta, T)$.

\bibliographystyle{alpha}
\bibliography{Bibliography_RL}

\newcommand{\etalchar}[1]{$^{#1}$}
\begin{thebibliography}{DMM{\etalchar{+}}18}

\bibitem[ADX10]{agarwal10}
Alekh Agarwal, Ofer Dekel, and Lin Xiao.
\newblock Optimal algorithms for online convex optimization with multi-point
  bandit feedback.
\newblock In {\em Proceedings of the Conference on Learning Theory}, pages
  28--40, 2010.

\bibitem[AJ17]{agrawal2017}
Shipra Agrawal and Randy Jia.
\newblock Optimistic posterior sampling for reinforcement learning: worst-case
  regret bounds.
\newblock In {\em Advances in Neural Information Processing Systems}, pages
  1184--1194, 2017.

\bibitem[AL18]{abeille2017}
Marc Abeille and Alessandro Lazaric.
\newblock Improved regret bounds for {T}hompson sampling in linear quadratic
  control problems.
\newblock In {\em Proceedings of the International Conference on Machine
  Learning}, pages 1--9, 2018.

\bibitem[AOM17]{azar2017}
Mohammad~Gheshlaghi Azar, Ian Osband, and R{\'e}mi Munos.
\newblock Minimax regret bounds for reinforcement learning.
\newblock In {\em Proceedings of the International Conference on Machine
  Learning}, pages 263--272, 2017.

\bibitem[AYLS18]{abbasi2018}
Yasin Abbasi-Yadkori, Nevena Lazic, and Csaba Szepesv{\'a}ri.
\newblock Regret bounds for model-free linear quadratic control.
\newblock {\em arXiv preprint arXiv:1804.06021}, 2018.

\bibitem[AYS11]{abbasi2011}
Yasin Abbasi-Yadkori and Csaba Szepesv{\'a}ri.
\newblock Regret bounds for the adaptive control of linear quadratic systems.
\newblock In {\em Proceedings of the Conference on Learning Theory}, pages
  1--26, 2011.

\bibitem[Ber05]{bertsekas2005}
Dimitri~P Bertsekas.
\newblock {\em Dynamic programming and optimal control. {V}ol. {I}}.
\newblock Athena Scientific, third edition, 2005.

\bibitem[CHK{\etalchar{+}}18]{cohen18}
Alon Cohen, Avinatan Hasidim, Tomer Koren, Nevena Lazic, Yishay Mansour, and
  Kunal Talwar.
\newblock Online linear quadratic control.
\newblock In {\em Proceedings of the International Conference on Machine
  Learning}, pages 1028--1037, 2018.

\bibitem[CKM19]{cohen19}
Alon Cohen, Tomer Koren, and Yishay Mansour.
\newblock Learning linear-quadratic regulators efficiently with only $\sqrt{T}$
  regret.
\newblock In {\em Proceedings of the International Conference on Machine
  Learning}, pages 1300--1309, 2019.

\bibitem[DJWW15]{duchi15}
John~C Duchi, Michael~I Jordan, Martin~J Wainwright, and Andre Wibisono.
\newblock Optimal rates for zero-order convex optimization: The power of two
  function evaluations.
\newblock {\em {IEEE} {T}ransactions on {I}nformation {T}heory}, 61:2788--2806,
  2015.

\bibitem[DLB17]{dann2017}
Christoph Dann, Tor Lattimore, and Emma Brunskill.
\newblock Unifying {PAC} and regret: Uniform {PAC} bounds for episodic
  reinforcement learning.
\newblock In {\em Advances in Neural Information Processing Systems}, pages
  5713--5723, 2017.

\bibitem[DMM{\etalchar{+}}17]{dean17}
Sarah Dean, Horia Mania, Nikolai Matni, Benjamin Recht, and Stephen Tu.
\newblock On the sample complexity of the linear quadratic regulator.
\newblock {\em arXiv preprint arXiv:1710.01688}, 2017.

\bibitem[DMM{\etalchar{+}}18]{dean18}
Sarah Dean, Horia Mania, Nikolai Matni, Benjamin Recht, and Stephen Tu.
\newblock Regret bounds for robust adaptive control of the linear quadratic
  regulator.
\newblock {\em arXiv preprint arXiv:1805.09388}, 2018.

\bibitem[DRF12]{deisenroth2012}
Mark~P Deisenroth, Carl~E Rasmussen, and Dieter Fox.
\newblock Learning to control a low-cost manipulator using data-efficient
  reinforcement learning.
\newblock In {\em Robotics: Science and Systems}, 2012.

\bibitem[Dur10]{durrett10}
Rick Durrett.
\newblock {\em Probability: theory and examples}.
\newblock Cambridge university press, 2010.

\bibitem[FGKM18]{kakade18}
Maryam Fazel, Rong Ge, Sham Kakade, and Mehran Mesbahi.
\newblock Global convergence of policy gradient methods for the linear
  quadratic regulator.
\newblock In {\em Proceedings of the International Conference on Machine
  Learning}, pages 1466--1475, 2018.

\bibitem[Fie97]{fietcher97}
Claude-Nicolas Fiechter.
\newblock {PAC} adaptive control of linear systems.
\newblock In {\em Proceedings of the Conference on Computational Learning
  Theory}, pages 72--80, 1997.

\bibitem[FKM05]{flax04}
Abraham Flaxman, Adam Kalai, and Brendan McMahan.
\newblock Online convex optimization in the bandit setting: Gradient descent
  without a gradient.
\newblock In {\em Proceedings of the Symposium on Discrete Algorithms}, pages
  385--394, 2005.

\bibitem[FTM17]{faradonbeh17}
Mohamad Kazem~Shirani Faradonbeh, Ambuj Tewari, and George Michailidis.
\newblock Finite time analysis of optimal adaptive policies for
  linear-quadratic systems.
\newblock {\em arXiv preprint arXiv:1711.07230}, 2017.

\bibitem[GL13]{ghalan13}
Saeed Ghadimi and Guanghui Lan.
\newblock Stochastic first- and zeroth-order methods for nonconvex stochastic
  programming.
\newblock {\em {SIAM} Journal on Optimization}, 23:2341--2368, 2013.

\bibitem[GLSL16]{gu2016}
Shixiang Gu, Timothy Lillicrap, Ilya Sutskever, and Sergey Levine.
\newblock Continuous deep {Q}-learning with model-based acceleration.
\newblock In {\em Proceedings of the International Conference on Machine
  Learning}, pages 2829--2838, 2016.

\bibitem[HKZ12]{HsuKakZha12}
Daniel Hsu, Sham Kakade, and Tong Zhang.
\newblock A tail inequality for quadratic forms of subgaussian random vectors.
\newblock {\em Electronic Communications in Probability}, 17, 2012.

\bibitem[HW71]{HanWri71}
David~Lee Hanson and Farroll~Tim Wright.
\newblock A bound on tail probabilities for quadratic forms in independent
  random variables.
\newblock {\em The {A}nnals of {M}athematical {S}tatistics}, 42(3):1079--1083,
  1971.

\bibitem[IJR12]{ibrahimi12}
Morteza Ibrahimi, Adel Javanmard, and Benjamin~V. Roy.
\newblock Efficient reinforcement learning for high dimensional linear
  quadratic systems.
\newblock In {\em Advances in Neural Information Processing Systems}, pages
  2636--2644. 2012.

\bibitem[JNG{\etalchar{+}}19]{jin19}
Chi Jin, Praneeth Netrapalli, Rong Ge, Sham~M. Kakade, and Michael~I. Jordan.
\newblock A short note on concentration inequalities for random vectors with
  subgaussian norm.
\newblock {\em arXiv preprint arXiv:1902.03736}, 2019.

\bibitem[JNR12]{jamrec12}
Kevin~G Jamieson, Robert Nowak, and Ben Recht.
\newblock Query complexity of derivative-free optimization.
\newblock In {\em Advances in Neural Information Processing Systems}, pages
  2672--2680. 2012.

\bibitem[Kal60]{kalman60}
Rudolf~E Kalman.
\newblock Contributions to the theory of optimal control.
\newblock {\em Boletin de la Sociedad Matematica Mexicana}, 5:102--119, 1960.

\bibitem[KNS16]{schmidt16}
Hamed Karimi, Julie Nutini, and Mark Schmidt.
\newblock Linear convergence of gradient and proximal-gradient methods under
  the polyak-lojasiewicz condition.
\newblock In {\em Proceedings of the European Conference on Machine Learning
  and Knowledge Discovery in Databases}, pages 795--811, 2016.

\bibitem[LFDA16]{levine15}
Sergey Levine, Chelsea Finn, Trevor Darrell, and Pieter Abbeel.
\newblock End-to-end training of deep visuomotor policies.
\newblock {\em The Journal of Machine Learning Research}, 17:1334--1373, 2016.

\bibitem[LHP{\etalchar{+}}15]{lillicrap2015}
Timothy~P Lillicrap, Jonathan~J Hunt, Alexander Pritzel, Nicolas Heess, Tom
  Erez, Yuval Tassa, David Silver, and Daan Wierstra.
\newblock Continuous control with deep reinforcement learning.
\newblock {\em arXiv preprint arXiv:1509.02971}, 2015.

\bibitem[Lju98]{ljung1987}
Lennart Ljung.
\newblock System identification.
\newblock In {\em Signal analysis and prediction}, pages 163--173. Springer,
  1998.

\bibitem[Loj63]{loj63}
Stanislaw Lojasiewicz.
\newblock A topological property of real analytic subsets.
\newblock {\em Coll. du CNRS, Les {\'e}quations aux d{\'e}riv{\'e}es
  partielles}, pages 87--89, 1963.

\bibitem[M{\etalchar{+}}15]{mnih15}
Volodymyr Mnih et~al.
\newblock Human-level control through deep reinforcement learning.
\newblock {\em Nature}, 518:529--533, 2015.

\bibitem[Nes11]{nesterov11}
Yurii Nesterov.
\newblock Random gradient-free minimization of convex functions.
\newblock Core discussion papers, Universit\'{e} catholique de Louvain, Center
  for Operations Research and Econometrics (CORE), 2011.

\bibitem[Pol64]{polyak63}
Boris~T Polyak.
\newblock Gradient methods for solving equations and inequalities.
\newblock {\em USSR Computational Mathematics and Mathematical Physics},
  4(6):17--32, 1964.

\bibitem[Ric24]{riccati1700}
Jacopo Riccati.
\newblock Animadversiones in aequationes differentiales secundi gradus.
\newblock {\em Acta Eruditorum Lipsiae}, 1724.

\bibitem[RLTK17]{rajeswaran17}
Aravind Rajeswaran, Kendall Lowrey, Emanuel~V Todorov, and Sham~M Kakade.
\newblock Towards generalization and simplicity in continuous control.
\newblock In {\em Advances in Neural Information Processing Systems}, pages
  6550--6561, 2017.

\bibitem[S{\etalchar{+}}16]{silver16}
David Silver et~al.
\newblock Mastering the game of {Go} with deep neural networks and tree search.
\newblock {\em Nature}, 529:484--489, 2016.

\bibitem[Sha13]{shamir12}
Ohad Shamir.
\newblock On the complexity of bandit and derivative-free stochastic convex
  optimization.
\newblock In {\em Proceedings of the Conference on Learning Theory}, pages
  3--24, 2013.

\bibitem[Sha17]{shamir17}
Ohad Shamir.
\newblock An optimal algorithm for bandit and zero-order convex optimization
  with two-point feedback.
\newblock {\em Journal of Machine Learning Research}, 18:1703--1713, 2017.

\bibitem[SHC{\etalchar{+}}17]{salimans2017}
Tim Salimans, Jonathan Ho, Xi~Chen, Szymon Sidor, and Ilya Sutskever.
\newblock Evolution strategies as a scalable alternative to reinforcement
  learning.
\newblock {\em arXiv preprint arXiv:1703.03864}, 2017.

\bibitem[SLA{\etalchar{+}}15]{schulman15}
John Schulman, Sergey Levine, Pieter Abbeel, Michael Jordan, and Philipp
  Moritz.
\newblock Trust region policy optimization.
\newblock In {\em Proceedings of the International Conference on Machine
  Learning}, pages 1889--1897, 2015.

\bibitem[Spa05]{spall03}
James~C Spall.
\newblock {\em Introduction to stochastic search and optimization: estimation,
  simulation, and control}, volume~65.
\newblock John Wiley \& Sons, 2005.

\bibitem[TFR{\etalchar{+}}17]{tobin17}
Josh Tobin, Rachel Fong, Alex Ray, Jonas Schneider, Wojciech Zaremba, and
  Pieter Abbeel.
\newblock Domain randomization for transferring deep neural networks from
  simulation to the real world.
\newblock In {\em Proceedings of the International Conference on Intelligent
  Robots and Systems}, pages 23--30, 2017.

\bibitem[TR18a]{tu182}
Stephen Tu and Benjamin Recht.
\newblock The gap between model-based and model-free methods on the linear
  quadratic regulator: An asymptotic viewpoint.
\newblock {\em CoRR}, 2018.

\bibitem[TR18b]{tu18}
Stephen Tu and Benjamin Recht.
\newblock Least-squares temporal difference learning for the linear quadratic
  regulator.
\newblock In {\em Proceedings of the International Conference on Machine
  Learning}, pages 5012--5021, 2018.

\bibitem[WBS18]{wang18}
Yining Wang, Sivaraman Balakrishnan, and Aarti Singh.
\newblock Optimization of smooth functions with noisy observations: Local
  minimax rates.
\newblock {\em arXiv preprint arXiv:1803.08586}, 2018.

\bibitem[WDBS18]{wang17}
Yining Wang, Simon~S Du, Sivaraman Balakrishnan, and Aarti Singh.
\newblock Stochastic zeroth-order optimization in high dimensions.
\newblock In {\em Proceedings of the International Conference on Artificial
  Intelligence and Statistics}, pages 1356--1365, 2018.

\bibitem[Whi96]{Whittle96}
Peter Whittle.
\newblock {\em Optimal control: {B}asics and {B}eyond}.
\newblock Wiley and {S}ons, 1996.

\bibitem[Wri73]{Wri73}
Farrol~Tim Wright.
\newblock A bound on tail probabilities for quadratic forms in independent
  random variables whose distributions are not necessarily symmetric.
\newblock {\em The Annals of Probability}, 1:1068--1070, 1973.

\end{thebibliography}

\end{document}